\date{}
\newtheorem{theorem}{Theorem}
\newtheorem{definition}{Definition}
\newtheorem{proposition}{Proposition}
\newtheorem{lemma}{Lemma}
\newtheorem{corollary}{Corollary}
\newtheorem{remark}{Remark}
\title{Rank Determination for Low-Rank Data Completion}
\author{Morteza Ashraphijuo, Xiaodong Wang and Vaneet Aggarwal\thanks{Morteza Ashraphijuo and Xiaodong Wang are with the Department of Electrical Engineering, Columbia University, NY, email: \{ashraphijuo,wangx\}@ee.columbia.edu. Vaneet Aggarwal is with the School of Industrial Engineering, Purdue University, West Lafayette, IN, email: vaneet@purdue.edu. }}
\begin{document}
\maketitle


\begin{abstract}
Recently, fundamental conditions on the sampling patterns have been obtained for finite completability of low-rank matrices or tensors given the corresponding ranks. In this paper, we consider the scenario where the rank is not given and we aim to approximate the unknown rank based on the location of sampled entries and some given completion. We consider a number of data models, including single-view matrix, multi-view matrix, CP tensor, tensor-train tensor and Tucker tensor. For each of these data models, we provide an upper bound on the rank when an arbitrary low-rank completion is given. We characterize these bounds both deterministically, i.e., with probability one given that the sampling pattern satisfies certain combinatorial properties, and probabilistically, i.e., with high probability given that the sampling probability is above some threshold. Moreover, for both single-view matrix and CP tensor, we are able to show that the obtained upper bound is exactly equal to the unknown rank if the lowest-rank completion is given. Furthermore, we provide numerical experiments for the case of single-view matrix, where we use nuclear norm minimization to find a low-rank completion of the sampled data and we observe that in most of the cases the proposed upper bound on the rank is equal to the true rank.

\

\begin{IEEEkeywords}
Low-rank data completion, rank estimation, tensor, matrix, manifold, Tucker rank, tensor-train rank, CP rank, multi-view matrix.
\end{IEEEkeywords}

\end{abstract}

\newpage

\section{Introduction}

The low-rank data completion problem is concerned with completing a matrix or tensor given a subset of its entries and some rank constraints. Various applications can be found in many fields including image and signal processing \cite{phase,Image}, data mining \cite{data}, network coding \cite{network}, compressed sensing \cite{lim,sid,gandy}, reconstructing the visual data \cite{visual}, etc. There is an extensive literature on developing various optimization methods to treat this problem including minimizing a convex relaxation of rank \cite{candes,candes2,cai,gandy,ashraphijuo2016c}, non-convex approaches \cite{recllel}, and alternating minimization  \cite{jain2013low,ge2016matrix}, etc. More recently, fundamental conditions on the sampling pattern that lead to different numbers of completion (unique, finite, or infinite) given the rank constraints have been investigated in \cite{charact,ashcon2,ashraphijuo4,ashcon1,ashraphijuo3}.

However, in many practical low-rank data completion problems, the rank may not be known {\it a priori}. In this paper, we investigate this problem and we aim to approximate the rank based on the given entries, where it is assumed that the original data is generically chosen from the manifold corresponding to the unknown rank. The only existing work that treats this problem for a single-view matrix data based on the sampling pattern is \cite{converse}, which requires some strong assumptions including the existence of a completion whose rank $r$ is a lower bound on the unknown true rank $r^*$, i.e., $r^* \geq r$. We start by investigating the single-view matrix  to provide a new analysis that does not require such assumption and also we can  extend our approach to treat the CP rank tensor model. Moreover, we further generalize our approach to treat vector rank data models including the multi-view matrix, the Tucker rank tensor and the tensor-train (TT) rank tensor. For each of these data models, we obtain the upper bound on the scalar rank or component-wise upper bound on the unknown vector rank, deterministically based on the sampling pattern and the rank of a given completion. We also obtain such bound that holds with high probability based on the sampling probability. Moreover, for the single-view matrix, we provide some numerical results to show how tight our probabilistic bounds on the rank are (in terms of the sampling probability). In particular, we used nuclear norm minimization to find a completion and demonstrate our proposed method in obtaining a tight bound on the unknown rank.

We take advantage of the geometric analysis on the manifold of the corresponding data which leads to the fundamental conditions on the sampling pattern (independent of the value of entries) \cite{charact,ashraphijuo2,ashraphijuo4,ashraphijuo,ashraphijuo3} such that given an arbitrary low-rank completion we can provide a tight upper bound on the rank. To illustrate how such approximation is even possible consider the following example. Assume that an $n_1 \times n_2$ rank-$2$ matrix is chosen generically from the corresponding manifold. Hence, any $2 \times 2$ submatrix of this matrix is full-rank with probability one (due to the genericity assumption). Moreover, note that any $3 \times 3$ submatrix of this matrix is not full-rank. As a result, by observing the sampled entries we can find some bounds on the rank. Using the analysis in \cite{charact,ashraphijuo2,ashraphijuo4,ashraphijuo,ashraphijuo3} on finite completablity of the sampled data (finite number of completions) for different data models, we characterize both deterministic and probablistic bounds on the unknown rank.

The remained of the paper is organized as follows. In Section \ref{modelno}, we introduce the data models and problem statement. In Sections \ref{semnw} and \ref{rankvecmo} we characterize our determintic and probablistic bounds for scalar-rank cases (single-view matrix and CP tensor)  and vector-rank cases (multi-view matrix, Tucker tensor and TT tensor), respectively. Finally, Section \ref{conclusection} concludes the paper.



\section{Data Models and Problem Statement}\label{modelno}

\subsection{Matrix Models}\label{modelmat}

\subsubsection{Single-View Matrix}

Assume that the sampled matrix $\mathbf{U}$ is chosen generically from the manifold of the $n_1 \times n_2$ matrices of rank $r^*$, where $r^*$ is unknown. The matrix $\mathbf{V} \in \mathbb{R}^{n_1 \times r^*}$ is called a basis for $\mathbf{U}$ if each column of $\mathbf{U}$ can be written as a linear combination of the columns of $\mathbf{V}$. Denote $\mathbf{\Omega}$ as the binary sampling pattern matrix that is of the same size as $\mathbf{U}$ and $\mathbf{\Omega}(\vec{x})=1$ if $\mathbf{U}(\vec{x})$ is observed and  $\mathbf{\Omega}(\vec{x})=0$ otherwise, where $\vec{x}=(x_1,x_2)$ represents the entry corresponding to row number $x_1$ and column number $x_2$. Moreover, define $\mathbf{U}_{\mathbf{\Omega}}$ as the matrix obtained from sampling $\mathbf{U}$ according to $\mathbf{\Omega}$, i.e., 
\begin{eqnarray}
\mathbf{U}_{\mathbf{\Omega}} (\vec{x}) = \left\{
	\begin{array}{ll}
		\mathbf{U} (\vec{x})  & \mbox{if } \ \mathbf{\Omega}(\vec{x}) \ = 1, \\
		0  & \mbox{if } \ \mathbf{\Omega}(\vec{x}) \ = 0.
	\end{array}
\right.
\end{eqnarray}

\subsubsection{Multi-View Matrix}

The matrix $\mathbf{U} \in \mathbb{R}^{n \times (n_1+n_2)}$ is sampled. Denote a partition of $\mathbf{U}$ as $\mathbf{U}= [\mathbf{U}_1|\mathbf{U}_2]$ where $\mathbf{U}_1 \in \mathbb{R}^{n \times n_1}$ and $\mathbf{U}_2 \in \mathbb{R}^{n \times n_2}$ represent the first and second views of data, respectively. The sampling pattern is defined as $\mathbf{\Omega} = [\mathbf{\Omega}_1|\mathbf{\Omega}_2]$, where $\mathbf{\Omega}_1$ and $\mathbf{\Omega}_2$ represent the sampling patterns corresponding to the first and second views of data, respectively. Assume that $\text{rank}(\mathbf{U}_1)=r_1^*$, $\text{rank}(\mathbf{U}_2)=r_2^*$ and $\text{rank}(\mathbf{U})=r^*$, and also $\mathbf{U}$ is chosen generically from the manifold structure with above parameters. Denote $\underline{r}^*=(r_1^*,r_2^*,r^*)$ which is assumed unknown. 

\subsection{Tensor Models}\label{modelten}

Assume that a $d$-way tensor $\mathcal{U} \in \mathbb{R}^{n_1  \times \cdots \times n_d}$ is sampled. For the sake of simplicity in notation, define  $N_{i} \triangleq \left( \Pi_{j=1}^{i} \  n_j \right)$, $\bar N_{i} \triangleq \left( \Pi_{j=i+1}^{d} \  n_j \right)$ and $N_{-i} \triangleq \frac{N_d}{n_i}$. Denote $\Omega$ as the binary sampling pattern tensor that is of the same size as $\mathcal{U}$ and $\Omega(\vec{x})=1$ if $\mathcal{U}(\vec{x})$ is observed and  $\Omega(\vec{x})=0$ otherwise, where $\mathcal{U}(\vec{x})$ represents an entry of tensor $\mathcal{U}$  with coordinate $\vec{x}=(x_1,\dots,x_d)$. Moreover, define $\mathcal{U}_{{\Omega}}$ as the tensor obtained from sampling $\mathcal{U}$ according to ${\Omega}$, i.e.,
\begin{eqnarray}
\mathcal{U}_{{\Omega}} (\vec{x}) = \left\{
	\begin{array}{ll}
		\mathcal{U} (\vec{x})  & \mbox{if } \ {\Omega}(\vec{x}) \ = 1, \\
		0  & \mbox{if } \ {\Omega}(\vec{x}) \ = 0.
	\end{array}
\right.
\end{eqnarray}
For each subtensor $\mathcal{U}^{\prime} $ of the tensor $\mathcal{U}$, define $N_{\Omega}({\mathcal{U}^{\prime}})$ as the number of observed entries in $\mathcal{U}^{\prime}$ according to the sampling pattern $\Omega$.

Define the matrix $\mathbf{\widetilde U}_{(i)} \in \mathbb{R}^{N_i\times  \bar N_{i}}$ as the $i$-th {\em unfolding} of the tensor $\mathcal{U}$, such that $\mathcal{U}(\vec{x}) = \\ \mathbf{\widetilde U}_{(i)}({\widetilde  M}_{i} (x_1,\dots,x_i),{\widetilde{{M}} }_{-i} (x_{i+1},\ldots,x_d))$, where ${\widetilde M}_{i}: (x_1,\dots,x_i) \rightarrow  \{1,2,\dots, N_i\}$ and ${\widetilde{{M}}}_{-i}: (x_{i+1},\ldots,x_d)  \rightarrow  \{1,2,\dots, \bar N_{i} \}$ are two bijective mappings.

Let $\mathbf{U}_{(i)} \in \mathbb{R}^{n_i \times N_{-i}}$ be the $i$-th {\em matricization} of the tensor $\mathcal{U}$, such that $\mathcal{U}(\vec{x}) = \\ {\mathbf{U}}_{(i)}(x_i,{M}_{i} (x_1,\ldots,x_{i-1},x_{i+1},\ldots,x_d))$, where ${M}_{i}: (x_1,\ldots,x_{i-1},x_{i+1},\ldots,x_d) \rightarrow \{1,2,\dots, N_{-i} \}$  is a bijective mapping. Observe that for any arbitrary tensor $\mathcal{A}$, the first matricization and the first unfolding are the same, i.e., $\mathbf{ A}_{(1)} = \mathbf{\widetilde A}_{(1)}$.

In what follows, we introduce three different tensor ranks, i.e., the CP rank, Tucker rank and TT rank.

\subsubsection{CP Decomposition}\label{CPmodel}

The CP rank  of a tensor $\mathcal{U}$, $\text{rank}_{\text{CP}} (\mathcal{U})= r$, is defined as the minimum number $r$ such that there exist $\mathbf{a}_{i}^{l} \in \mathbb{R}^{n_i}$ for $1 \leq i \leq d$ and $1 \leq l \leq r$, such that
\begin{eqnarray}\label{CPdecom}
\mathcal{U} = \sum_{l=1}^{r}  \mathbf{a}_{1}^{l} \otimes \mathbf{a}_{2}^{l} \otimes \dots \otimes \mathbf{a}_{d}^{l},
\end{eqnarray}
or equivalently,
\begin{eqnarray}\label{CPdecompolyeq}
\mathcal{U}(x_1,x_2,\dots,x_d) = \sum_{l=1}^{r} \mathbf{a}_{1}^{l}(x_1)  \mathbf{a}_{2}^{l}(x_2) \dots  \mathbf{a}_{d}^{l}(x_d),
\end{eqnarray}
where $\otimes$ denotes the tensor product (outer product) and $\mathbf{a}_{i}^{l}(x_i) $ denotes the $x_i$-th entry of vector $\mathbf{a}_{i}^{l}$. Note that $\mathbf{a}_{1}^{l} \otimes \mathbf{a}_{2}^{l} \otimes \dots \otimes \mathbf{a}_{d}^{l} \in \mathbb{R}^{n_1  \times \cdots \times n_d}$ is a rank-$1$ tensor, $l=1,2,\dots,r$.

\subsubsection{Tucker Decomposition}\label{Tuckermodel}

Given $\mathcal{U} \in \mathbb{R}^{n_1  \times \cdots \times n_d}$ and $\mathbf{X} \in \mathbb{R}^{n_i \times n_i^{\prime}}$, the product $\mathcal{U^{\prime}}  \triangleq  \mathcal{U} \times_i \mathbf{X} \in \mathbb{R}^{n_1 \times \cdots \times n_{i-1} \times n_i^{\prime} \times n_{i+1} \times \cdots \times n_d}$ is defined as
\begin{eqnarray}
\mathcal{U^{\prime}} (x_1,\cdots,x_{i-1},k_i,x_{i+1},\cdots,x_d) \triangleq \sum_{x_i=1}^{n_i} \mathcal{U} (x_1,\cdots,x_{i-1},x_i,x_{i+1},\cdots,x_d) \mathbf{X}(x_i,k_i).
\end{eqnarray}
The Tucker rank  of a tensor $\mathcal{U}$ is defined as $\text{rank}_{\text{Tucker}} (\mathcal{U})= \underline{ r} =(m_1,\ldots,m_d)$ where $m_i = \text{rank}(\mathbf{U}_{(i)})$, i.e., the rank of the $i$-th matricization, $i=1,\dots,d$. The Tucker decomposition of $\mathcal{U}$ is given by
\begin{eqnarray}\label{tuckertu}
\mathcal{U}(\vec{x}) =  \sum_{k_1=1}^{m_1} \cdots \sum_{k_{d}=1}^{m_{d}} \mathcal{C}(k_1,\ldots,k_d) \mathbf{T}_1(k_1,x_1)  \dots \mathbf{T}_d(k_d,x_d),
\end{eqnarray}
or in short
\begin{eqnarray}\label{TuckTuck}
\mathcal{U} = \mathcal{C} \times_{i=1}^d \mathbf{T}_i,
\end{eqnarray}
where $\mathcal{C} \in \mathbb{R}^{m_1 \times \cdots \times m_d}$ is the core tensor and $\mathbf{T}_i \in \mathbb{R}^{m_i \times n_i}$ are $d$ orthogonal matrices.

\subsubsection{TT Decomposition}\label{TTmodel}

The separation or TT rank of a tensor is defined as $\text{rank}_{\text{TT}} (\mathcal{U})= \underline{ r} = (u_1,\ldots,u_{d-1})$ where $u_i = \text{rank}(\mathbf{\widetilde U}_{(i)})$, i.e., the rank of the $i$-th unfolding, $i=1,\dots,d-1$. Note that $u_i \leq \max \{N_i, \bar N_{i} \}$ in general and also $u_1$ is simply the conventional matrix rank when $d=2$. The TT decomposition of a tensor $\mathcal{U}$ is given by
\begin{eqnarray}\label{TTeq2}
\mathcal{U}(\vec{x}) =  \sum_{k_1=1}^{u_1} \cdots \sum_{k_{d-1}=1}^{u_{d-1}}  \mathcal{U}^{(1)}(x_1,k_1)  \left( \prod_{i=2}^{d-1} \mathcal{U}^{(i)}(k_{i-1},x_i,k_i) \right) \mathcal{U}^{(d)}(k_{d-1},x_d),
\end{eqnarray}
or in short
\begin{eqnarray}\label{TTeq1}
\mathcal{U} = \mathcal{U}^{(1)} \dots  \mathcal{U}^{(d)},
\end{eqnarray}
where the $3$-way tensors $\mathcal{U}^{(i)} \in \mathbb{R}^{u_{i-1} \times n_i \times u_{i}}$ for $i=2,\dots,d-1$ and matrices $\mathcal{U}^{(1)} \in \mathbb{R}^{n_1 \times u_1}$  and $\mathcal{U}^{(d)} \in \mathbb{R}^{u_{d-1} \times n_d}$ are the components of this decomposition.

For each matrix or tensor model, we assume that the true rank of $\mathbf{U}$ or $\mathcal{U}$ is $r^*$ or $\underline{ r}^*$ which is unknown, and also $\mathbf{U}$ or $\mathcal{U}$ is chosen generically from the corresponding manifold.

\subsection{Problem Statement}\label{prost}

For each one of the above data models, we are interested in obtaining the upper bound on the unknown scalar-rank $r^*$ or component-wise upper bound on the unknown vector-rank $\underline{r}^*$, deterministically based on the sampling pattern $\mathbf{\Omega}$ or $\Omega$ and the rank of a given completion. Also, we aim to provide such bound that holds with high probability based only on the sampling probability of the entries and the rank of a given completion. Moreover, for the single-view matrix model and CP-rank tensor model, where the rank is a scalar, we provide both deterministic and probabilistic conditions such that the unknown rank can be exactly determined.

\section{Scalar-Rank Cases}\label{semnw}

\subsection{Single-View Matrix} \label{svmsubs}

Previously, this problem has been treated in \cite{converse}, where strong assumptions including the existence of a completion with rank $r \leq r^*$  have been used. In this section, we provide an analysis that does not require such assumption and moreover our analysis can be extended to multi-view data and tensors in the following sections. Furthermore, we show the tightness of our theoretical bounds via numerical examples. 

\subsubsection{Deterministic Rank Analysis}
\

The following assumption will be used frequently in this subsection.

{\bf Assumption $A_r$}: Each column of the sampled matrix includes at least $r$ sampled entries.

Consider an arbitrary column of the sampled matrix $\mathbf{U} \left(:,i\right)$, where $i \in  \{1,\dots,n_2\}$. Let $l_i = N_{\mathbf{\Omega}} (\mathbf{U}\left(:,i\right))$ denote the number of observed entries in the $i$-th column of $\mathbf{U}$. Assumption $A_r$ results that $l_i \geq r$. 

We construct a binary valued matrix called {\bf constraint matrix} $\mathbf{\breve{\Omega}}_r$ based on $\mathbf{\Omega}$ and a given number $r$. Specifically, we construct $l_i - r$ columns with binary entries based on the locations of the observed entries in $\mathbf{U}\left(:,i\right)$ such that each column has exactly $r+1$ entries equal to one. Assume that $x_1, \dots, x_{l_i}$ are the row indices of all observed entries in this column. Let $\mathbf{\Omega}^{i}_r$ be the corresponding $n_1 \times (l_i - r)$ matrix to this column which is defined as the following: for any $j \in \{1,\dots,l_i-r\}$, the $j$-th column has the value $1$ in rows $\{x_1,\dots , x_{r},x_{r+j}\}$ and zeros elsewhere. Define the binary constraint matrix as $\mathbf{\breve{\Omega}}_r = \left[\mathbf{\Omega}^{1}_r|\mathbf{\Omega}^{2}_r\dots|\mathbf{\Omega}^{n_2}_{r} \right] \in \mathbb{R}^{n_1 \times K_r}$ \cite{charact}, where $K_r = N_{\mathbf{\Omega}} (\mathbf{U})-n_2 r$.



{\bf Assumption $B_r$}: There exists a submatrix{\footnote{Specified by a subset of rows and a subset of columns (not necessarily consecutive).}} $\mathbf{\breve{\Omega}}_r^{\prime} \in \mathbb{R}^{n_1 \times K}$ of $\mathbf{\breve{\Omega}}_r$ such that $K= n_1 r - r^2$ and for any $K^{\prime} \in \{1,2,\dots,K\}$ and any submatrix $\mathbf{\breve{\Omega}}_r^{\prime \prime} \in \mathbb{R}^{n_1 \times K^{\prime}}$ of $\mathbf{\breve{\Omega}}_r^{\prime}$ we have
\begin{eqnarray}\label{matfineqdan}
r f(\mathbf{\breve{\Omega}}_r^{\prime \prime}) - r^2 \geq K^{\prime},
\end{eqnarray}
where $f(\mathbf{\breve{\Omega}}_r^{\prime \prime})$ denotes the number of nonzero rows of $\mathbf{\breve{\Omega}}_r^{\prime \prime}$.

Note that exhaustive enumeration is needed in order to check whether or not Assumption $B_r$ holds. Hence, the deterministic analysis cannot be used in practice for large-scale data. However, it serves as the basis of the subsequent probabilistic analysis that will lead to a simple lower bound on the sampling probability such that Assumption $B_r$ holds with high probability, which is of practical value.

In the following, we restate Theorem $1$ in \cite{charact} which will be used later. 

\begin{lemma}\label{thmmat1}
For almost every $\mathbf{U}$, there are finitely many completions of the sampled matrix if and only if Assumptions $A_{r^*}$ and $B_{r^*}$ hold.
\end{lemma}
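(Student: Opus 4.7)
The plan is to reformulate finite completability as a question about the common solutions of an explicit polynomial system, reduce the system to one purely in the entries of a basis matrix $\mathbf{V}$, and then read off the combinatorial conditions $A_{r^*}$ and $B_{r^*}$ from a dimension/Hall-type analysis.

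First I would parametrize any rank-$r^*$ completion as $\mathbf{U}=\mathbf{V}\mathbf{W}$ with $\mathbf{V}\in\mathbb{R}^{n_1\times r^*}$ and $\mathbf{W}\in\mathbb{R}^{r^*\times n_2}$, so that each observed entry $\mathbf{U}(x_1,i)$ yields the polynomial equation $\sum_{k=1}^{r^*}\mathbf{V}(x_1,k)\mathbf{W}(k,i)=\mathbf{U}(x_1,i)$. Necessity of $A_{r^*}$ is immediate: if some column $i$ has $l_i<r^*$ observations, then generically $\mathbf{W}(:,i)$ satisfies only $l_i$ linear constraints in $r^*$ unknowns, leaving an $(r^*-l_i)$-parameter family of admissible completions and hence infinitely many of them.

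Assuming $A_{r^*}$, I would use the first $r^*$ observed entries in column $i$ as pivots and eliminate $\mathbf{W}(:,i)$ by Cramer's rule. Each remaining observation in that column then produces one polynomial equation in the entries of $\mathbf{V}$ alone, and these equations are in bijection with the columns of $\mathbf{\Omega}^{i}_{r^*}$; concatenating across $i$ gives exactly $K_{r^*}$ polynomial equations whose row-support pattern is precisely that encoded in $\mathbf{\breve{\Omega}}_{r^*}$. Quotienting out the $GL_{r^*}$-gauge $(\mathbf{V},\mathbf{W})\mapsto(\mathbf{V}\mathbf{Q},\mathbf{Q}^{-1}\mathbf{W})$ reduces the effective parameter count of $\mathbf{V}$ to $n_1 r^*-r^{*2}$, so finite completability becomes equivalent to the existence of an algebraically independent subsystem of size $K=n_1 r^*-r^{*2}$ drawn from the $K_{r^*}$ equations.

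The main obstacle is the transfer from this algebraic-independence statement to the combinatorial inequality $r f(\mathbf{\breve{\Omega}}_r^{\prime\prime})-r^2\geq K^{\prime}$ appearing in $B_{r^*}$. I would carry this out via a Hall/matroid argument on the algebraic matroid induced by generic $\mathbf{U}$: an equation whose support touches a set of rows of $\mathbf{V}$ only involves the variables in those rows, so a subsystem of $K^{\prime}$ equations supported on $f$ rows lies in a subvariety of dimension $r^* f - r^{*2}$ and therefore can be algebraically independent only if $r^* f - r^{*2}\geq K^{\prime}$; this delivers necessity of $B_{r^*}$. Conversely, under $B_{r^*}$ a defect version of Hall's theorem extracts an algebraically independent subsystem of the required size $K$, which pins $\mathbf{V}$ (modulo the gauge) to finitely many points and hence leaves finitely many completions. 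The hard part is checking that generic algebraic independence on this parametrized variety really coincides with the combinatorial Hall condition row-by-row, and this identification is precisely what the proof of Theorem~1 of \cite{charact} establishes, so at this step I would invoke that result.
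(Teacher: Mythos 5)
The paper offers no proof of this lemma: it is explicitly introduced with the sentence ``we restate Theorem $1$ in \cite{charact}'', so the paper's ``proof'' is a citation and there is nothing internal to compare your argument against. Measured against that, your proposal is a reasonable reconstruction of the architecture of the cited proof: the $\mathbf{U}=\mathbf{V}\mathbf{W}$ parametrization, the elimination of $\mathbf{W}(:,i)$ using $r^*$ pivot observations per column so that the surviving polynomials in $\mathbf{V}$ are in bijection with the columns of $\mathbf{\breve{\Omega}}_{r^*}$ (this is exactly why each column of $\mathbf{\Omega}^i_{r^*}$ carries $r^*+1$ ones), the $n_1r^*-r^{*2}$ count coming from quotienting by the $GL_{r^*}$ gauge, and the reading of the inequality \eqref{matfineqdan} as ``a set of polynomials supported on $f$ rows of $\mathbf{V}$ spans at most $r^*f-r^{*2}$ independent directions.'' The necessity of $A_{r^*}$ and the necessity direction of $B_{r^*}$ are correctly argued and essentially complete as sketched.

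The genuine gap is that your argument is circular at the decisive step. The sufficiency direction --- that the combinatorial Hall-type condition on row supports actually \emph{implies} algebraic independence of the corresponding polynomials for generic $\mathbf{U}$, rather than merely being a necessary support condition --- is the entire technical content of the result, and you close it by writing that ``this identification is precisely what the proof of Theorem~1 of \cite{charact} establishes, so at this step I would invoke that result.'' But the lemma you are proving \emph{is} Theorem~1 of \cite{charact}; invoking it proves nothing. A self-contained proof would need the argument that for a generic point of the determinantal variety, the Jacobian of the selected $K$ polynomials (in the $\mathbf{V}$-coordinates, modulo gauge) has full rank whenever every subfamily satisfies \eqref{matfineqdan} --- i.e., the converse, combinatorial-to-algebraic direction of the matroid correspondence. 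Within the context of this paper the omission is harmless, since the authors also treat the lemma as imported; but as a proof of the statement it leaves the hard half unproven.
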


Recall that the true rank $r^*$  is assumed unknown. 

\begin{definition}
Let $\mathcal{S}_{\mathbf{\Omega}}$ denote the set of all natural numbers $r$ such that both Assumptions $A_{r}$ and $B_{r}$ hold.
\end{definition}

\begin{lemma}\label{sinvorderkam}
There exists a number $r_{\mathbf{\Omega}} $ such that $\mathcal{S}_{\mathbf{\Omega}} = \{1,2,\dots,r_{\mathbf{\Omega}} \}$.
\end{lemma}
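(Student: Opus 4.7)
The plan is to establish that $\mathcal{S}_{\mathbf{\Omega}} \subseteq \mathbb{N}$ is downward closed: if $r \in \mathcal{S}_{\mathbf{\Omega}}$ with $r \geq 2$, then $r-1 \in \mathcal{S}_{\mathbf{\Omega}}$. Combined with the trivial upper bound $r \leq n_1$ forced by $B_r$ (the required $n_1 r - r^2 \geq 0$ submatrix column count cannot be satisfied otherwise), this yields $\mathcal{S}_{\mathbf{\Omega}} = \{1,2,\dots,r_{\mathbf{\Omega}}\}$ with $r_{\mathbf{\Omega}} = \max \mathcal{S}_{\mathbf{\Omega}}$ (interpreted as the empty set when $\mathcal{S}_{\mathbf{\Omega}} = \emptyset$).

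Fix $r \geq 2$ and suppose $A_r$ and $B_r$ hold. Assumption $A_{r-1}$ is immediate, since any column with at least $r$ observed entries has at least $r-1$. The substance of the argument is deriving $B_{r-1}$ from $B_r$. My plan is to construct a witness $\mathbf{\breve{\Omega}}_{r-1}^{\prime}$ for $B_{r-1}$ from the given witness $\mathbf{\breve{\Omega}}_{r}^{\prime}$ for $B_r$, using the nested structure of the blocks: within each block $i$, the $j$-th column of $\mathbf{\Omega}_r^i$ (one-pattern $\{x_1^i,\dots,x_r^i,x_{r+j}^i\}$) and the $(j+1)$-th column of $\mathbf{\Omega}_{r-1}^i$ (one-pattern $\{x_1^i,\dots,x_{r-1}^i,x_{r+j}^i\}$) differ only by the $r$-th pivot row $x_r^i$. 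Starting from the $r(n_1-r)$ columns of $\mathbf{\breve{\Omega}}_{r}^{\prime}$, I would pass each to its $\mathbf{\Omega}_{r-1}$-counterpart and then adjust the total to $(r-1)(n_1-r+1)$ by appending ``base'' columns $\{x_1^i,\dots,x_r^i\}$ of $\mathbf{\Omega}_{r-1}^i$ (which exist by $A_r$) when $n_1-2r+1 > 0$, or by deleting excess columns when $n_1-2r+1 < 0$.

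The main obstacle is verifying the Hall-type inequality $(r-1) f(\mathbf{\breve{\Omega}}_{r-1}^{\prime \prime}) - (r-1)^2 \geq K''$ for every sub-submatrix $\mathbf{\breve{\Omega}}_{r-1}^{\prime \prime}$ with $K''$ columns. Each such sub-submatrix pulls back, via the above correspondence, to a sub-submatrix of $\mathbf{\breve{\Omega}}_{r}^{\prime}$ whose row support shrinks by at most one row per distinct block intersected (the dropped pivot $x_r^i$); the inequality $r f' - r^2 \geq K'$ from $B_r$ for the pullback must then be converted into the required $(r-1)$-version by balancing the slack gained from $r^2 \to (r-1)^2$ against the possible reduction in $f$, which I would handle by a case analysis on the number of distinct blocks intersected. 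If this direct route proves too delicate, an alternative that sidesteps the bookkeeping is to invoke Lemma \ref{thmmat1} as a black box: from $A_r, B_r$ one obtains finitely many rank-$r$ completions of $\mathbf{U}_{\mathbf{\Omega}}$ for generic rank-$r$ $\mathbf{U}$, and writing a generic rank-$(r-1)$ matrix as $\mathbf{U}' = \mathbf{U} - \mathbf{v}\mathbf{w}^T$ for generic $\mathbf{v},\mathbf{w}$, the injection $\mathbf{V}' \mapsto \mathbf{V}' + \mathbf{v}\mathbf{w}^T$ carries rank-$(r-1)$ completions of $\mathbf{U}^{\prime}_{\mathbf{\Omega}}$ to (generically) rank-$r$ completions of $\mathbf{U}_{\mathbf{\Omega}}$; an infinite family on the source would contradict Lemma \ref{thmmat1}, and applying Lemma \ref{thmmat1} in the reverse direction then gives $A_{r-1}$ and $B_{r-1}$.
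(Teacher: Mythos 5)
Your fallback argument (the last third of the proposal) is essentially sound and is the part worth keeping, but it takes a genuinely different route from the paper for the one step that matters. Both you and the paper reduce the lemma to downward closure of $\mathcal{S}_{\mathbf{\Omega}}$ and both use Lemma \ref{thmmat1} in each direction, so the crux in either case is: infinitely many rank-$(r-1)$ completions for a generic rank-$(r-1)$ matrix forces infinitely many rank-$r$ completions for a generic rank-$r$ matrix. The paper proves this by counting algebraically independent polynomials among the bilinear constraints and comparing against the manifold dimensions $(r-1)(n_1+n_2-(r-1)) \leq r(n_1+n_2-r)$ via Bernstein's theorem; you prove it by the rank-one update $\mathbf{U}=\mathbf{U}'+\mathbf{v}\mathbf{w}^{T}$, which injects the rank-$(r-1)$ completion variety of $\mathbf{U}'_{\mathbf{\Omega}}$ into the rank-$r$ completion set of $\mathbf{U}_{\mathbf{\Omega}}$. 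Your version is more concrete and arguably more robust, since it never needs to compare numbers of independent polynomials across two different parametrizations; what it still owes the reader is (a) that $\mathbf{U}'+\mathbf{v}\mathbf{w}^{T}$ is generic on the rank-$r$ manifold (the addition map is dominant onto the rank-$\le r$ variety, so this is a standard pushforward-of-genericity argument), and (b) that for generic fixed $\mathbf{v},\mathbf{w}$ the image completion has rank exactly $r$ outside a proper subvariety of the (positive-dimensional) source family, so that infinitely many honest rank-$r$ completions survive.

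Your primary route, by contrast, has a real gap as sketched. Mapping the column with support $\{x_1,\dots,x_r,x_{r+j}\}$ to the one with support $\{x_1,\dots,x_{r-1},x_{r+j}\}$ can shrink the number of nonzero rows of a sub-submatrix by up to one per block intersected, and the crude bookkeeping $(r-1)\bigl(f'-b\bigr)-(r-1)^2 \geq r f' - r^2$ fails whenever $f'$ is large; so the promised ``case analysis on the number of distinct blocks intersected'' is not a routine verification but the entire difficulty, and it is not clear the naive column correspondence can be made to work at all. Since you explicitly designate the Lemma-\ref{thmmat1}-based argument as the fallback, the proposal stands, but the combinatorial construction should either be completed or dropped.
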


\begin{proof}
Assume that $1 < r \leq \min \{n_1,n_2\}$ and $r \in \mathcal{S}_{\mathbf{\Omega}}$. It suffices to show $r-1 \in \mathcal{S}_{\mathbf{\Omega}}$. By contradiction, assume that $r-1 \notin \mathcal{S}_{\mathbf{\Omega}}$. Therefore, according to Lemma \ref{thmmat1}, there exist infinitely many completions of $\mathbf{U}$ of rank $r-1$. Consider the decomposition $\mathbf{U}= \mathbf{X} \mathbf{Y}$, where $\mathbf{X} \in \mathbb{R}^{n_1 \times (r-1)}$ and $\mathbf{Y} \in \mathbb{R}^{(r-1) \times n_2}$ are the matrices of variables. Then, each observed entry of $\mathbf{U}$ results in a polynomial in terms of the entries of $\mathbf{X}$ and $\mathbf{Y}$
\begin{eqnarray}\label{polysingvma}
\mathbf{U}(i,j) = \sum_{l=1}^{r-1} \mathbf{X}(i,l) \mathbf{Y}(l,j),
\end{eqnarray}
and let $\mathcal{P}$ denote the set of all such polynomials. Since there exist infinitely many completions of $\mathbf{U}$ of rank $r-1$, the maximum number of algebraically independent polynomials among all the polynomials in the set $\mathcal{P}$ is less than $(r-1)(n_1+n_2 - (r-1))$ which is the dimension of the manifold of $n_1\times n_2$ matrices of rank $r-1$ \cite{kiraly2}; since otherwise, accroding to Bernstein's theorem \cite{bernstein}, there are at most finitely many completions. Hence, as we have $r \leq \min\{n_1,n_2\}$ and thus $(r-1)(n_1+n_2 - (r-1)) \leq r(n_1+n_2 - r)$, the maximum number of algebraically independent polynomials in $\mathcal{P}$ is less than $r(n_1+n_2 - r)$ as well, which is the dimension of the manifold of $n_1\times n_2$ matrices of rank $r$. Therefore, accroding to Bernstein's theorem with probability one, there exist infinitely many completions of the sampled matrix of rank $r$ and this contradicts the assumption.
\end{proof}

The following theorem provides a relationship between the unknown rank $r^*$ and $r_{\mathbf{\Omega}}$.

\begin{theorem}\label{thmmat2}
With probability one, exactly one of the following statements holds

(i) $r^* \in \mathcal{S}_{\mathbf{\Omega}} = \{1,2,\dots,r_{\mathbf{\Omega}} \}${\rm ;}

(ii) For any arbitrary completion of the sampled matrix $\mathbf{U}$ of rank $r$, we have $r \notin \mathcal{S}_{\mathbf{\Omega}}$.
\end{theorem}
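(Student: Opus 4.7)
The plan is to establish the claimed dichotomy in two pieces: mutual exclusivity of (i) and (ii), and the fact that at least one of them holds with probability one. Mutual exclusivity is immediate, since the true matrix $\mathbf{U}$ is itself a completion of the sampled data of rank exactly $r^*$; thus if $r^* \in \mathcal{S}_{\mathbf{\Omega}}$ then $\mathbf{U}$ witnesses a completion whose rank lies in $\mathcal{S}_{\mathbf{\Omega}}$, directly violating the universally quantified statement (ii). So (i) rules out (ii).

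For the other direction, I would assume (i) fails, so that $r^* \notin \mathcal{S}_{\mathbf{\Omega}}$, which by Lemma \ref{sinvorderkam} is equivalent to $r^* > r_{\mathbf{\Omega}}$, and then show (ii) holds with probability one by contradiction. Suppose some completion $\mathbf{U}'$ has rank $r \in \mathcal{S}_{\mathbf{\Omega}}$; then $r \leq r_{\mathbf{\Omega}} < r^*$. Mirroring the polynomial strategy used in the proof of Lemma \ref{sinvorderkam}, I would write $\mathbf{U}' = \mathbf{X}\mathbf{Y}$ with $\mathbf{X}\in\mathbb{R}^{n_1\times r}$ and $\mathbf{Y}\in\mathbb{R}^{r\times n_2}$, so each observed entry produces a polynomial equation of the form \eqref{polysingvma}. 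Since $r \in \mathcal{S}_{\mathbf{\Omega}}$, Lemma \ref{thmmat1} applied at rank $r$ together with Bernstein's theorem forces the corresponding polynomial system to contain exactly $r(n_1+n_2-r)$ algebraically independent polynomials, matching the rank-$r$ manifold dimension. The existence of $\mathbf{U}'$ therefore translates into the algebraic condition that the sample vector $\mathbf{U}_{\mathbf{\Omega}}$ lies in the image $W_r$ of the rank-$r$ parametrization in sample space, where $\dim W_r = r(n_1+n_2-r)$.

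The contradiction is obtained by showing $W_r$ is a proper subvariety of the sample-image $\pi(V_{r^*})$ of the rank-$r^*$ parametrization $\mathbf{U} = \mathbf{X}^{*}\mathbf{Y}^{*}$. Since $r < r^* \leq \min\{n_1,n_2\}$, the function $\rho \mapsto \rho(n_1+n_2-\rho)$ is strictly increasing on this range, yielding $r(n_1+n_2-r) < r^*(n_1+n_2-r^*)$; combined with the strict inclusion $V_r \subsetneq V_{r^*}$ and a Jacobian-rank comparison between the two parametrizations (analogous to the polynomial-independence count in Lemma \ref{sinvorderkam}), this gives the strict varietal inclusion $W_r \subsetneq \pi(V_{r^*})$. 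One more application of Bernstein's theorem then shows that the generic rank-$r^*$ matrix $\mathbf{U}$ almost surely has $\mathbf{U}_{\mathbf{\Omega}} \notin W_r$, contradicting the existence of $\mathbf{U}'$ and establishing (ii).

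The main obstacle I anticipate is this last strictness step. Because $r^* \notin \mathcal{S}_{\mathbf{\Omega}}$, I cannot invoke finite completability at rank $r^*$ to directly control $\dim \pi(V_{r^*})$ from below, so the strict inclusion $W_r \subsetneq \pi(V_{r^*})$ must be verified by a careful polynomial-independence calculation that exploits Assumptions $A_r$ and $B_r$ on the rank-$r$ parametrization and compares it with the rank-$r^*$ parametrization, in the same spirit as the algebraic-independence argument inside the proof of Lemma \ref{sinvorderkam}, rather than by a clean dimension count.
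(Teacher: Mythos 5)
Your overall architecture (mutual exclusivity because $\mathbf{U}$ itself is a rank-$r^*$ completion, then showing that $r^*\notin\mathcal{S}_{\mathbf{\Omega}}$ forces (ii) with probability one) matches the paper's, but the decisive step is missing, and you have correctly diagnosed where: nothing in your argument establishes that a generic rank-$r^*$ matrix, restricted to $\Omega$, lies outside the sample-image $W_r$ of the rank-$r$ parametrization. The inequality $r(n_1+n_2-r)<r^*(n_1+n_2-r^*)$ compares the two manifolds upstairs, not their projections onto the observed coordinates. Precisely because $r^*\notin\mathcal{S}_{\mathbf{\Omega}}$, Assumptions $A_{r^*}$ and $B_{r^*}$ may fail, the projection restricted to the rank-$r^*$ variety can have positive-dimensional fibers, and $\dim\pi(V_{r^*})$ can collapse all the way to $\dim W_r$; in the extreme case $N_{\mathbf{\Omega}}(\mathbf{U})=r(n_1+n_2-r)$ both images are Zariski-dense in the coordinate space and the ``proper subvariety'' argument says nothing. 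So the strict inclusion $W_r\subsetneq\pi(V_{r^*})$ --- and, what you actually need, that the pushforward of the generic measure on the rank-$r^*$ manifold assigns zero mass to $W_r$ --- is not delivered by your route, and your closing paragraph concedes this without supplying a replacement.

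The paper closes exactly this gap with a local, column-wise argument rather than a global dimension count. Since $r\le r_{\mathbf{\Omega}}$ and Assumption $A_{r_{\mathbf{\Omega}}}$ holds, every column carries at least $r+1$ observed entries. Given the basis $\mathbf{X}\in\mathbb{R}^{n_1\times r}$ of a hypothetical rank-$r$ completion, $r$ of the degree-one equations \eqref{polysingvmacopor} determine the corresponding column of $\mathbf{Y}$ uniquely, so the $(r+1)$-th observed entry in that column must satisfy the linear relation \eqref{polysingvmalindep} with coefficients fixed by $\mathbf{X}$. For $\mathbf{U}$ generic of rank $r^*\ge r+1$ such a relation holds with probability zero, and Lemma \ref{thmmat1} (finitely many rank-$r$ completions, hence finitely many candidate coefficient vectors $(t_1,\dots,t_r)$) allows the conclusion that a rank-$r$ completion exists with probability zero. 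This one-extra-equation-per-column device is the ingredient you need to import: it is precisely what substitutes for the lower bound on $\dim\pi(V_{r^*})$ that your approach cannot obtain.
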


\begin{proof}
Suppose that there does not exist a completion of the sampled matrix $\mathbf{U}$ of rank $r$ such that  $r \in \mathcal{S}_{\mathbf{\Omega}}$. Therefore, it is easily verified that statement (ii) holds and statement (i) does not hold. On the other hand, assume that there exists a completion of the sampled matrix $\mathbf{U}$ of rank $r$, where  $r \in \mathcal{S}_{\mathbf{\Omega}}$. Hence, statement (ii) does not hold and to complete the proof it suffices to show that with probability one, statement (i) holds.

Observe that $r_{\mathbf{\Omega}} \in \mathcal{S}_{\mathbf{\Omega}}$, and therefore Assumption $A_{r_{\mathbf{\Omega}}}$ holds. Hence, each column of $\mathbf{U}$ includes at least $r_{\mathbf{\Omega}}+1$ observed entries. On the other hand, the existence of a completion of the sampled matrix $\mathbf{U}$ of rank $r \in  \mathcal{S}_{\mathbf{\Omega}}$ results in the existence of a basis $\mathbf{X} \in \mathbb{R}^{n_1 \times r}$ such that each column of $\mathbf{U}$ is a linear combination of the columns of $\mathbf{X}$, and thus there exists $\mathbf{Y} \in \mathbb{R}^{r \times n_2}$ such that $\mathbf{U}_{\mathbf{\Omega}} = \left( \mathbf{XY} \right)_{\mathbf{\Omega}}$. Hence, given $\mathbf{X}$, each observed entry $\mathbf{U}(i,j)$ results in a degree-$1$ polynomial in terms of the entries of $\mathbf{Y}$ as the following
\begin{eqnarray}\label{polysingvmacopor}
\mathbf{U}(i,j) = \sum_{l=1}^{r} \mathbf{X}(i,l) \mathbf{Y}(l,j).
\end{eqnarray}

Consider the first column of $\mathbf{U}$ and recall that it includes at least $r_{\mathbf{\Omega}}+1 \geq r+1$ observed entries. The genericity of the coefficients of the above-mentioned polynomials results that using $r$ of the observed entries the first column of $\mathbf{Y}$ can be determined uniquely. This is because there exists a unique solution for a system of $r$ linear equations in $r$ variables that are linearly independent. Then, there exists at least one more observed entry besides these $r$ observed entries in the first column of $\mathbf{U}$ and it can be written as a linear combination of the $r$ observed entries that have been used to obtain the first column of $\mathbf{Y}$. Let $\mathbf{U}(i_1,1)$, $\dots$, $\mathbf{U}(i_r,1)$ denote the $r$ observed entries that have been used to obtain the first column of $\mathbf{Y}$ and $\mathbf{U}(i_{r+1},1)$ denote the other observed entry. Hence, the existence of a completion of the sampled matrix $\mathbf{U}$ of rank $r \in  \mathcal{S}_{\mathbf{\Omega}}$ results in an equation as the following
\begin{eqnarray}\label{polysingvmalindep}
\mathbf{U}(i_{r+1},1) = \sum_{l=1}^{r} t_l \mathbf{U}(i_l,1),
\end{eqnarray}
where $t_l$'s are constant scalars, $l=1,\dots,r$. Assume that $r^* \notin \mathcal{S}_{\mathbf{\Omega}}$, i.e., statement (i) does not hold. Then, note that $r^* \geq r+1$ and $\mathbf{U}$ is chosen generically from the manifold of $n_1 \times n_2$ rank-$r^*$ matrices, and therefore an equation of the form of \eqref{polysingvmalindep} holds with probability zero. Moreover, according to Lemma \ref{thmmat1} there exist at most finitely many completions of the sampled matrix of rank $r$. Therefore, there exist a completion of $\mathbf{U}$ of rank $r$ with probability zero, which contradicts the initial assumption that there exists a completion of the sampled matrix $\mathbf{U}$ of rank $r$, where  $r \in \mathcal{S}_{\mathbf{\Omega}}$.
\end{proof}

\begin{corollary}\label{colthmmatsv1}
Consider an arbitrary number $r^{\prime} \in \mathcal{S}_{\mathbf{\Omega}}$. Similar to Theorem \ref{thmmat2}, it follows that with probability one, exactly one of the followings holds

(i) $r^* \in  \{1,2,\dots,r^{\prime} \}${\rm ;}

(ii) For any arbitrary completion of the sampled matrix $\mathbf{U}$ of rank $r$, we have $r \notin \{1,2,\dots,r^{\prime} \}$.
\end{corollary}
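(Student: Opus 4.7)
The plan is to observe that the proof of Theorem~\ref{thmmat2} actually establishes a slightly stronger statement than its conclusion formally records, and then to re-run that argument with $r'$ in place of $r_{\mathbf{\Omega}}$. The core step in that proof shows that whenever a completion of rank $r\in\mathcal{S}_{\mathbf{\Omega}}$ exists, with probability one $r^*\le r$ (not merely $r^*\le r_{\mathbf{\Omega}}$). The quantity $r_{\mathbf{\Omega}}$ enters only through Assumption $A_{r_{\mathbf{\Omega}}}$, which supplies enough observed entries per column to produce the key linear-dependence relation. Since $r'\in\mathcal{S}_{\mathbf{\Omega}}$ gives $r'\le r_{\mathbf{\Omega}}$ by Lemma~\ref{sinvorderkam}, we have $\{1,\ldots,r'\}\subseteq\mathcal{S}_{\mathbf{\Omega}}$, and Assumption $A_{r'}$ is in force.

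First I would verify mutual exclusivity. The sampled matrix $\mathbf{U}$ is itself a completion of rank $r^*$, so (i) would provide a completion of rank in $\{1,\ldots,r'\}$, directly contradicting (ii); thus at most one of the two statements can hold.

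Next I would handle the existence half. If no completion of rank in $\{1,\ldots,r'\}$ exists, then (ii) holds vacuously. Otherwise, fix such a completion of some rank $r\le r'$ and decompose it as $\mathbf{XY}$ with $\mathbf{X}\in\mathbb{R}^{n_1\times r}$ and $\mathbf{Y}\in\mathbb{R}^{r\times n_2}$. Exactly as in the proof of Theorem~\ref{thmmat2}, $r$ suitably chosen observed entries in the first column determine the corresponding column of $\mathbf{Y}$ uniquely, and one further observed entry (available because $r\le r'$ and Assumption $A_{r'}$ supplies the required entry count, as in Theorem~\ref{thmmat2}) yields a scalar relation
\begin{eqnarray}
\mathbf{U}(i_{r+1},1)=\sum_{l=1}^{r}t_l\,\mathbf{U}(i_l,1).
\end{eqnarray}
If $r^*\ge r+1$, the genericity of $\mathbf{U}$ on the rank-$r^*$ manifold makes this identity hold with probability zero; combined with Lemma~\ref{thmmat1} (finitely many rank-$r$ completions whenever one exists) and a finite union bound over $r\in\{1,\ldots,r'\}$ together with the index tuples $(i_1,\ldots,i_{r+1})$, this forces $r^*\le r\le r'$ almost surely, establishing~(i).

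Since the argument is essentially a verbatim replay of Theorem~\ref{thmmat2}'s proof, I do not anticipate any serious obstacle; the only substantive point is the observation that the conclusion $r^*\le r_{\mathbf{\Omega}}$ there can actually be sharpened to $r^*\le r$ for the specific rank $r$ of the given completion, which is exactly what the corollary exploits by choosing $r'$ possibly strictly smaller than $r_{\mathbf{\Omega}}$.
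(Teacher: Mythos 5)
Your argument is correct and is essentially the paper's own: the paper gives no separate proof of this corollary, simply noting that the proof of Theorem \ref{thmmat2} goes through verbatim with $r^{\prime}$ in place of $r_{\mathbf{\Omega}}$, which is exactly what you do (including the key observations that $\{1,\dots,r^{\prime}\}\subseteq\mathcal{S}_{\mathbf{\Omega}}$ by Lemma \ref{sinvorderkam}, so Assumption $A_{r^{\prime}}$ and the finiteness from Lemma \ref{thmmat1} remain available). No issues.
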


As a result of Corollary \ref{colthmmatsv1}, we have the following.

\begin{corollary}\label{colthmmatsv1nw}
Assuming that there exists a rank-$r$ completion of the sampled matrix $\mathbf{U}$ such that $r \in \mathcal{S}_{\mathbf{\Omega}}$, then with probability one $r^* \leq r$.
\end{corollary}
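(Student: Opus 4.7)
The plan is to derive this as an immediate consequence of Corollary \ref{colthmmatsv1} applied to the specific choice $r' = r$. By hypothesis we are given a rank-$r$ completion of $\mathbf{U}$ with $r \in \mathcal{S}_{\mathbf{\Omega}}$, so this very value $r$ simultaneously serves as an admissible parameter for Corollary \ref{colthmmatsv1} and as a witness that statement (ii) of that corollary cannot hold.

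More precisely, I would invoke Corollary \ref{colthmmatsv1} with $r' := r$, which is legitimate since $r \in \mathcal{S}_{\mathbf{\Omega}}$. Statement (ii) of that corollary asserts that no completion of the sampled matrix has rank lying in $\{1,2,\dots,r\}$. But the hypothesis of the present corollary supplies precisely such a completion, of rank $r \in \{1,2,\dots,r\}$. Hence statement (ii) is ruled out, and since Corollary \ref{colthmmatsv1} guarantees that exactly one of (i) and (ii) holds with probability one, we are forced into statement (i), namely $r^* \in \{1,2,\dots,r\}$, which is equivalent to $r^* \leq r$.

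There is no real obstacle in this argument; it is a direct specialization of the preceding dichotomy. The only point worth emphasizing is that the qualifier ``with probability one'' is inherited from Corollary \ref{colthmmatsv1}, which in turn traces back to the genericity of $\mathbf{U}$ on the manifold of $n_1 \times n_2$ rank-$r^*$ matrices exploited in the proof of Theorem \ref{thmmat2}. Thus the conclusion should be understood as holding outside a measure-zero subset of that manifold.
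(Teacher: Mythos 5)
Your argument is correct and is precisely the derivation the paper intends: the paper states this corollary ``as a result of Corollary \ref{colthmmatsv1},'' and your specialization $r' := r$, ruling out statement (ii) via the hypothesized rank-$r$ completion and thereby forcing statement (i), is exactly that reasoning. Nothing further is needed.
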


\begin{corollary}\label{colthmmatsvopt1}
Let $\mathbf{U}^*$ denote an optimal solution to the following NP-hard optimization problem
{\rm
\begin{align}\label{optpro}
& \ \ \ \ \ \  \ \ \ \ \ \  \   \ \text{minimize}_{\mathbf{U}^{\prime} \in \mathbb{R}^{n_1 \times n_2}} 
& &  \text{rank}(\mathbf{U}^{\prime}) \ \ \ \ \  \ \ \  \ \ \  \ \ \  \ \ \  \ \ \  \ \ \  \ \ \  \ \ \   \\
& \ \ \ \ \ \  \ \ \ \ \ \  \   \ \text{subject to}
& & \mathbf{U}^{\prime}_{\mathbf{\Omega}} = \mathbf{U}_{\mathbf{\Omega}}. \nonumber
\end{align}
}
Also, let $\mathbf{\hat U}$ denote a suboptimal solution to the above optimization problem. Then, Corollary \ref{colthmmatsv1} results the following statements:

(i) If {\rm $\text{rank}(\mathbf{U}^*) \in \mathcal{S}_{\mathbf{\Omega}}$}, then {\rm $r^* = \text{rank}(\mathbf{U}^*)$ }with probability one.

(ii) If {\rm $\text{rank}(\mathbf{\hat U}) \in \mathcal{S}_{\mathbf{\Omega}}$}, then {\rm $r^* \leq \text{rank}(\mathbf{\hat U})$} with probability one.

\end{corollary}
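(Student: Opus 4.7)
The plan is to derive both parts directly from Corollary \ref{colthmmatsv1nw}, which already packages the single-view matrix rank bound in terms of an arbitrary completion whose rank lies in $\mathcal{S}_{\mathbf{\Omega}}$. Since $\mathbf{U}^*$ and $\mathbf{\hat U}$ are both feasible for \eqref{optpro}, they are completions of the sampled matrix, so the hypothesis of Corollary \ref{colthmmatsv1nw} applies to each of them whenever its rank belongs to $\mathcal{S}_{\mathbf{\Omega}}$.

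For part (ii), I would argue in one line: by feasibility, $\mathbf{\hat U}_{\mathbf{\Omega}} = \mathbf{U}_{\mathbf{\Omega}}$, so $\mathbf{\hat U}$ is a rank-$\text{rank}(\mathbf{\hat U})$ completion of $\mathbf{U}$; under the hypothesis $\text{rank}(\mathbf{\hat U}) \in \mathcal{S}_{\mathbf{\Omega}}$, Corollary \ref{colthmmatsv1nw} (with $r = \text{rank}(\mathbf{\hat U})$) yields $r^* \leq \text{rank}(\mathbf{\hat U})$ with probability one.

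For part (i), I would first invoke (ii) with $\mathbf{\hat U} = \mathbf{U}^*$ to conclude $r^* \leq \text{rank}(\mathbf{U}^*)$ with probability one. To upgrade this to equality, I would use optimality of $\mathbf{U}^*$: the true matrix $\mathbf{U}$ is itself a rank-$r^*$ feasible solution to \eqref{optpro}, so by the minimality of $\text{rank}(\mathbf{U}^*)$ over all feasible completions, $\text{rank}(\mathbf{U}^*) \leq r^*$ deterministically. Combining the two inequalities gives $r^* = \text{rank}(\mathbf{U}^*)$ with probability one.

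There is no real obstacle here, since the corollary is essentially a repackaging of Corollary \ref{colthmmatsv1nw}; the only point worth checking carefully is that in part (i) the reverse inequality $\text{rank}(\mathbf{U}^*) \leq r^*$ comes from optimality and not from any probabilistic argument, so that nothing beyond feasibility of the true $\mathbf{U}$ is needed to close the sandwich.
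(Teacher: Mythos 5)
Your proof is correct and follows the same route the paper intends: part (ii) is a direct instantiation of Corollary \ref{colthmmatsv1nw} (itself a consequence of Corollary \ref{colthmmatsv1}) with the feasible completion $\mathbf{\hat U}$, and part (i) closes the sandwich via the purely deterministic observation that the true $\mathbf{U}$ is feasible for \eqref{optpro}, so optimality forces $\text{rank}(\mathbf{U}^*) \leq r^*$. The paper gives no explicit proof, but your argument supplies exactly the omitted details, including the correct separation of the probabilistic inequality from the deterministic one.
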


\begin{remark}
One challenge of applying Corollary \ref{colthmmatsvopt1} or any of the other obtained deterministic  results is the computation of $\mathcal{S}_{\mathbf{\Omega}}$, which involves exhaustive enumeration to check Assumption $B_r$. Next, for each number $r$, we provide a lower bound on the sampling probability in terms of $r$ that ensures $r \in \mathcal{S}_{\mathbf{\Omega}}$ with high probability. Consequently, we do not need to compute $\mathcal{S}_{\mathbf{\Omega}}$ but instead we can certify the above results with high probability.
\end{remark}

\subsubsection{Probabilistic Rank Analysis} 
\


The following lemma is a re-statement of Theorem $3$ in \cite{charact}, which is the probabilistic version of Lemma \ref{thmmat1}.

\begin{lemma}\label{danthmsingviwmat}
Suppose $r \leq \frac{n_1}{6}$ and that each {\bf column} of the sampled matrix is observed in at least $l$ entries, uniformly at random and independently across entries, where
\begin{eqnarray}\label{genmatrix}
l > \max\left\{12 \ \log \left( \frac{n_1}{\epsilon} \right) + 12, 2r\right\}. 
\end{eqnarray}
Also, assume that $ r(n_1-r) \leq n_2$. Then, with probability at least $1 - \epsilon$, $r \in \mathcal{S}_{\mathbf{\Omega}}$.
\end{lemma}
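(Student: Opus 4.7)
By Lemma \ref{thmmat1}, it suffices to show that Assumptions $A_r$ and $B_r$ both hold with probability at least $1-\epsilon$. Since the hypothesis guarantees at least $l > 2r \geq r$ observed entries in every column, Assumption $A_r$ holds deterministically, so the task reduces to a probabilistic verification of $B_r$.

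For $B_r$, my plan is to construct a candidate submatrix $\mathbf{\breve{\Omega}}_r^{\prime} \in \mathbb{R}^{n_1 \times K}$ with $K = r(n_1-r)$ columns and argue that, with high probability, it satisfies the inequality in \eqref{matfineqdan} for every choice of $K'$ of its columns. Because $r(n_1-r) \leq n_2$, I can afford to pick one constraint-column from each of $K$ distinct original columns of $\mathbf{U}$ (choosing, within each, one of the $l_i - r$ available constraint-columns). Across distinct original columns these picks are independent, while within a single column the $r+1$ nonzero rows consist of a shared ``base'' of $r$ rows together with one additional row.

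The core of the argument is then a union bound over potential violations. For each pair $(K',f)$ with $K' \in \{1,\ldots,K\}$ and each integer $f < (K'+r^2)/r$, and for each row subset $S \subset \{1,\ldots,n_1\}$ with $|S| = f$, I would bound the probability that some $K'$ of the $K$ selected constraint-columns have all their nonzero rows contained in $S$. By independence across original columns this probability factors into $K'$ single-column terms, each controlled via the hypergeometric distribution of $l$ uniformly sampled rows intersected with $S$. Summing over $\binom{K}{K'}$ choices of which constraint-columns misbehave and $\binom{n_1}{f}$ choices of $S$, the hypothesis $r \leq n_1/6$ keeps each per-column probability comfortably bounded away from $1$, and $l > 12\log(n_1/\epsilon)+12$ supplies enough samples per column for the total union bound to collapse to $\epsilon$.

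The main obstacle I anticipate is the hypergeometric-to-exponential conversion: constraint-columns from the \emph{same} original column share a base of $r$ rows, so a single-column ``bad'' event is not a clean Bernoulli trial but a hypergeometric selection whose tail must be turned into a product of the form $(|S|/n_1)^{\Theta(l)}$. Matching this decay against the combinatorial entropy $\binom{n_1}{f}\binom{K}{K'}$ in the union bound is precisely what forces the explicit constants in the threshold $l > 12\log(n_1/\epsilon) + 12$; a looser tail bound inflates these constants, while improving them seems to require replacing the union bound with a more delicate argument than the one sketched here.
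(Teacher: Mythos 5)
First, a point of reference: the paper does not prove Lemma \ref{danthmsingviwmat} at all --- it is imported verbatim as Theorem 3 of \cite{charact} --- so there is no in-paper proof to compare against; your attempt has to be judged against the argument in that reference. Your disposal of Assumption $A_r$ (it holds deterministically since $l > 2r \geq r$) is fine. The problem is your plan for Assumption $B_r$.

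The gap is that your union bound does not close. If you select one constraint-column from each of $K = r(n_1-r)$ distinct original columns, each selected constraint-column has exactly $r+1$ nonzero rows, and the probability that all of them fall in a fixed row set $S$ with $|S|=m$ is $\binom{m}{r+1}/\binom{n_1}{r+1} \le (m/n_1)^{r+1}$: the decay is governed by $r$, not by $l$, because only $r+1$ of the $l$ observed rows of that original column survive into the selected constraint-column. This cannot beat the combinatorial entropy. Concretely, take $r=1$: a violation of \eqref{matfineqdan} at level $K^{\prime}$ means $K^{\prime}$ constraint-columns (each with $2$ nonzero rows) whose supports lie in some $S$ with $|S| = m \le K^{\prime}$, and your bound gives at most $\binom{K}{K^{\prime}}\binom{n_1}{m}(m/n_1)^{2K^{\prime}} \le (e n_1/K^{\prime})^{2K^{\prime}} (K^{\prime}/n_1)^{2K^{\prime}} = e^{2K^{\prime}}$ at $m=K^{\prime}$, which does not tend to zero no matter how large $l$ is. The $(|S|/n_1)^{\Theta(l)}$ tail that you correctly identify as necessary is simply not available for the events you have defined; no sharper hypergeometric estimate will manufacture it, because the event genuinely involves only $r+1$ of the $l$ samples. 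The issue is not the constants in \eqref{genmatrix} --- the argument does not close at all.

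What the proof in \cite{charact} does differently is to insert a deterministic combinatorial reduction \emph{before} any probability is computed: it shows that if every subset of $t$ original columns of $\mathbf{\Omega}$ has sufficiently many nonzero rows (for every $t$), then a valid $\mathbf{\breve{\Omega}}_r^{\prime}$ satisfying \eqref{matfineqdan} can be assembled. Only after this reduction is the union bound applied, and now to events of the form ``all $l$ observed rows of each of $t$ whole columns of $\mathbf{\Omega}$ lie in $S$,'' whose per-column probability is $\binom{m}{l}/\binom{n_1}{l} \le (m/n_1)^{l}$. That decay, geometric in $l$, is what overwhelms $\binom{n_1}{m}\binom{n_2}{t}$ once $l$ exceeds the threshold in \eqref{genmatrix} and $r \le n_1/6$. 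This reduction lemma is the missing idea in your sketch, and it is precisely the step that lets the full sample count $l$ enter the exponent.
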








The following lemma is taken from \cite{ashraphijuo} and will be used to derive a lower bound on the sampling probability that leads to  the similar statement as Theorem \ref{thmmat2} with high probability.

\begin{lemma}\label{azumares}
Consider a vector with $n$ entries where each entry is observed with  probability  $p$  independently from the other entries. If $p > p^{\prime} = \frac{k}{n} + \frac{1}{\sqrt[4]{n}}$, then with probability  at least $\left(1-\exp(-\frac{\sqrt{n}}{2})\right)$, more than $k$ entries are observed.
\end{lemma}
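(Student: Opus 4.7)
The plan is to recognize the number of observed entries as a sum of independent Bernoulli random variables and then apply a standard concentration bound (Azuma--Hoeffding) to control its deviation from the mean.

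First I would let $X = \sum_{i=1}^{n} X_i$, where $X_i$ is the indicator that the $i$-th entry is observed, so $X_1,\dots,X_n$ are independent Bernoulli$(p)$ variables taking values in $\{0,1\}$. Then $\mathbb{E}[X] = np$. The hypothesis $p > \frac{k}{n} + \frac{1}{\sqrt[4]{n}}$ rearranges to $\mathbb{E}[X] > k + n^{3/4}$, which means the target threshold $k$ lies at least $n^{3/4}$ below the mean. Hence $\{X \le k\} \subseteq \{X \le \mathbb{E}[X] - n^{3/4}\}$.

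Next I would invoke the Azuma--Hoeffding / Hoeffding inequality for a sum of independent $[0,1]$-bounded random variables, which gives
\begin{equation*}
\Pr\bigl(X \le \mathbb{E}[X] - t\bigr) \le \exp\!\left(-\frac{2t^2}{n}\right).
\end{equation*}
Plugging in $t = n^{3/4}$ yields $\Pr(X \le k) \le \exp(-2\sqrt{n})$, which is comfortably smaller than $\exp(-\sqrt{n}/2)$ for every $n \ge 1$. Therefore with probability at least $1 - \exp(-\sqrt{n}/2)$ we have $X > k$, i.e.\ more than $k$ entries are observed.

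There is essentially no real obstacle here; the argument is a one-step concentration bound. The only thing to be careful about is aligning the gap between $\mathbb{E}[X]$ and $k$ with the form of the Hoeffding exponent, and verifying that the resulting tail bound $\exp(-2\sqrt{n})$ dominates the advertised $\exp(-\sqrt{n}/2)$, both of which are immediate from the algebraic reformulation of the hypothesis on $p$.
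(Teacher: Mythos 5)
Your proof is correct. The paper does not prove this lemma itself (it is imported from \cite{ashraphijuo}), but the stated tail $\exp(-\sqrt{n}/2)$ is exactly what the Azuma--Hoeffding bound $\Pr(X \le \mathbb{E}[X]-t) \le \exp(-t^2/(2n))$ gives at $t=n^{3/4}$, so your argument --- identifying the count of observed entries as a sum of independent Bernoullis, translating the hypothesis on $p$ into $\mathbb{E}[X] > k + n^{3/4}$, and applying a one-step concentration bound --- is essentially the intended one, with your sharper Hoeffding constant yielding $\exp(-2\sqrt{n})$, which indeed dominates the advertised bound.
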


The following proposition characterizes the probabilistic version of Theorem \ref{thmmat2}. 

\begin{proposition}\label{thmprobsvmsamppr}
Suppose $r \leq \frac{n_1}{6}$, $r(n_1 -r) \leq n_2$ and that each entry of the sampled matrix is observed  uniformly at random and independently across entries with probability $p$, where
\begin{eqnarray}\label{genmatrixrsamppro}
p > \frac{1}{n_1} \max\left\{12 \ \log \left( \frac{n_1}{\epsilon} \right) + 12, 2r\right\} + \frac{1}{\sqrt[4]{n_1}}. 
\end{eqnarray}
Then, with probability at least $\left( 1 - \epsilon \right)\left(1-\exp(-\frac{\sqrt{n_1}}{2})\right)^{n_2}$, we have $r \in \mathcal{S}_{\mathbf{\Omega}}$. 
%
%
%
\end{proposition}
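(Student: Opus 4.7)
The strategy is to chain Lemma \ref{azumares} with Lemma \ref{danthmsingviwmat}: the former converts the entrywise Bernoulli sampling probability $p$ into a guarantee on the number of observed entries per column, and the latter converts such a per-column sample-count lower bound into the conclusion $r \in \mathcal{S}_{\mathbf{\Omega}}$. The product form $(1-\epsilon)(1-\exp(-\sqrt{n_1}/2))^{n_2}$ of the claimed probability is precisely what one obtains by multiplying the success probability of the second step by the independent success probabilities of the first step applied to each of the $n_2$ columns.

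I would set $k := \max\{12\log(n_1/\epsilon)+12,\, 2r\}$, so that the hypothesis on $p$ reads $p > k/n_1 + 1/\sqrt[4]{n_1}$. Applying Lemma \ref{azumares} to a single column (a vector of $n_1$ i.i.d.\ Bernoulli$(p)$ entries) with threshold $k$, more than $k$ entries of that column are observed with probability at least $1 - \exp(-\sqrt{n_1}/2)$. Since the entries are sampled independently across the whole matrix, the events ``column $j$ has more than $k$ observations'' are mutually independent over $j=1,\dots,n_2$, and the intersection event $E$ that every column has more than $k$ observations holds with probability at least $(1-\exp(-\sqrt{n_1}/2))^{n_2}$. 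On $E$, every column has at least $l$ samples for some integer $l > k$, which together with the standing hypotheses $r \leq n_1/6$ and $r(n_1-r) \leq n_2$ fulfils the assumptions of Lemma \ref{danthmsingviwmat}, yielding $r \in \mathcal{S}_{\mathbf{\Omega}}$ with conditional probability at least $1-\epsilon$. Multiplying then gives the claimed bound.

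The one step that requires care, and the place I expect the main friction, is the reduction from the Bernoulli sampling model (conditioned on $E$) to the sampling model assumed in Lemma \ref{danthmsingviwmat}, where each column is assumed to contain exactly $l$ positions selected uniformly at random. To bridge the two, I would invoke monotonicity of the property $r \in \mathcal{S}_{\mathbf{\Omega}}$ with respect to $\mathbf{\Omega}$: Assumption $A_r$ is clearly preserved under adding observations, and if a witnessing submatrix $\mathbf{\breve{\Omega}}_r^{\prime}$ in Assumption $B_r$ exists for some $\mathbf{\Omega}^{\prime} \subseteq \mathbf{\Omega}$, then the same submatrix persists in the constraint matrix of the larger pattern, so $B_r$ also lifts. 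Conditional on the per-column counts under Bernoulli sampling, the observed positions within each column are uniformly distributed over subsets of the appropriate size, so one may thin $\mathbf{\Omega}$ to a pattern $\mathbf{\Omega}^{\prime}$ with exactly $l$ uniformly sampled positions per column. Lemma \ref{danthmsingviwmat} guarantees $r \in \mathcal{S}_{\mathbf{\Omega}^{\prime}}$ with probability at least $1 - \epsilon$, and monotonicity then transfers the conclusion back to $\mathbf{\Omega}$ itself, completing the argument.
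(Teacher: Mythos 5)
Your proof follows essentially the same route as the paper's: apply Lemma \ref{azumares} column-by-column, use independence across the $n_2$ columns to obtain the factor $\left(1-\exp(-\frac{\sqrt{n_1}}{2})\right)^{n_2}$, and then invoke Lemma \ref{danthmsingviwmat} together with the standing hypotheses $r \leq \frac{n_1}{6}$ and $r(n_1-r)\leq n_2$. The only difference is that you explicitly justify the passage from Bernoulli sampling to the fixed-count uniform model of Lemma \ref{danthmsingviwmat} via monotonicity of Assumptions $A_r$ and $B_r$ under added observations, a step the paper leaves implicit; this is a correct and welcome refinement rather than a different argument.
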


\begin{proof}
Consider an arbitrary column of $\mathbf{U}$ and note that resulting from Lemma \ref{azumares} the number of observed entries at this column of $\mathbf{U}$  is greater than $\max\left\{12 \ \log \left( \frac{n_1}{\epsilon} \right) + 12, 2r\right\}$ with probability at least $\left(1-\exp(-\frac{\sqrt{n_1}}{2})\right)$. Therefore, the number of sampled entries at each column satisfies
\begin{eqnarray}\label{genmatrixr}
l > \max\left\{12 \ \log \left( \frac{n_1}{\epsilon} \right) + 12, 2r\right\}, 
\end{eqnarray}
with probability at least $\left(1-\exp(-\frac{\sqrt{n_1}}{2})\right)^{n_2}$. Thus, resulting from Lemma \ref{danthmsingviwmat} with probability at least \\ $\left( 1 - \epsilon \right)\left(1-\exp(-\frac{\sqrt{n_1}}{2})\right)^{n_2}$, we have $r \in \mathcal{S}_{\mathbf{\Omega}}$.
\end{proof}

Finally, we have the following probabilistic version of Corollary \ref{colthmmatsvopt1}.


\begin{corollary}\label{colthmmatsv1nwx}
Assume that {\rm $\text{rank}(\mathbf{U}^*) \leq \frac{n_1}{6}$} and {\rm $\text{rank}(\mathbf{U}^*)(n_1 -\text{rank}(\mathbf{U}^*)) \leq n_2$} and \eqref{genmatrixrsamppro} holds for {\rm $r=\text{rank}(\mathbf{U}^*)$}, where $\mathbf{U}^*$ denotes an optimal solution to the optimization problem \eqref{optpro}. Then, according to Proposition \ref{thmprobsvmsamppr} and Corollary \ref{colthmmatsvopt1}, with probability at least $\left( 1 - \epsilon \right)\left(1-\exp(-\frac{\sqrt{n_1}}{2})\right)^{n_2}$, {\rm $r^* = \text{rank}(\mathbf{U}^*)$}. Similarly, assume that {\rm $\text{rank}(\mathbf{\hat U}) \leq \frac{n_1}{6}$} and {\rm $\text{rank}(\mathbf{\hat U})(n_1 -\text{rank}(\mathbf{\hat U})) \leq n_2$} and \eqref{genmatrixrsamppro} holds for {\rm $r=\text{rank}(\mathbf{\hat U})$}, where $\mathbf{\hat U}$ denotes a suboptimal solution to the  optimization problem \eqref{optpro}. Then, with probability at least $\left( 1 - \epsilon \right)\left(1-\exp(-\frac{\sqrt{n_1}}{2})\right)^{n_2}$, {\rm $r^* \leq \text{rank}(\mathbf{\hat U})$}.
\end{corollary}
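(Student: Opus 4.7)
The proof is a direct combination of the two immediately preceding results, so the plan is essentially to chain them. First I would apply Proposition \ref{thmprobsvmsamppr} with $r = \text{rank}(\mathbf{U}^*)$ to obtain a high-probability membership statement for $\mathcal{S}_{\mathbf{\Omega}}$; then I would invoke Corollary \ref{colthmmatsvopt1}(i) to upgrade that membership to exact equality of the ranks. The analogous argument using part (ii) of the same corollary handles the suboptimal case.

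In detail, the three hypotheses imposed on $\mathbf{U}^*$ in the corollary --- the bound $\text{rank}(\mathbf{U}^*) \le n_1/6$, the product bound $\text{rank}(\mathbf{U}^*)(n_1 - \text{rank}(\mathbf{U}^*)) \le n_2$, and the sampling-probability bound \eqref{genmatrixrsamppro} evaluated at $r = \text{rank}(\mathbf{U}^*)$ --- are precisely the hypotheses of Proposition \ref{thmprobsvmsamppr} with $r$ set to $\text{rank}(\mathbf{U}^*)$. Applying that proposition therefore yields $\text{rank}(\mathbf{U}^*) \in \mathcal{S}_{\mathbf{\Omega}}$ with probability at least $(1-\epsilon)\left(1-\exp(-\sqrt{n_1}/2)\right)^{n_2}$ over the random sampling pattern $\mathbf{\Omega}$. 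Conditioned on this event, Corollary \ref{colthmmatsvopt1}(i) delivers $r^* = \text{rank}(\mathbf{U}^*)$ with probability one over the generic choice of $\mathbf{U}$ on the rank-$r^*$ manifold. Intersecting a full-measure event (coming from genericity) with an event of probability at least $(1-\epsilon)\left(1-\exp(-\sqrt{n_1}/2)\right)^{n_2}$ (coming from the sampling pattern) preserves the latter bound, giving the asserted inequality. The suboptimal case is word-for-word identical, except that Corollary \ref{colthmmatsvopt1}(ii) is invoked to conclude $r^* \le \text{rank}(\mathbf{\hat U})$ in place of equality.

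There is no real obstacle in this proof: no new combinatorial, algebraic, or probabilistic analysis is needed beyond what Proposition \ref{thmprobsvmsamppr} and Corollary \ref{colthmmatsvopt1} already provide. The only point deserving care is the bookkeeping of the two independent sources of randomness --- the sampling pattern and the generic realization of $\mathbf{U}$ --- to verify that the probability-one conclusion from genericity does not degrade the probabilistic bound inherited from the sampling analysis. Once this is noted, the statement of the corollary follows by direct substitution of $r = \text{rank}(\mathbf{U}^*)$ (resp.\ $r = \text{rank}(\mathbf{\hat U})$) into the previously established pipeline.
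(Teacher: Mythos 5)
Your proposal matches the paper's own argument exactly: the corollary is stated as an immediate consequence of chaining Proposition \ref{thmprobsvmsamppr} (to get $\text{rank}(\mathbf{U}^*) \in \mathcal{S}_{\mathbf{\Omega}}$ with the stated probability) with Corollary \ref{colthmmatsvopt1} (to convert membership into $r^* = \text{rank}(\mathbf{U}^*)$, resp.\ $r^* \leq \text{rank}(\mathbf{\hat U})$). Your additional remark about keeping the sampling-pattern randomness separate from the genericity of $\mathbf{U}$ is a sound clarification of bookkeeping the paper leaves implicit.
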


\subsubsection{Numerical Results}\label{simusec}
\

In Fig. \ref{fig1} and Fig. \ref{fig2}, the x-axis represents the sampling probability, and the y-axis denotes the value of $r$. The color scale represents the lower bound on the probability of event $r \in \mathcal{S}_{\mathbf{\Omega}}$. For example, as we can observe in Fig. \ref{fig1}, for any $r \in \{1,\dots,44\}$ we have $r \in \mathcal{S}_{\mathbf{\Omega}}$ with probability at least $0.6$ (approximately based on the color scale since the corresponding points are orange) given that $p =0.54$.

We consider the sampled matrix $\mathbf{U} \in \mathbb{R}^{300 \times 15000}$ and $\mathbf{U} \in \mathbb{R}^{1200 \times 240000}$ in Fig. \ref{fig1} and Fig. \ref{fig2}, respectively. In particular, for fixed values of sampling probability $p$ and $r$, we first find a ``small'' $\epsilon$ that \eqref{genmatrixrsamppro} holds by trial-and-error. Then, according to Proposition \ref{thmprobsvmsamppr}, we conclude that with probability at least $\left( 1 - \epsilon \right)\left(1-\exp(-\frac{\sqrt{n_1}}{2})\right)^{n_2}$,  $r \in \mathcal{S}_{\mathbf{\Omega}}$.

\begin{figure}[h]
	\centering
		{\includegraphics[width=12cm]{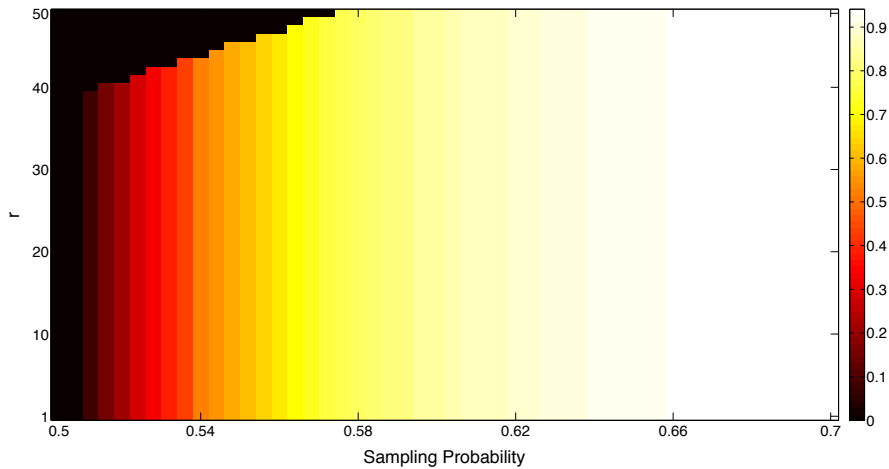}}
	\caption{ Probability of $r \in \mathcal{S}_{\mathbf{\Omega}}$ as a function of sampling probability for $\mathbf{U} \in \mathbb{R}^{300 \times 15000}$.}
	\label{fig1}\vspace{-4mm}
\end{figure}

\begin{figure}[h]
	\centering
		{\includegraphics[width=12cm]{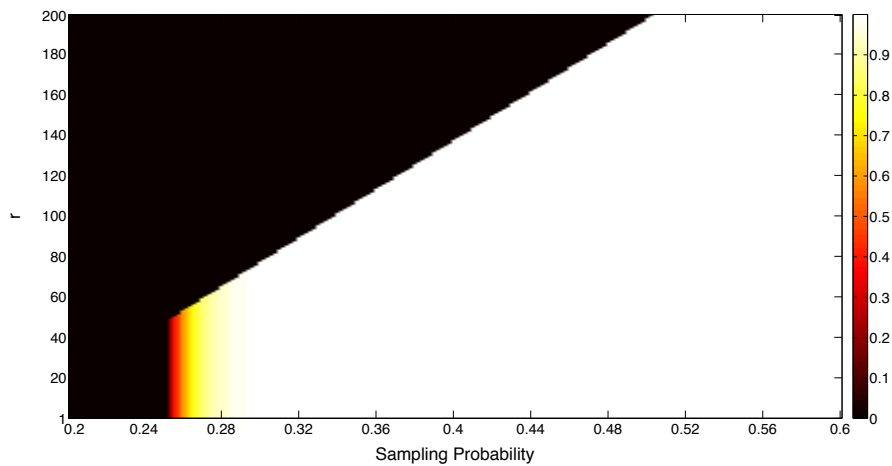}}
	\caption{ Probability of $r \in \mathcal{S}_{\mathbf{\Omega}}$ as a function of sampling probability for $\mathbf{U} \in \mathbb{R}^{1200 \times 240000}$.}
	\label{fig2}\vspace{-4mm}
\end{figure}

The purpose of Fig. \ref{fignewcomp} is to show how tight our proposed upper bounds on rank can be. Here, we first generate an $n_1 \times n_2$ random matrix of a given rank $r$ by multiplying a random (entries are drawn according to a uniform distribution on real numbers within an interval) $n_1 \times r$ matrix and $r \times n_2$ matrix. Then, each entry of the randomly generated matrix is sampled  uniformly at random and independently across entries with some sampling probability $p$. Afterwards, we apply the nuclear norm minimization method proposed in \cite{cant2} for matrix completion, where the non-convex objective function in \eqref{optpro} is relaxed by using nuclear norm, which is the convex hull of the rank function, as follows
\begin{align}\label{optprorelaxedmat}
& \ \ \ \ \ \  \ \ \ \ \ \  \   \ \text{minimize}_{\mathbf{U}^{\prime} \in \mathbb{R}^{n_1 \times n_2}} 
& &  \|\mathbf{U}^{\prime}\|_* \ \ \ \ \  \ \ \  \ \ \  \ \ \  \ \ \  \ \ \  \ \ \  \ \ \  \ \ \   \\
& \ \ \ \ \ \  \ \ \ \ \ \  \   \ \text{subject to}
& & \mathbf{U}^{\prime}_{\mathbf{\Omega}} = \mathbf{U}_{\mathbf{\Omega}}, \nonumber
\end{align}
where $\|\mathbf{U}^{\prime}\|_*$ denotes the nuclear norm of $\mathbf{U}^{\prime}$. Let $\mathbf{\hat U}^*$ denote an optimal solution to \eqref{optprorelaxedmat} and recall that $\mathbf{U}^*$ denotes an optimal solution to \eqref{optpro}. Since \eqref{optprorelaxedmat} is a convex relaxation to \eqref{optpro}, we conclude that $\mathbf{\hat U}^*$ is a suboptimal solution to \eqref{optpro}, and therefore $\text{rank}(\mathbf{ U}^*) \leq \text{rank}(\mathbf{ \hat U}^*)$. We used the Matlab program found online \cite{matlabcode} to solve \eqref{optprorelaxedmat}.

As an example, we generate a random matrix $\mathbf{U} \in \mathbb{R}^{300 \times 15000}$ (the same size as the matrix in Fig. \ref{fig1}) of rank $r$ as described above for $r \in \{1,\dots,50\}$ and some values of the sampling probability $p$. Then, we obtain the rank of the completion given by \eqref{optprorelaxedmat} and denote it by $r^{\prime}$. Due to the randomness of the sampled matrix, we repeat this procedure $5$ times. We calculate the ``gap'' $r^{\prime} -r $ in each of these $5$ runs and denote the maximum and minimum among these 5 numbers by $d_{\text{max}}$ and $d_{\text{min}}$, respectively. Hence, $d_{\text{max}}$ and $d_{\text{min}}$ represent the loosest (worst) and tightest (best) gaps between the rank obtained by \eqref{optprorelaxedmat} and rank of the original sampled matrix  over $5$ runs, respectively. In Fig. \ref{fignewcomp}, the maximum and minimum gaps are plotted as a function of rank of the matrix, for different sampling probabilities.


\begin{figure}[htbp]
\centering
\subfigure[$p = 0.46$.]{
	\includegraphics[width=5.9cm]{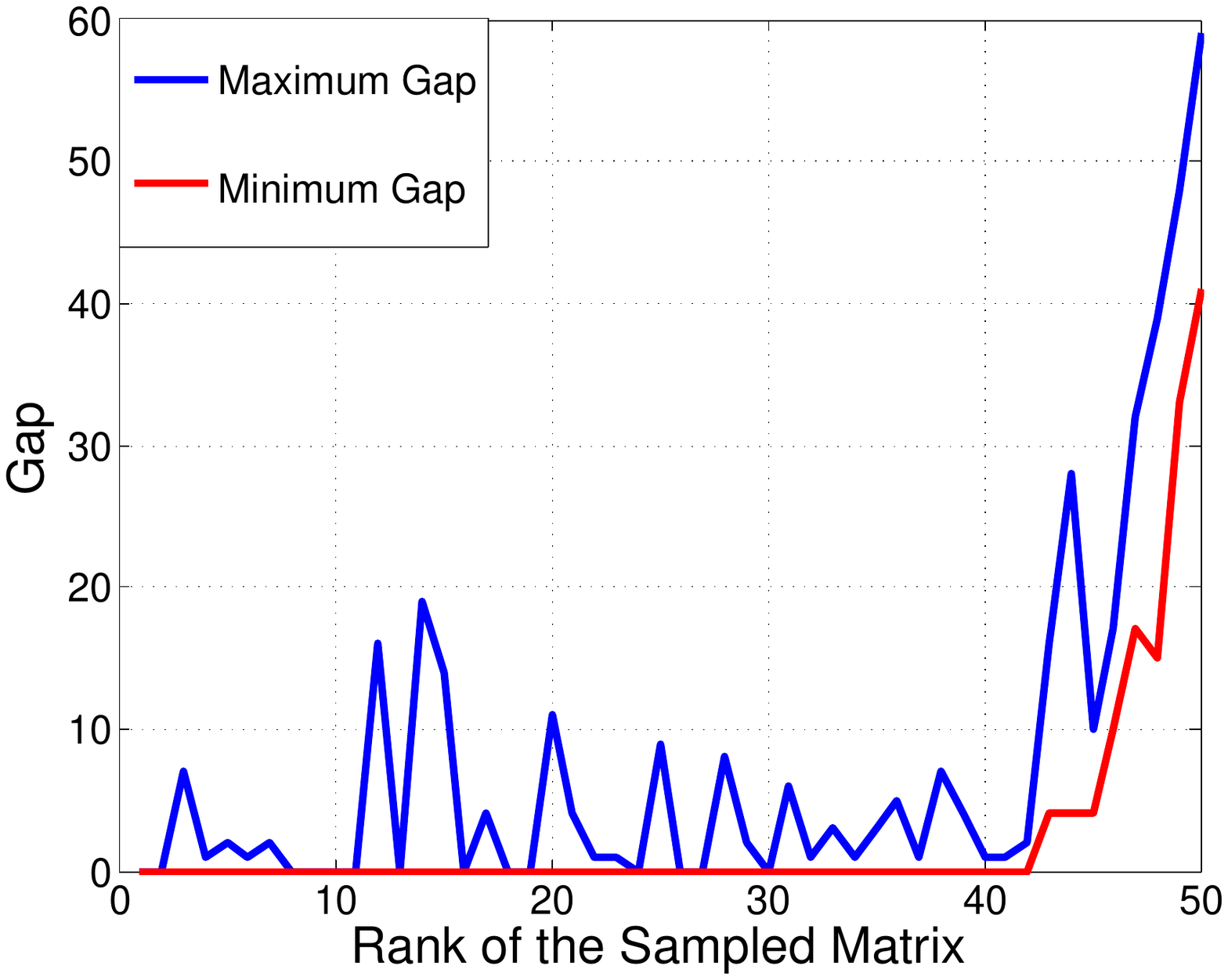}
\label{fig:subfigS4.1}
}
\subfigure[$p = 0.50$.]{
	\includegraphics[width=5.9cm]{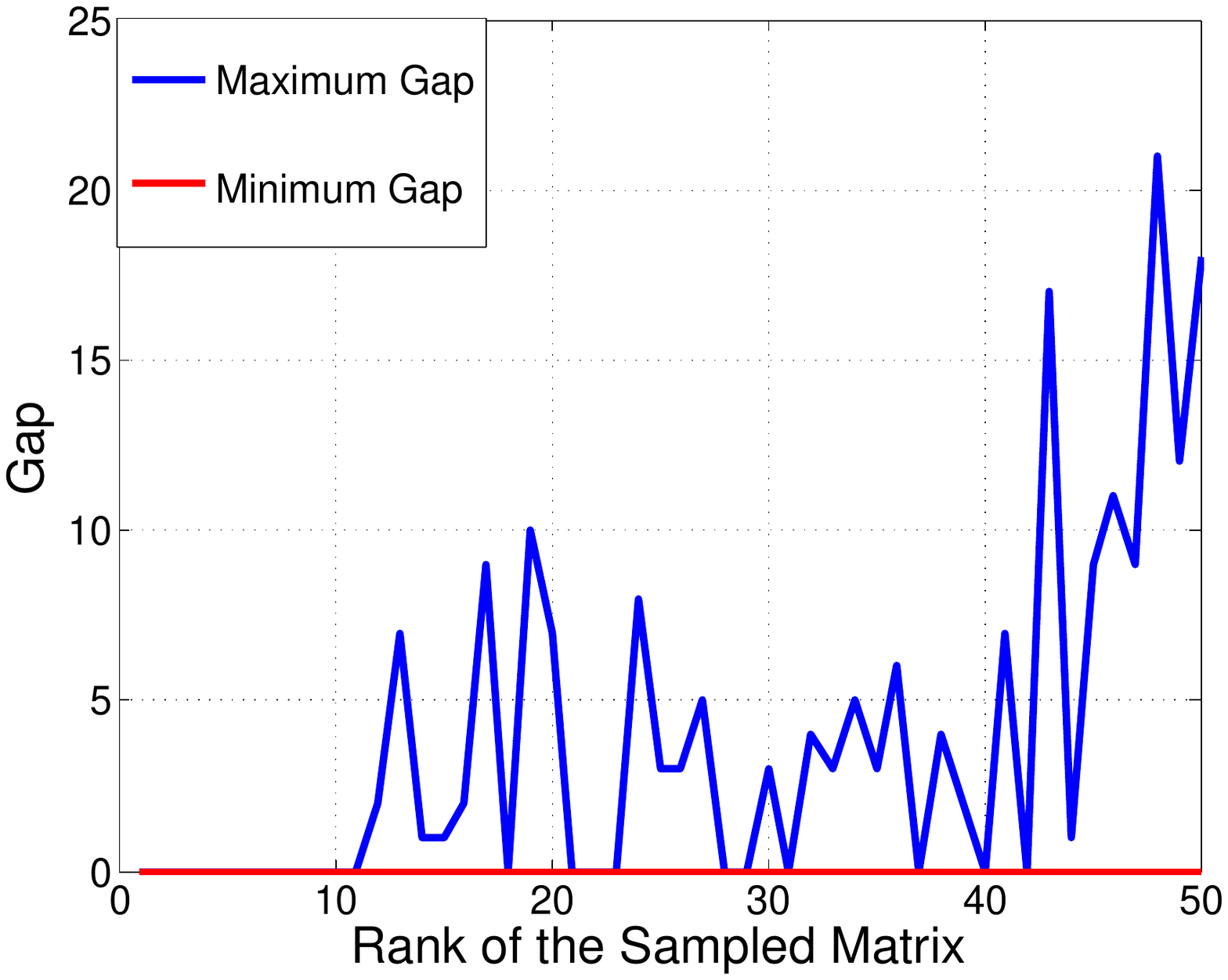}
\label{fig:subfigS4.2}
}
\subfigure[$p = 0.54$.]{
	\includegraphics[width=5.9cm]{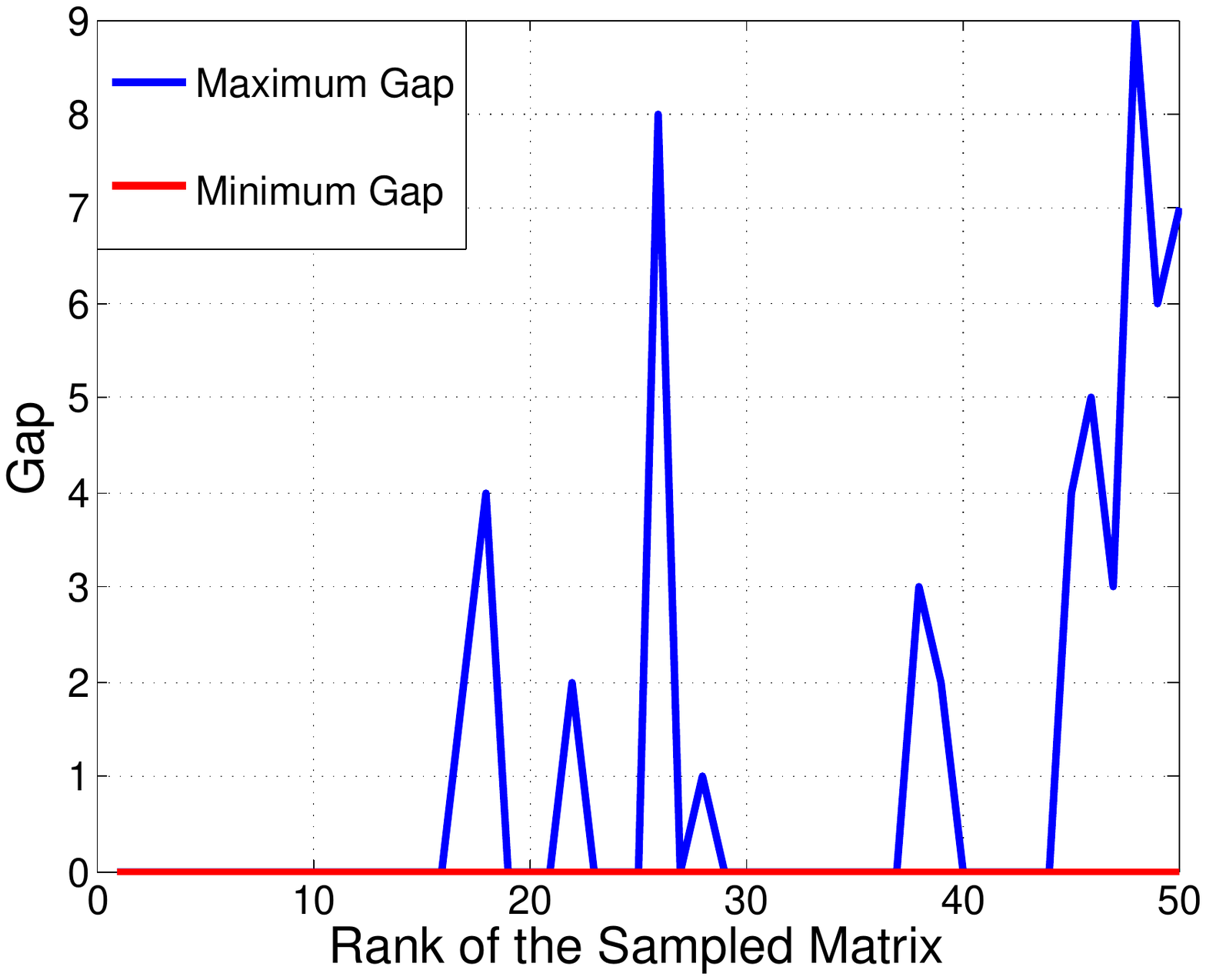}
\label{fig:subfigS4.3}
}
\subfigure[$p = 0.58$.]{
	\includegraphics[width=5.9cm]{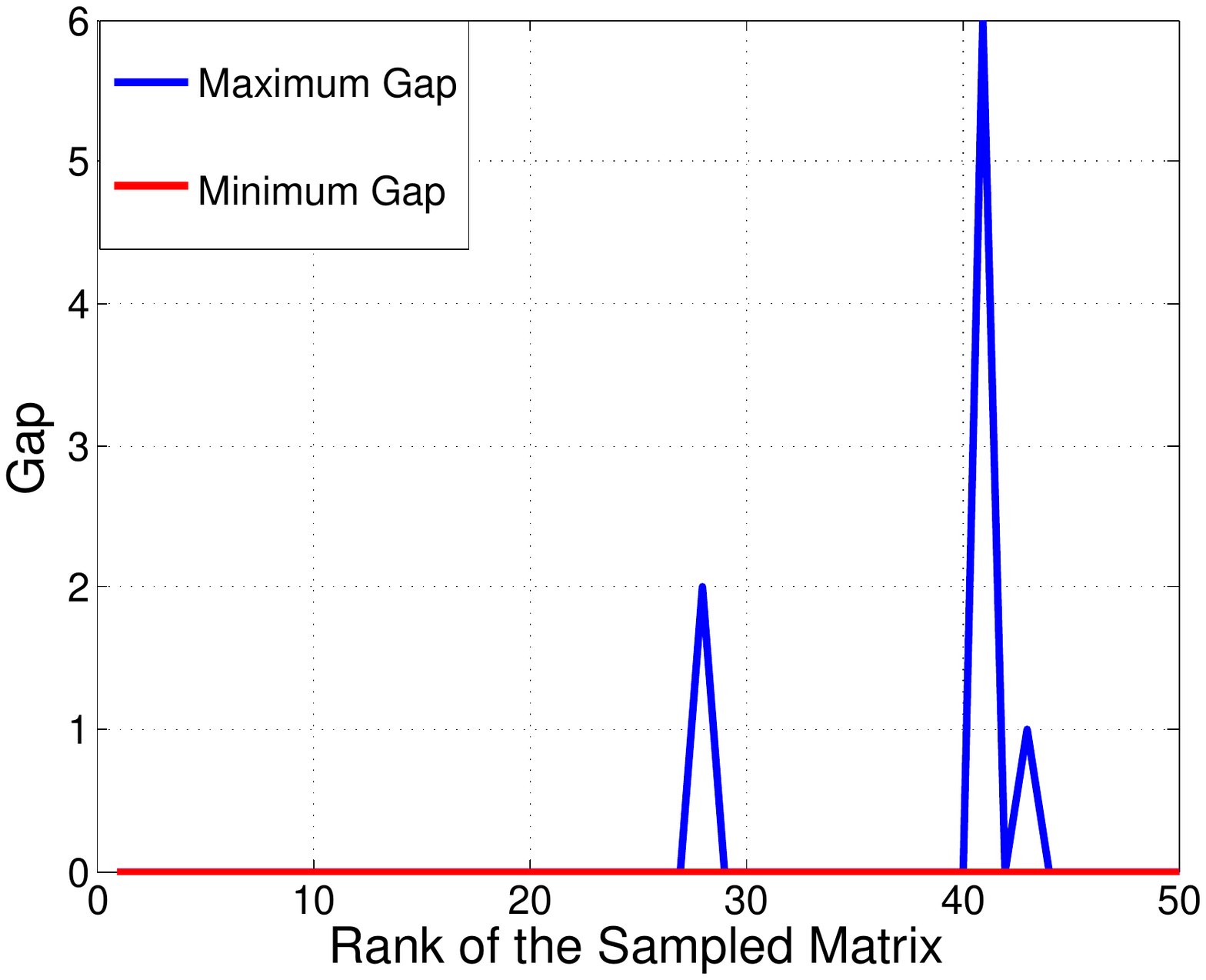}
\label{fig:subfigS4.4}
}
\caption[Optional caption for list of figures]{The gaps between the rank of the obtained matrix via \eqref{optprorelaxedmat} and that of the original sampled matrix.}
\label{fignewcomp}
\end{figure}

We have the following observations.

\begin{itemize}
\item According to Fig. \ref{fig1}, for $p= 0.54$ and $p=0.58$ we can ensure that the rank of any completion is an upper bound on the rank of the sampled matrix or $r^*$ with probability at least $0.6$ and $0.8$, respectively.
\item As we can observe in Fig. \ref{fig:subfigS4.1}-(d), the defined gap is always a nonnegative number, which is consistent with previous  observation that for $p= 0.54$ and $p=0.58$ we can certify that with high probability ($\geq 0.6$) the rank of any completion is an upper bound on the rank of the sampled matrix or $r^*$. 
\item For $p= 0.54$ and $p=0.58$ that we have theoretical results (as mentioned in the first observation) the gap obtained by \eqref{optprorelaxedmat} is very close to zero. This phenomenon (that we do not have a rigorous justification for) shows that as soon as we can certify our proposed theoretical results (i.e., as soon as the rank of a completion provides an upper bound on the rank of the sampled matrix or $r^*$) by increasing the sampling probability, the upper bound found through \eqref{optprorelaxedmat} becomes very tight; in some cases this bound is exactly equal to $r^*$ (red curves) and in some cases this bound is almost equal to $r^*$ (blue curves). However, these gaps are not small (specially blue curves) for $p= 0.46$ and $p=0.50$ and note that according to Fig. \ref{fig1}, for these values of $p$ we cannot guarantee the bounds on the value of rank hold with high probability.
\end{itemize}

\subsection{CP-Rank Tensor} \label{CPtensubs}

In this subsection, we assume that the sampled tensor $\mathcal{U} \in \mathbb{R}^{n_1 \times \dots \times n_d}$ is chosen generically from the manifold of tensors of rank $r^*=\text{rank}_{\text{CP}} (\mathcal{U})$, where $r^*$ is unknown.


{\bf Assumption $\mathcal{A}_r$}: Each row of the $d$-th matricization of the sampled tensor, i.e., $\mathbf{U}_{(d)}$ includes at least $r$ observed entries.


We construct a binary valued tensor called {\bf constraint tensor} ${\breve{\Omega}}_r$ based on ${\Omega}$ and a given number $r$. Consider any subtensor $\mathcal{Y} \in \mathbb{R}^{n_1 \times n_2 \times \cdots \times n_{d-1} \times 1 }$ of the tensor $\mathcal{U}$. The sampled tensor $\mathcal{U}$ includes $n_d$ subtensors that belong to  $\mathbb{R}^{n_1 \times n_2 \times \cdots \times n_{d-1} \times 1 }$ and let $\mathcal{Y}_i$ for $1 \leq i \leq n_d$ denote these $n_d$ subtensors. Define a binary valued tensor $\mathcal{\breve{Y}}_{i} \in \mathbb{R}^{n_1 \times n_2 \times \cdots \times n_{d-1} \times  k_i}$, where $k_i= N_{\Omega}(\mathcal{Y}_{i}) - r$ and its entries are described as the following. We can look at $\mathcal{\breve{Y}}_{i}$ as $k_i$ tensors each belongs to $\mathbb{R}^{n_1 \times n_2 \times \cdots \times n_{d-1} \times 1 }$. For each of the mentioned $k_i$ tensors in $\mathcal{\breve{Y}}_{i}$ we set the entries corresponding to $r$ of the observed entries equal to $1$. For each of the other $k_i$ observed entries, we pick one of the $k_i$ tensors of $\mathcal{\breve{Y}}_{i}$ and set its corresponding entry (the same location as that specific observed entry) equal to $1$ and set the rest of the entries equal to $0$. In the case that $k_i=0$ we simply ignore $\mathcal{\breve{Y}}_{i}$, i.e., $\mathcal{\breve{Y}}_{i} = \emptyset$

By putting together all $n_d$ tensors in dimension $d$, we construct a binary valued tensor ${\breve{\Omega}}_r \in \mathbb{R}^{n_1 \times n_2 \times \cdots \times n_{d-1} \times K}$, where $K = \sum_{i=1}^{n_d} k_i = N_{\Omega}(\mathcal{U}) - r n_d$ and call it the {\bf constraint tensor}. Observe that each subtensor of ${\breve{\Omega}}_r$ which belongs to $\mathbb{R}^{n_1 \times n_2 \times \cdots \times n_{d-1} \times 1 }$ includes exactly $r+1$ nonzero entries. In \cite{ashraphijuo4}, an example is given on the construction of ${\breve{\Omega}}_r$.

{\bf Assumption $\mathcal{B}_r$}: ${\breve{\Omega}}_r$ consists a subtensor ${\breve{\Omega}}_r^{\prime} \in \mathbb{R}^{n_1 \times n_2 \times \cdots \times n_{d-1} \times K}$ such that $K= r(\sum_{i=1}^{d-1} n_i) - r^2 - r(d-2)$ and for any $K^{\prime} \in \{1,2,\dots,K\}$ and any subtensor ${\breve{\Omega}}_{r}^{\prime \prime} \in \mathbb{R}^{n_1 \times n_2 \times \cdots \times n_{d-1} \times K^{\prime}}$ of ${\breve{\Omega}}_{r}^{\prime}$ we have
\begin{eqnarray}\label{matfineqdanmvCP}
r \left( \left( \sum_{i=1}^{d-1} f_i (\breve{\Omega}^{\prime \prime}_r) \right) -   \text{min} \left\{ \text{max} \left\{ f_1 (\breve{\Omega}^{\prime \prime}_r) , \dots ,f_{d-1}(\breve{\Omega}^{\prime \prime}_r) \right\} ,r \right\} - (d-2) \right) \geq K^{\prime},
\end{eqnarray}
where $f_i (\breve{\Omega}^{\prime \prime}_r)$ denotes the number of nonzero rows of the $i$-th matricization of $\breve{\Omega}^{\prime \prime}_r$.


The following lemma is a re-statement of Theorem $1$ in \cite{ashraphijuo4}.

\begin{lemma}\label{thmten1CP}
For almost every $\mathcal{U}$, there are only finitely many rank-$r^*$ completions of the sampled tensor if and only if Assumptions $\mathcal{A}_{r^*}$ and $\mathcal{B}_{r^*}$ hold.
\end{lemma}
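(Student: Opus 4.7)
My plan is to follow the same algebraic-geometric template used implicitly in Lemma \ref{sinvorderkam} for the single-view matrix case, but carried out in the CP setting. First, I would parameterize rank-$r^*$ completions by the vectors $\mathbf{a}_i^l$ appearing in \eqref{CPdecom}. Each observed entry $\mathcal{U}(\vec{x})$ then becomes a polynomial equation of the form $\sum_{l=1}^{r^*} \mathbf{a}_1^l(x_1)\cdots \mathbf{a}_d^l(x_d)$ in these $r^*(n_1+\cdots+n_d)$ unknowns. By Bernstein's theorem, finiteness of the number of completions is equivalent, with probability one, to the existence of a collection of algebraically independent polynomials among these whose cardinality equals the dimension of the rank-$r^*$ CP manifold.

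Next, I would fix a canonical normalization that removes the known scaling/permutation ambiguities of CP decomposition (this accounts for the $-r^2-r(d-2)$ correction appearing in Assumption $\mathcal{B}_r$) and, mirroring the matrix proof, solve explicitly for the components of $\mathbf{a}_d^l$ in terms of a chosen set of $r^*$ observed entries per row of the $d$-th matricization. This is exactly where Assumption $\mathcal{A}_{r^*}$ is needed: without $r^*$ observations per row of $\mathbf{U}_{(d)}$, there are not even enough equations to eliminate the last factor. After this elimination, the remaining observed entries produce polynomial equations involving only the vectors $\mathbf{a}_1^l,\dots,\mathbf{a}_{d-1}^l$. The construction of the constraint tensor $\breve{\Omega}_r$ is designed precisely to encode this reduced polynomial system: each of its $n_1\times\cdots\times n_{d-1}\times 1$ slices carries $r+1$ nonzero entries, i.e., one new equation plus the $r$ entries used for the elimination.

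The heart of the proof is then to show that Assumption $\mathcal{B}_{r^*}$ is equivalent to the maximality of the number of algebraically independent polynomials in the reduced system. I would do this via a Hall/matroid-type argument: the quantity $r\bigl((\sum_i f_i)-\min\{\max_i f_i,r\}-(d-2)\bigr)$ on the left of \eqref{matfineqdanmvCP} is exactly the dimension of the variety cut out by the variables that appear in the polynomials indexed by the chosen subtensor $\breve{\Omega}_r^{\prime\prime}$, once the scaling/permutation ambiguity has been accounted for; the inequality $\geq K'$ is then a deficiency condition of Hall's theorem type that guarantees these polynomials are algebraically independent. Conversely, if \eqref{matfineqdanmvCP} fails for some $K'$, one exhibits an overdetermined subsystem whose failure of independence forces a positive-dimensional family of solutions, hence infinitely many completions.

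The main obstacle I expect is the combinatorial-to-algebraic translation in the last step: justifying rigorously that $f_i(\breve{\Omega}_r^{\prime\prime})$ counts exactly the relevant component variables in the reduced polynomials and that the $\min\{\max_i f_i,r\}$ correction captures the correct redundancy from the normalization. Once that bookkeeping is done cleanly, the ``only if'' direction follows from a direct dimension count and Bernstein's theorem, and the ``if'' direction follows from combining Assumption $\mathcal{A}_{r^*}$ (to make the elimination step well-posed) with Assumption $\mathcal{B}_{r^*}$ (to supply enough algebraically independent reduced equations to match the manifold's dimension). Since this lemma is a restatement of Theorem 1 in \cite{ashraphijuo4}, I would cite that result for the full technical details rather than reprove it here.
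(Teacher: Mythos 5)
The paper provides no proof of this lemma at all: it is imported verbatim as Theorem 1 of \cite{ashraphijuo4}, so your closing move of deferring to that reference is precisely what the paper itself does. Your sketch of the underlying argument (parameterizing by the CP factors, eliminating $\mathbf{a}_d^l$ via the $r^*$ observations per row of $\mathbf{U}_{(d)}$ guaranteed by Assumption $\mathcal{A}_{r^*}$, encoding the reduced system in the constraint tensor, and matching the count of algebraically independent polynomials against the manifold dimension via Bernstein's theorem) is consistent with how the paper deploys the same machinery in its own proofs of Lemmas \ref{sinvorderkam} and \ref{tensCPorderkam}, though the paper gives no details here against which the bookkeeping in Assumption $\mathcal{B}_{r^*}$ can be checked.
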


\begin{definition}
Let $\mathcal{S}_{{\Omega}}$ denote the set of all natural numbers $r$ such that both Assumptions $\mathcal{A}_{r}$ and $\mathcal{B}_{r}$ hold.
\end{definition}

\begin{lemma}\label{tensCPorderkam}
There exists a number $r_{{\Omega}} $ such that $\mathcal{S}_{{\Omega}} = \{1,2,\dots,r_{{\Omega}} \}$.
\end{lemma}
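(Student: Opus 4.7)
The plan is to imitate the proof of Lemma \ref{sinvorderkam} by replacing the two-factor matrix parametrization with the $d$-factor CP parametrization, and the matrix manifold dimension with the generic dimension of the rank-$r$ CP manifold. It suffices to establish downward closure: whenever $r \in \mathcal{S}_\Omega$ and $r > 1$, also $r-1 \in \mathcal{S}_\Omega$. Iterating gives $\mathcal{S}_\Omega = \{1,2,\ldots,r_\Omega\}$ with $r_\Omega = \max \mathcal{S}_\Omega$.

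First I would observe that Assumption $\mathcal{A}_{r-1}$ is inherited for free from Assumption $\mathcal{A}_r$: a row of $\mathbf{U}_{(d)}$ containing at least $r$ observed entries certainly contains at least $r-1$. So the only possible obstruction to $r-1 \in \mathcal{S}_\Omega$ is Assumption $\mathcal{B}_{r-1}$. Arguing by contradiction, suppose $r-1 \notin \mathcal{S}_\Omega$; then $\mathcal{B}_{r-1}$ fails, and Lemma \ref{thmten1CP} guarantees that with probability one there are infinitely many rank-$(r-1)$ CP completions of $\mathcal{U}_\Omega$.

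Next I would write down the polynomial system. Each observed entry $\mathcal{U}(\vec{x})$, together with \eqref{CPdecompolyeq}, gives a polynomial equation in the rank-$(r-1)$ factor variables $\mathbf{a}_i^l(x_i)$; denote this collection by $\mathcal{P}$. Because the system has infinitely many solutions, Bernstein's theorem (exactly as invoked in the proof of Lemma \ref{sinvorderkam}) forces the number of algebraically independent polynomials in $\mathcal{P}$ to be strictly less than the generic dimension of the rank-$(r-1)$ CP manifold, namely $(r-1)\left(\sum_{i=1}^d n_i - d + 1\right)$. Since this expression is monotonically increasing in $r$ in the non-defective regime addressed by \cite{ashraphijuo4}, the same count is also strictly less than the dimension of the rank-$r$ CP manifold. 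A rank-$(r-1)$ CP decomposition embeds in the rank-$r$ parameter space by padding with a zero rank-one summand, so the infinite family of rank-$(r-1)$ completions lifts to an infinite family in the rank-$r$ solution variety; Bernstein's theorem applied to the rank-$r$ polynomial system then produces infinitely many rank-$r$ completions with probability one. Lemma \ref{thmten1CP} therefore forces $\mathcal{B}_r$ to fail (since $\mathcal{A}_r$ holds), contradicting $r \in \mathcal{S}_\Omega$.

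The main obstacle I expect is the dimension-comparison step. In the matrix setting, the passage from infinitely many rank-$(r-1)$ completions to infinitely many rank-$r$ completions follows from the clean inequality $(r-1)(n_1+n_2-(r-1)) \leq r(n_1+n_2-r)$ between manifold dimensions, whereas CP manifolds can be defective and the expected dimension is not always attained. Care is required to restrict to the regime where the analysis of \cite{ashraphijuo4} and Lemma \ref{thmten1CP} apply, and to verify that the lifted polynomial system still has fewer algebraically independent polynomials than the rank-$r$ manifold dimension. The combinatorial manipulation of the constraint tensor $\breve{\Omega}_r$ itself is not the sticking point; the delicate part is the algebraic-geometric dimension counting, which is cleaner for matrices than for CP tensors.
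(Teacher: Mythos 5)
Your proposal is correct and follows essentially the same route as the paper, which simply notes that the proof of Lemma \ref{sinvorderkam} carries over verbatim once the matrix manifold dimension is replaced by the CP manifold dimension, the only property needed being that this dimension is increasing in $r$. The one small discrepancy is that you quote the expected dimension as $(r-1)\left(\sum_{i=1}^d n_i - d + 1\right)$ whereas the paper uses $r\left(\sum_{i=1}^{d} n_i\right) - r^2 - r(d-1)$ from \cite{ashraphijuo4}; since both expressions are increasing in $r$ in the relevant regime, this does not affect the argument.
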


\begin{proof}
The proof is similar to the proof of Lemma \ref{sinvorderkam} with the only difference that the dimension of the manifold of CP rank-$r$ tensors is $r(\sum_{i=1}^{d} n_i) - r^2 - r(d-1)$ \cite{ashraphijuo4}, which is an increasing function in $r$.
\end{proof}

The following theorem gives an upper bound on the unknown rank $r^*$.

\begin{theorem}\label{thmtenCPr2}
For almost every $\mathcal{U}$, with probability one, exactly one of the following statements holds

(i) $r^* \in \mathcal{S}_{{\Omega}} = \{1,2,\dots,r_{{\Omega}} \}${\rm ;}

(ii) For any arbitrary completion of the sampled tensor $\mathcal{U}$ of rank $r$, we have $r \notin \mathcal{S}_{{\Omega}}$.
\end{theorem}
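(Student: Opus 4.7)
The plan is to adapt the proof of Theorem \ref{thmmat2} to the CP setting, replacing the matrix factorization $\mathbf{U} = \mathbf{X} \mathbf{Y}$ by the CP decomposition of a putative rank-$r$ completion, and replacing the column-based argument by a slice-based argument on the $n_d$ subtensors $\mathcal{Y}_1, \dots, \mathcal{Y}_{n_d}$ used to build the constraint tensor $\breve{\Omega}_r$.

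As in the matrix case, I would split into two cases. If no rank-$r$ completion of $\mathcal{U}$ with $r \in \mathcal{S}_{\Omega}$ exists, statement (ii) holds, (i) fails, and we are done. Otherwise, I fix a rank-$r$ completion $\mathcal{U} = \sum_{l=1}^{r} \mathbf{a}_1^l \otimes \cdots \otimes \mathbf{a}_d^l$ with $r \in \mathcal{S}_{\Omega}$ (so $r \leq r_{\Omega}$) and argue by contradiction, assuming $r^* \notin \mathcal{S}_{\Omega}$; by Lemma \ref{tensCPorderkam} this forces $r^* > r_{\Omega} \geq r$.

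Next I would invoke Assumption $\mathcal{A}_{r_{\Omega}}$, which holds because $r_{\Omega} \in \mathcal{S}_{\Omega}$: each row of $\mathbf{U}_{(d)}$, equivalently each subtensor $\mathcal{Y}_j$, carries at least $r_{\Omega}$ observed entries, and the inequality in Assumption $\mathcal{B}_r$ together with $K = r(\sum_{i<d} n_i) - r^2 - r(d-2) > 0$ guarantees that at least one $\mathcal{Y}_j$ contains strictly more than $r$ observations. Freezing the factors $\mathbf{a}_i^l$ for $i = 1,\ldots,d-1$, every observed entry in $\mathcal{Y}_j$ becomes a degree-$1$ polynomial in the unknowns $\mathbf{a}_d^1(j), \dots, \mathbf{a}_d^r(j)$, with coefficients $\prod_{i=1}^{d-1} \mathbf{a}_i^l(x_i)$. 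By genericity of $\mathcal{U}$ on the rank-$r^*$ CP manifold, any $r$ such coefficient vectors drawn from distinct observed slots of $\mathcal{Y}_j$ are linearly independent, so $r$ of the observed entries uniquely determine the $\mathbf{a}_d^l(j)$, and the remaining observed entry $\mathcal{U}(\vec{x}_{r+1})$ in $\mathcal{Y}_j$ produces, exactly mirroring \eqref{polysingvmalindep}, the scalar relation
\begin{eqnarray}
\mathcal{U}(\vec{x}_{r+1}) = \sum_{l=1}^{r} t_l \, \mathcal{U}(\vec{x}_l),
\end{eqnarray}
where $t_1,\ldots,t_r$ are constants depending only on the frozen factors.

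To close the argument, this equation is a nontrivial polynomial constraint on the entries of $\mathcal{U}$; since $\mathcal{U}$ is generic on the rank-$r^*$ CP manifold with $r^* \geq r+1$ (whose dimension strictly exceeds that at rank $r$ by the monotonicity used in Lemma \ref{tensCPorderkam}), the constraint cuts out a proper subvariety and is therefore violated with probability one. Combined with Lemma \ref{thmten1CP}, which caps the number of rank-$r$ completions at finitely many under $\mathcal{A}_r$ and $\mathcal{B}_r$ and hence via Bernstein's theorem translates the relation into a forced algebraic condition, this contradicts the existence of the postulated rank-$r$ completion, establishing (i). The main obstacle I anticipate is the genericity step inside the slice: unlike the matrix case where the basis $\mathbf{X}$ can be chosen freely, here the coefficients $\prod_{i<d}\mathbf{a}_i^l(x_i)$ come from factors that solve a nonlinear system, so one must argue that generic $\mathcal{U}$ still yields a coefficient matrix whose $r \times r$ minors are nonzero on every slice of interest, which is exactly where the structure of $\breve{\Omega}_r$ combined with the finite-completability statement of Lemma \ref{thmten1CP} needs to be leveraged carefully.
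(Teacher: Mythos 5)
Your proposal follows essentially the same route as the paper's proof: the same reduction to showing that $r^* \notin \mathcal{S}_{\Omega}$ makes a rank-$r$ completion with $r \in \mathcal{S}_{\Omega}$ a probability-zero event, the same grouping of the first $d-1$ CP factors into rank-one tensors $\mathcal{V}_l$ with the mode-$d$ vectors as the linear unknowns, the same derivation of a linear relation $\mathcal{U}(\vec{x}_{r+1}) = \sum_{l=1}^{r} t_l\,\mathcal{U}(\vec{x}_l)$ among observed entries in a row of $\mathbf{U}_{(d)}$, and the same appeal to Lemma \ref{thmten1CP} plus genericity on the rank-$r^*$ manifold to rule it out. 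The genericity concern you flag at the end is real but is handled no more explicitly in the paper, which simply cites the analogy with Theorem \ref{thmmat2}.
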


\begin{proof}
Similar to the proof of Theorem \ref{thmmat2}, it suffices to show that the assumption $r^* \notin \mathcal{S}_{{\Omega}}$ results that there exists a completion of $\mathcal{U}$ of CP rank $r$, where  $r \in \mathcal{S}_{{\Omega}}$, with probability zero. Define $\mathcal{V} = (\mathcal{V}_1,\dots , \mathcal{V}_r)$ as the basis of the rank-$r$ CP decomposition of $\mathcal{U}$ as in \eqref{CPdecom}, where $\mathcal{V}_l = \mathbf{a}_{1}^{l} \otimes \mathbf{a}_{2}^{l} \otimes \dots \otimes \mathbf{a}_{d-1}^{l} \in \mathbb{R}^{n_1 \times \dots n_{d-1}}$ is a rank-$1$ tensor and $\mathbf{a}_{i}^{l}$ is defined in  \eqref{CPdecom} for $1 \leq l \leq r$ and $ 1 \leq i \leq d$. Define $\mathcal{Y} = (\mathbf{a}_{d}^{1},\dots,\mathbf{a}_{d}^{r})$ and $\mathcal{V} \otimes_d \mathcal{Y} = \sum_{l=1}^{r} \mathcal{V}_l \otimes \mathbf{a}_{d}^{l}$. Observe that $\mathcal{U} = \sum_{l=1}^{r} \mathcal{V}_l \otimes \mathbf{a}_{d}^{l} = \mathcal{V} \otimes_d \mathcal{Y}$.

Observe that each row of $\mathbf{U}_{(d)}$ includes at least $r_{{\Omega}}+1$ observed entries since Assumption $\mathcal{A}_{r_{{\Omega}}}$ holds. Moreover, the existence of a completion of the sampled tensor $\mathcal{U}$ of rank $r \in  \mathcal{S}_{{\Omega}}$ results in the existence of a basis $\mathcal{V} =  (\mathcal{V}_1,\dots , \mathcal{V}_r)$ such that there exists $\mathcal{Y} =  (\mathbf{a}_{d}^{1},\dots,\mathbf{a}_{d}^{r})$ and $\mathcal{U}_{\Omega} = \left( \mathcal{V} \otimes_d \mathcal{Y} \right)_{\Omega}$. As a result, given $\mathcal{V}$, each observed entry of $\mathcal{U}$ results in a degree-$1$ polynomial in terms of the entries of $\mathcal{Y}$ as 
\begin{eqnarray}\label{polytensorCPdecopor}
\mathcal{U}(\vec{x}) = \sum_{l=1}^{r} \mathcal{V}_l(x_1,\dots,x_{d-1}) \mathbf{a}_{d}^{l}(x_d).
\end{eqnarray}

Note that $r_{{\Omega}} \geq r$ and each row of $\mathbf{U}_{(d)}$ includes at least $r_{{\Omega}}+1 \geq r+1$ observed entries. Consider $r+1$ of the observed entries of the first row of $\mathbf{U}_{(d)}$ and we denote them by $\mathcal{U}(\vec{x}_{1})$, $\dots$, $\mathcal{U}(\vec{x}_{r+1})$, where the last component of the vector $\vec{x}_i$ is equal to one, $1 \leq i \leq r+1$. Similar to the proof of Theorem \ref{thmmat2}, genericity of $\mathcal{U}$ results in 
\begin{eqnarray}\label{polyCPtensolindep}
\mathcal{U}(\vec{x}_{r+1}) = \sum_{l=1}^{r} t_l \mathcal{U}(\vec{x}_{i}),
\end{eqnarray}
where $t_l$'s are constant scalars, $l=1,\dots,r$. On the other hand, according to Lemma \ref{thmten1CP} there exist at most finitely many completions of the sampled tensor of rank $r$. Therefore, there exist a completion of $\mathbf{U}$ of rank $r$ with probability zero. Moreover, an equation of the form of \eqref{polyCPtensolindep} holds with probability zero as $r^* \geq r+1$ and $\mathcal{U}$ is chosen generically from the manifold of tensors of rank-$r^*$. Therefore, there exists a completion of rank $r$ with probability zero.
\end{proof}

\begin{corollary}\label{colthmtensorCP1}
Consider an arbitrary number $r^{\prime} \in \mathcal{S}_{{\Omega}}$. Similar to Theorem \ref{thmtenCPr2}, it follows that with probability one, exactly one of the followings holds

(i) $r^* \in  \{1,2,\dots,r^{\prime} \}${\rm ;}

(ii) For any arbitrary completion of the sampled tensor $\mathcal{U}$ of rank $r$, we have $r \notin \{1,2,\dots,r^{\prime} \}$.
\end{corollary}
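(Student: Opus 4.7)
The plan is to reduce Corollary \ref{colthmtensorCP1} to the proof technique of Theorem \ref{thmtenCPr2}, using the key structural fact from Lemma \ref{tensCPorderkam} that $\mathcal{S}_\Omega$ is a down-set. Specifically, since $r' \in \mathcal{S}_\Omega = \{1,2,\dots,r_\Omega\}$, we have $r' \leq r_\Omega$ and hence $\{1,2,\dots,r'\} \subseteq \mathcal{S}_\Omega$. This means that every candidate completion rank $r \leq r'$ automatically satisfies $r \in \mathcal{S}_\Omega$, so the polynomial/genericity machinery developed for Theorem \ref{thmtenCPr2} applies verbatim at every such $r$.

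First I would argue that (i) and (ii) are mutually exclusive: if $r^* \in \{1,\dots,r'\}$, then $\mathcal{U}$ itself is a rank-$r^*$ completion of the sampled tensor with $r^* \in \{1,\dots,r'\}$, so (ii) fails. It therefore suffices to prove the "not(i) $\Rightarrow$ (ii) with probability one" direction.

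Assume $r^* > r'$, i.e., $r^* \notin \{1,\dots,r'\}$, and suppose for contradiction that there exists a completion of $\mathcal{U}$ of CP rank $r \leq r'$. Since $r \leq r' \leq r_\Omega$, we have $r \in \mathcal{S}_\Omega$, so Assumption $\mathcal{A}_r$ holds (in fact Assumption $\mathcal{A}_{r_\Omega}$ holds, giving at least $r_\Omega+1 \geq r+1$ observed entries in every row of $\mathbf{U}_{(d)}$), and by Lemma \ref{thmten1CP} there are only finitely many rank-$r$ completions. Mirroring the proof of Theorem \ref{thmtenCPr2}, the existence of a rank-$r$ completion gives a CP basis $\mathcal{V}=(\mathcal{V}_1,\dots,\mathcal{V}_r)$ with $\mathcal{U}_\Omega=(\mathcal{V}\otimes_d \mathcal{Y})_\Omega$ for some $\mathcal{Y}=(\mathbf{a}_d^1,\dots,\mathbf{a}_d^r)$, whence each observed entry produces a degree-$1$ polynomial in $\mathcal{Y}$ of the form \eqref{polytensorCPdecopor}. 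Fixing any row of $\mathbf{U}_{(d)}$, I would solve for the corresponding column of $\mathcal{Y}$ from $r$ of its observed entries, and then substitute into the $(r+1)$-th observed entry of that row to derive a linear relation of the form \eqref{polyCPtensolindep} among the true entries of $\mathcal{U}$. Because $r^* \geq r'+1 \geq r+1$ and $\mathcal{U}$ is drawn generically from the rank-$r^*$ CP manifold (whose dimension strictly exceeds that of the rank-$r$ manifold, by the formula cited in the proof of Lemma \ref{tensCPorderkam}), such an algebraic relation on the generic point $\mathcal{U}$ holds with probability zero; taking a union bound over the finitely many choices of which $r+1$ observed entries and which $r \in \{1,\dots,r'\}$ to use preserves the probability-zero conclusion.

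The main obstacle is essentially the same one dispatched inside Theorem \ref{thmtenCPr2}: verifying that the $r\times r$ system one inverts to recover the column of $\mathcal{Y}$ is almost surely nonsingular (so the "extra" equation is a genuine algebraic constraint on $\mathcal{U}$ rather than an identity inherited from the decomposition). Since $\mathcal{V}$ is chosen generically by the genericity of $\mathcal{U}$ and since $r \in \mathcal{S}_\Omega$ ensures enough observations per row, this is handled exactly as in Theorem \ref{thmtenCPr2}, and no new ingredient is required. The corollary then follows by combining this contradiction with the opening mutual-exclusivity remark.
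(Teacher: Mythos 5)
Your proposal is correct and follows essentially the same route the paper intends: it invokes Lemma \ref{tensCPorderkam} to get $\{1,\dots,r'\}\subseteq\mathcal{S}_{\Omega}$ and then reruns the genericity/finite-completability argument of Theorem \ref{thmtenCPr2} with $r'$ in place of $r_{\Omega}$, which is exactly what the paper's one-line ``Similar to Theorem \ref{thmtenCPr2}'' justification amounts to. No gaps; your explicit treatment of mutual exclusivity and of the linear relation \eqref{polyCPtensolindep} simply fills in details the paper leaves implicit.
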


\begin{corollary}\label{colthmtensorCP1nw}
Assuming that there exists a CP rank-$r$ completion of the sampled tensor $\mathcal{U}$ such that $r \in \mathcal{S}_{{\Omega}}$, we conclude that with probability one $r^* \leq r$.
\end{corollary}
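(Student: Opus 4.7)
The plan is to derive Corollary \ref{colthmtensorCP1nw} as an immediate specialization of Corollary \ref{colthmtensorCP1}. Concretely, I would apply that corollary with the free parameter $r^{\prime}$ set equal to the rank $r$ of the assumed completion; this is legitimate because the hypothesis states $r \in \mathcal{S}_{\Omega}$, which is exactly the admissibility condition on $r^{\prime}$ required by Corollary \ref{colthmtensorCP1}.

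With this choice, Corollary \ref{colthmtensorCP1} tells us that, with probability one, exactly one of two alternatives holds: (i) $r^* \in \{1,2,\dots,r\}$, or (ii) no CP-rank completion of $\mathcal{U}$ has rank lying in $\{1,2,\dots,r\}$. The key observation is that the hypothesis of Corollary \ref{colthmtensorCP1nw} directly falsifies alternative (ii): by assumption there is a CP rank-$r$ completion of $\mathcal{U}$, and trivially $r \in \{1,2,\dots,r\}$. Therefore alternative (i) must hold, which is precisely the conclusion $r^* \leq r$ with probability one.

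There is essentially no technical obstacle here, since all the heavy lifting has been done in Theorem \ref{thmtenCPr2} (via the genericity and Lemma \ref{thmten1CP} finite-completability arguments) and packaged into Corollary \ref{colthmtensorCP1}. The only thing to be careful about is to note that the dichotomy in Corollary \ref{colthmtensorCP1} is ``exactly one of the following,'' so ruling out (ii) genuinely forces (i); I would state this explicitly in the proof to make the logical step transparent. Thus the proof consists of one short paragraph invoking Corollary \ref{colthmtensorCP1} with $r^{\prime} = r$ and observing that the assumed existence of a rank-$r$ completion excludes case (ii).
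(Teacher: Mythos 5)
Your proposal is correct and matches the paper's intended derivation: the paper states this as an immediate consequence of Corollary \ref{colthmtensorCP1} (mirroring how Corollary \ref{colthmmatsv1nw} follows from Corollary \ref{colthmmatsv1} in the single-view case), namely taking $r^{\prime}=r$ and noting that the assumed rank-$r$ completion rules out alternative (ii), forcing $r^*\leq r$.
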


\begin{corollary}\label{colthmtensorCPopt1}
Let $\mathcal{U}^*$ denote an optimal solution to the following NP-hard optimization problem
{\rm
\begin{align}\label{optprotens}
& \ \ \ \ \ \  \ \ \ \ \ \  \   \ \text{minimize}_{\mathcal{U}^{\prime} \in \mathbb{R}^{n_1  \times \cdots \times n_d}}
& &  \text{rank}_{\text{CP}}(\mathcal{U}^{\prime}) \ \ \ \ \  \ \ \  \ \ \  \ \ \  \ \ \  \ \ \  \ \ \  \ \ \  \ \ \   \\
& \ \ \ \ \ \  \ \ \ \ \ \  \   \ \text{subject to}
& & \mathcal{U}^{\prime}_{{\Omega}} = \mathcal{U}_{{\Omega}}. \nonumber
\end{align}
}
Assume that {\rm $\text{rank}_{\text{CP}}(\mathcal{U}^*) \in \mathcal{S}_{{\Omega}}$}. Then, Corollary \ref{colthmtensorCP1nw} results that {\rm $r^* = \text{rank}_{\text{CP}}(\mathcal{U}^*)$} with probability one.
\end{corollary}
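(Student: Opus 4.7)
The plan is to establish the equality $r^* = \text{rank}_{\text{CP}}(\mathcal{U}^*)$ by proving the two matching inequalities, each obtained from a different source: one from the optimality of $\mathcal{U}^*$ and the other from Corollary \ref{colthmtensorCP1nw}.

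First, I would observe that the original (unknown) sampled tensor $\mathcal{U}$ is itself a feasible point of the optimization problem \eqref{optprotens}, since it trivially satisfies $\mathcal{U}_{\Omega} = \mathcal{U}_{\Omega}$, and its CP rank is $r^*$ by definition. Since $\mathcal{U}^*$ is an \emph{optimal} solution to the rank-minimization problem, its CP rank must not exceed that of any feasible solution, which immediately yields $\text{rank}_{\text{CP}}(\mathcal{U}^*) \leq r^*$. This direction is purely a consequence of the definition of optimality and requires no genericity argument.

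Next, I would invoke Corollary \ref{colthmtensorCP1nw} in the other direction. By hypothesis, $\text{rank}_{\text{CP}}(\mathcal{U}^*) \in \mathcal{S}_{\Omega}$, so $\mathcal{U}^*$ is a CP rank-$r$ completion of the sampled tensor with $r = \text{rank}_{\text{CP}}(\mathcal{U}^*) \in \mathcal{S}_{\Omega}$. Corollary \ref{colthmtensorCP1nw} then guarantees that, with probability one, $r^* \leq \text{rank}_{\text{CP}}(\mathcal{U}^*)$. Combining this with the inequality from the previous paragraph gives $r^* = \text{rank}_{\text{CP}}(\mathcal{U}^*)$ with probability one.

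I do not anticipate a genuine obstacle here, since the statement is essentially a two-line bookkeeping corollary that packages Corollary \ref{colthmtensorCP1nw} with the trivial optimality bound. The only subtlety worth spelling out explicitly in the write-up is that the probability-one event comes entirely from the application of Corollary \ref{colthmtensorCP1nw} (which in turn rests on the genericity of $\mathcal{U}$ and Theorem \ref{thmtenCPr2}), whereas the reverse inequality $\text{rank}_{\text{CP}}(\mathcal{U}^*) \leq r^*$ holds deterministically for every realization.
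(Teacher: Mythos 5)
Your proposal is correct and matches the paper's intent exactly: the paper states this corollary without a separate proof, treating it as the immediate combination of the trivial optimality bound $\text{rank}_{\text{CP}}(\mathcal{U}^*) \leq r^*$ (since $\mathcal{U}$ is feasible for \eqref{optprotens}) with the reverse inequality from Corollary \ref{colthmtensorCP1nw}. Your explicit note that the probability-one qualifier attaches only to the second inequality is a fair and accurate clarification.
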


The following lemma is Lemma $15$ in \cite{ashraphijuo4}, which is the probabilistic version of Lemma \ref{thmten1CP} in terms of the sampling probability.

\begin{lemma}\label{prsCPfinRD}
Assume that $n_1=n_2=\dots = n_d=n$, $d>2$, $ n > \max \{ 200, r(d-2)\}$ and $r \leq \frac{n}{6}$. Moreover, assume that the sampling probability satisfies
\begin{eqnarray}\label{prsCPbdfinRD}
p > \frac{1}{n^{d-2}} \max\left\{27 \ \log \left( \frac{n}{\epsilon} \right) + 9 \ \log \left( \frac{2r(d-2)}{\epsilon} \right) + 18, 6r \right\} + \frac{1}{\sqrt[4]{n^{d-2}}}.
\end{eqnarray}
Then, with probability at least $ (1- \epsilon) \left( 1-\exp(-\frac{\sqrt{n^{d-2}}}{2}) \right)^{n^2}$, we have $r \in \mathcal{S}_{\Omega}$.
\end{lemma}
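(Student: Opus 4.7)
The plan is to reproduce the two-step template of Proposition \ref{thmprobsvmsamppr} in the CP-tensor setting: a fixed-per-fiber combinatorial bound that certifies Assumptions $\mathcal{A}_r$ and $\mathcal{B}_r$, composed with a Bernoulli-to-fixed-count conversion supplied by Lemma \ref{azumares}. Because the statement is identified as Lemma 15 of \cite{ashraphijuo4}, the combinatorial half is imported wholesale from that reference.

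First, invoke the fixed-sample-size predecessor of Lemma \ref{thmten1CP} from \cite{ashraphijuo4}. Under the regime $n > \max\{200, r(d-2)\}$ and $r \leq n/6$, that result asserts that if every one of the $n^2$ length-$n^{d-2}$ fibers of $\Omega$ (grouped so that the resulting constraint tensor $\breve{\Omega}_r$ is amenable to the $f_i$-style counting in \eqref{matfineqdanmvCP}) contains at least
\[
l \;=\; \max\bigl\{27\log(n/\epsilon)+9\log(2r(d-2)/\epsilon)+18,\ 6r\bigr\}
\]
entries chosen uniformly at random, then with probability at least $1-\epsilon$ both Assumptions $\mathcal{A}_r$ and $\mathcal{B}_r$ hold, i.e., $r \in \mathcal{S}_\Omega$. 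This is the step that genuinely requires the $d$-dimensional generalization of the combinatorial arguments of \cite{charact}, in which one shows that the subtensor promised by Assumption $\mathcal{B}_r$ can be extracted from $\breve{\Omega}_r$ with the stated probability; it is also the main obstacle, and is the reason the lemma is cited rather than re-proved.

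Second, convert from per-fiber fixed-count sampling to per-entry Bernoulli$(p)$ sampling using Lemma \ref{azumares}. The hypothesis \eqref{prsCPbdfinRD} is precisely $p > l/n^{d-2}+1/\sqrt[4]{n^{d-2}}$, so for any single fiber (of length $n^{d-2}$) Lemma \ref{azumares} applied with parameters $n \leftarrow n^{d-2}$ and $k \leftarrow l$ yields that more than $l$ entries are observed in that fiber with probability at least $1-\exp(-\sqrt{n^{d-2}}/2)$. Since the $n^2$ fibers are pairwise disjoint sets of tensor entries and sampling is independent across entries, the corresponding $n^2$ events are mutually independent, so every fiber contains more than $l$ observations with probability at least $(1-\exp(-\sqrt{n^{d-2}}/2))^{n^2}$.

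Finally, compose the two bounds: on the intersection of (a) ``every fiber exceeds the threshold $l$'' and (b) ``given the threshold is met, $r \in \mathcal{S}_\Omega$'', the conclusion holds with probability at least $(1-\epsilon)(1-\exp(-\sqrt{n^{d-2}}/2))^{n^2}$, matching the claim. The Bernoulli-to-fixed-count envelope is a direct analogue of Proposition \ref{thmprobsvmsamppr}; the only non-routine ingredient is the fixed-count combinatorial step cited from \cite{ashraphijuo4}.
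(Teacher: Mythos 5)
The paper offers no proof of this lemma at all: it is imported verbatim as Lemma~15 of \cite{ashraphijuo4}, so there is nothing in the paper to compare your argument against step by step. Your reconstruction is nonetheless sound and matches the template the paper does use for its own probabilistic statements (Proposition~\ref{thmprobsvmsamppr}): a fixed-count combinatorial certificate for Assumptions $\mathcal{A}_r$ and $\mathcal{B}_r$, which you correctly identify as the non-routine ingredient that must be taken from \cite{ashraphijuo4}, composed with the Bernoulli-to-fixed-count conversion of Lemma~\ref{azumares} applied independently to the $n^2$ disjoint groups of $n^{d-2}$ entries, which is exactly what the exponents $\sqrt{n^{d-2}}$ and $n^2$ in the stated probability reflect.
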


The following corollary is the probabilistic version of Corollaries \ref{colthmtensorCP1nw} and \ref{colthmtensorCPopt1}.

\begin{corollary}\label{colthmtensorCP1nwtq}
Assuming that there exists a CP rank-$r$ completion of the sampled tensor $\mathcal{U}$ such that the conditions given in Lemma \ref{prsCPfinRD} hold, with the sampling probability satisfying \eqref{prsCPbdfinRD}, we conclude that with probability at least $ (1- \epsilon) \left( 1-\exp(-\frac{\sqrt{n^{d-2}}}{2}) \right)^{n^2}$ we have $r^* \leq r$. Therefore, given that \eqref{prsCPbdfinRD} holds for {\rm $r=\text{rank}(\mathbf{U}^*)$} and $\mathbf{U}^*$ denotes an optimal solution to the optimization problem \eqref{optprotens}, with probability at least $ (1- \epsilon) \left( 1-\exp(-\frac{\sqrt{n^{d-2}}}{2}) \right)^{n^2}$ we have {\rm $r^* = \text{rank}(\mathbf{U}^*)$}.
\end{corollary}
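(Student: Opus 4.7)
The plan is to prove Corollary \ref{colthmtensorCP1nwtq} by simply chaining together two earlier results: the probabilistic sufficient condition for $r\in\mathcal{S}_\Omega$ in Lemma \ref{prsCPfinRD}, and the deterministic rank bound in Corollary \ref{colthmtensorCP1nw}. The hypotheses of the corollary are tailored so that the sampling probability bound \eqref{prsCPbdfinRD} matches exactly the bound invoked in Lemma \ref{prsCPfinRD}, so essentially no probabilistic re-derivation will be required.

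More precisely, for the first conclusion $r^* \leq r$, I will argue as follows. By hypothesis, the side conditions $n_1=\dots=n_d=n$, $d>2$, $n>\max\{200, r(d-2)\}$, $r\leq n/6$ and the sampling probability bound \eqref{prsCPbdfinRD} all hold for the given value of $r$. Therefore Lemma \ref{prsCPfinRD} applies and yields
\begin{equation*}
\Pr\bigl(r\in\mathcal{S}_{\Omega}\bigr)\ \geq\ (1-\epsilon)\left(1-\exp\!\left(-\tfrac{\sqrt{n^{d-2}}}{2}\right)\right)^{n^2}.
\end{equation*}
On the event $\{r\in\mathcal{S}_{\Omega}\}$, the assumption that there exists a CP rank-$r$ completion of $\mathcal{U}$ together with Corollary \ref{colthmtensorCP1nw} implies, with probability one over the choice of $\mathcal{U}$, that $r^*\leq r$. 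Combining these two gives the claimed probabilistic bound on the event $\{r^*\leq r\}$.

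For the second conclusion, I take $r=\text{rank}_{\text{CP}}(\mathbf{U}^*)$ with $\mathbf{U}^*$ an optimal solution to \eqref{optprotens}. The previous paragraph already yields $r^* \leq \text{rank}_{\text{CP}}(\mathbf{U}^*)$ with the stated probability. For the reverse inequality, I will simply observe that the true tensor $\mathcal{U}$ itself is feasible for \eqref{optprotens} (it trivially satisfies $\mathcal{U}_{\Omega}=\mathcal{U}_{\Omega}$) and has CP rank $r^*$, so by optimality of $\mathbf{U}^*$ we have $\text{rank}_{\text{CP}}(\mathbf{U}^*)\leq r^*$ deterministically. Intersecting the two inequalities produces $r^*=\text{rank}_{\text{CP}}(\mathbf{U}^*)$ on the same high-probability event.

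There is essentially no main obstacle; the only subtle point is bookkeeping of the two probabilistic events. The factor $(1-\epsilon)\bigl(1-\exp(-\sqrt{n^{d-2}}/2)\bigr)^{n^2}$ comes entirely from Lemma \ref{prsCPfinRD}, while the ``with probability one'' in Corollary \ref{colthmtensorCP1nw} refers to the generic choice of $\mathcal{U}$ from the rank-$r^*$ manifold and does not erode the sampling-pattern probability. Thus the two bounds compose without loss, which is what allows the corollary's probability bound to be stated verbatim from Lemma \ref{prsCPfinRD}.
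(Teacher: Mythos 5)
Your proposal is correct and matches the paper's (implicit) argument: the paper presents this corollary without a separate proof, precisely as the composition of Lemma \ref{prsCPfinRD} (giving $r\in\mathcal{S}_{\Omega}$ with the stated probability) with Corollaries \ref{colthmtensorCP1nw} and \ref{colthmtensorCPopt1}, where the reverse inequality $\text{rank}_{\text{CP}}(\mathbf{U}^*)\leq r^*$ likewise follows from feasibility of the true tensor in \eqref{optprotens}. Your remark that the sampling-pattern probability and the probability-one genericity statement live on different spaces and therefore compose without loss is exactly the right bookkeeping.
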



\section{Vector-Rank Cases}\label{rankvecmo}

\subsection{Multi-View Matrix}\label{multiview}

 The following assumptions will be used frequently in this subsection.

{\bf Assumption $A_{r_1,r_2}$}: Each column of $\mathbf{U}_1$ and $\mathbf{U}_2$ include at least $r_1$ and $r_2$ sampled entries, respectively.

We construct a binary valued matrix called {\bf constraint matrix} for multi-view matrix $\mathbf{U}$ as $\mathbf{\breve{\Omega}}_{r_1,r_2}=[\mathbf{\breve{\Omega}}_{r_1}|\mathbf{\breve{\Omega}}_{r_2}]$, where $\mathbf{\breve{\Omega}}_{r_1}$ and $\mathbf{\breve{\Omega}}_{r_2}$ represent the constraint matrix for single-view matrices $\mathbf{U}_1$ and $\mathbf{U}_2$ (defined in Section \ref{svmsubs}), respectively.

{\bf Assumption $B_{r_1,r_2,r}$}: $\mathbf{\breve{\Omega}}_{r_1,r_2}$ consists a submatrix $\mathbf{\breve{\Omega}}_{r_1,r_2}^{\prime} \in \mathbb{R}^{n \times K}$ such that $K= nr - r^2 - r_1^2 - r_2^2 + r(r_1+r_2)$ and for any $K^{\prime} \in \{1,2,\dots,K\}$ and any submatrix $\mathbf{\breve{\Omega}}_{r_1,r_2}^{\prime \prime} \in \mathbb{R}^{n \times K^{\prime}}$ of $\mathbf{\breve{\Omega}}_{r_1,r_2}^{\prime}$ we have
\begin{eqnarray}\label{matfineqdanmv}
(r-r_2) \left( f(\mathbf{\breve{\Omega}}_{r_1}^{\prime \prime}) - r_1 \right)^+ + (r-r_1) \left( f(\mathbf{\breve{\Omega}}_{r_2}^{\prime \prime}) - r_2 \right)^+ \nonumber \\ + (r_1 + r_2 -r) \left( f(\mathbf{\breve{\Omega}}_{r_1,r_2}^{\prime \prime}) - (r_1 + r_2 -r) \right)^+ \geq K^{\prime},
\end{eqnarray}
where $f(\mathbf{X})$ denotes the number of nonzero rows of $\mathbf{X}$ for any matrix $\mathbf{X}$ and $\mathbf{\breve{\Omega}}_{r_1,r_2}^{\prime \prime}= [\mathbf{\breve{\Omega}}_{r_1}^{\prime \prime}|\mathbf{\breve{\Omega}}_{r_2}^{\prime \prime}]$, and also $\mathbf{\breve{\Omega}}_{r_1}^{\prime \prime}$ and $\mathbf{\breve{\Omega}}_{r_2}^{\prime \prime}$ denote the columns of $\mathbf{\breve{\Omega}}_{r_1,r_2}^{\prime \prime}$ corresponding to $\mathbf{\breve{\Omega}}_{r_1}$ and $\mathbf{\breve{\Omega}}_{r_2}$, respectively.

The following lemma is a re-statement of Theorem $2$ in \cite{ashraphijuo2}. 

\begin{lemma}\label{thmmat1mv}
For almost every $\mathbf{U}$, there are only finitely many completions of the sampled multi-view data if and only if Assumptions $A_{r_1^*,r_2^*}$ and $B_{r_1^*,r_2^*,r^*}$ hold.
\end{lemma}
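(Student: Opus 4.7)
The plan is to adapt the finite-completability argument used for Lemma \ref{thmmat1} (the single-view case) to the multi-view structure. First, I would parameterize the manifold of $n \times (n_1+n_2)$ multi-view matrices with rank triple $(r_1^*, r_2^*, r^*)$ as follows. Pick a basis $\mathbf{X} \in \mathbb{R}^{n \times r^*}$ for the joint column space, then write $\mathbf{U}_1 = \mathbf{X}\mathbf{T}_1 \mathbf{Y}_1$ and $\mathbf{U}_2 = \mathbf{X}\mathbf{T}_2 \mathbf{Y}_2$, where $\mathbf{T}_i \in \mathbb{R}^{r^* \times r_i^*}$, $\mathbf{Y}_i \in \mathbb{R}^{r_i^* \times n_i}$, and the column spaces of $\mathbf{X}\mathbf{T}_1$ and $\mathbf{X}\mathbf{T}_2$ together span all of $\mathbb{R}^{r^*}$ so that $\dim(\text{col}(\mathbf{U}_1) \cap \text{col}(\mathbf{U}_2)) = r_1^* + r_2^* - r^*$. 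After fixing a canonical form to remove the gauge redundancy (identity blocks inside $\mathbf{X}, \mathbf{T}_1, \mathbf{T}_2$, and $\mathbf{Y}_i$), the free parameters in $\mathbf{Y}_1, \mathbf{Y}_2$ have count matching $K = n r^* - (r^*)^2 - (r_1^*)^2 - (r_2^*)^2 + r^*(r_1^* + r_2^*)$, which is exactly the number in Assumption $B_{r_1^*,r_2^*,r^*}$.

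Next, each observed entry becomes a polynomial in the remaining parameters, and by Bernstein's theorem there are finitely many completions (generically in $\mathbf{U}$) if and only if one can select a Jacobian-independent subset of these polynomials of size equal to the manifold dimension. The role of $\mathbf{\breve{\Omega}}_{r_1,r_2}$ is to encode exactly those observed entries that are ``excess'' over the minimum number needed to determine one column of $\mathbf{Y}_1$ or $\mathbf{Y}_2$: Assumption $A_{r_1^*,r_2^*}$ ensures this construction is well-defined, and each column of $\mathbf{\breve{\Omega}}_{r_1,r_2}$ then corresponds to one new polynomial constraint.

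The combinatorial heart of the proof is to show that a choice of $K$ such columns giving algebraically independent polynomials exists if and only if the Hall-type inequality \eqref{matfineqdanmv} holds for every subset. I would prove this by a matroid/marriage argument where the right-hand side of \eqref{matfineqdanmv} counts the ``demand'' (polynomials) and the left-hand side counts the ``supply'' of parameters unlocked by the nonzero rows: rows appearing only in $\mathbf{\breve{\Omega}}_{r_1}$ free parameters in $\mathbf{Y}_1$ at weight $(r^* - r_2^*)$ because of the joint subspace constraint, symmetrically rows only in $\mathbf{\breve{\Omega}}_{r_2}$ contribute $(r^* - r_1^*)$, and rows shared between the two contribute $(r_1^* + r_2^* - r^*)$, reflecting the dimension of the intersection subspace. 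The ``$-r_1$'', ``$-r_2$'', and ``$-(r_1+r_2-r)$'' offsets account for the identity blocks fixing the gauge.

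The hardest step, and the main obstacle, is the ``only if'' direction: showing that failure of Assumption $B_{r_1^*,r_2^*,r^*}$ on some submatrix of size $K'$ produces an infinitesimal deformation of the parameters that preserves every observed entry, hence yielding infinitely many completions. This requires exhibiting a nontrivial kernel vector of the Jacobian restricted to the ``deficient'' submatrix, which in turn forces a careful bookkeeping of how the three weight classes in \eqref{matfineqdanmv} interact when the supply is strictly less than the demand. Once this deformation is constructed, combined with the forward direction (which follows by a dimension count), Lemma \ref{thmmat1mv} follows.
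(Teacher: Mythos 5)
The paper does not prove this lemma: it is stated verbatim as a re-statement of Theorem~2 in \cite{ashraphijuo2} and is used as an imported black box, so there is no in-paper argument to compare yours against. Judged on its own terms, your sketch does follow the general architecture of that line of work (decompose, count dimensions, turn each excess sample into a polynomial constraint on the basis, and characterize algebraic independence by a Hall-type condition on the constraint matrix), but it is a plan rather than a proof. The entire mathematical content of the lemma is the equivalence between the family of subset inequalities \eqref{matfineqdanmv} and the existence of $K$ algebraically independent polynomials among the constraints --- in both directions, and ``for almost every $\mathbf{U}$'' rather than for a generic choice of basis. You explicitly defer both the matroid/marriage argument and the construction of the infinitesimal deformation in the deficient case; those are not routine, and the genericity transfer (from generic parameters to almost every $\mathbf{U}$ on the manifold) is not addressed at all.

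There is also a concrete bookkeeping error that would derail the supply/demand argument as written. You assert that $K = nr^* - (r^*)^2 - (r_1^*)^2 - (r_2^*)^2 + r^*(r_1^*+r_2^*)$ counts the free parameters in $\mathbf{Y}_1,\mathbf{Y}_2$. It does not: rewriting, $K = r^*(n-r^*) + r_1^*(r^*-r_1^*) + r_2^*(r^*-r_2^*)$, which is the dimension of the subspace structure (the joint column space together with the two view subspaces inside it), while the coefficient matrices $\mathbf{Y}_1,\mathbf{Y}_2$ carry $r_1^*n_1 + r_2^*n_2$ parameters --- precisely the remaining term in the manifold dimension $r^*n + r_1^*n_1 + r_2^*n_2 - (r^*)^2 - (r_1^*)^2 - (r_2^*)^2 + r^*(r_1^*+r_2^*)$ quoted in the proof of Lemma~\ref{multivminimorder}. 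The construction of $\mathbf{\breve{\Omega}}_{r_1,r_2}$ rests on using $r_i$ observed entries per column to eliminate the corresponding column of $\mathbf{Y}_i$ and treating the $l-r_i$ excess entries as constraints on the basis; Assumption~$B_{r_1^*,r_2^*,r^*}$ then demands $K$ independent such constraints on the basis, not on $\mathbf{Y}_1,\mathbf{Y}_2$. With the roles of the two parameter blocks swapped, the weights $(r-r_2)$, $(r-r_1)$, $(r_1+r_2-r)$ in \eqref{matfineqdanmv} cannot be matched to the correct ``supply,'' so the Hall-type argument you outline would not close.
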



\begin{definition}
Denote the rank vector $\underline{ r}=(r_1,r_2,r)$. Define the generalized inequality $\underline{ r}^{\prime} \preceq  \underline{ r}$ as the component-wise set of inequalities, e.g., $r_1^{\prime} \leq r_1$, $r_2^{\prime} \leq r_2$ and $r^{\prime} \leq r$.
\end{definition}

\begin{definition}
Let $\mathcal{S}_{\mathbf{\Omega}}$ denote the set of all $\underline{r}$ such that both Assumptions $A_{r_1,r_2}$ and $B_{r_1,r_2,r}$ hold.
\end{definition}


\begin{lemma}\label{multivminimorder}
Assume $\underline{ r} \in  \mathcal{S}_{\mathbf{\Omega}} $. Then, for any  $\underline{ r}^{\prime} \preceq  \underline{ r}$, we have $\underline{ r}^{\prime} \in  \mathcal{S}_{\mathbf{\Omega}} $.
\end{lemma}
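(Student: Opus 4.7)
The plan is to mirror the strategy used to prove Lemma \ref{sinvorderkam}, replacing the scalar-rank manifold dimension $r(n_1+n_2-r)$ by its multi-view analogue and reducing the vector-rank comparison to a component-wise inequality between manifold dimensions. A direct combinatorial verification of Assumption $B_{r_1',r_2',r'}$ from Assumption $B_{r_1,r_2,r}$ is unattractive, because the constraint matrix $\mathbf{\breve{\Omega}}_{r_1',r_2'}$ is generally \emph{larger} than $\mathbf{\breve{\Omega}}_{r_1,r_2}$ (fewer observed entries are absorbed into the ``basis'' columns), so there is no submatrix of the latter that one can reuse. Instead I would proceed by contradiction using Lemma \ref{thmmat1mv} together with the monotonicity of the multi-view manifold dimension.

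First, I would dispose of Assumption $A_{r_1',r_2'}$ immediately: since $r_i' \leq r_i$, each column of $\mathbf{U}_i$ already contains at least $r_i \geq r_i'$ observed entries. The rest of the argument concerns Assumption $B_{r_1',r_2',r'}$. Suppose, for contradiction, that it fails; then $\underline{r}' \notin \mathcal{S}_{\mathbf{\Omega}}$ while Assumption $A_{r_1',r_2'}$ does hold, and Lemma \ref{thmmat1mv} guarantees infinitely many completions of the sampled multi-view matrix of rank $\underline{r}'$. Parametrizing such completions by writing $\mathbf{U}_j = \mathbf{X}_j \mathbf{Y}_j$ for $j=1,2$, coupled through the constraint $\text{rank}([\mathbf{U}_1|\mathbf{U}_2]) = r'$, each observed entry yields a polynomial equation in these parameter variables, analogous to \eqref{polysingvma}. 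The existence of infinitely many solutions, combined with Bernstein's theorem, forces the number of algebraically independent equations in this system to be strictly smaller than $\dim(\mathcal{M}_{\underline{r}'})$, the dimension of the manifold of multi-view matrices with rank vector $\underline{r}'$.

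The critical step is then invoking monotonicity: $\dim(\mathcal{M}_{\underline{r}'}) \leq \dim(\mathcal{M}_{\underline{r}})$ whenever $\underline{r}' \preceq \underline{r}$ (under the natural size constraints $r_i \leq n_i$ and $\max\{r_1,r_2\}\leq r \leq r_1+r_2$). Once this is granted, the same polynomial system --- now viewed as constraints over the larger manifold $\mathcal{M}_{\underline{r}}$ --- still has fewer algebraically independent members than $\dim(\mathcal{M}_{\underline{r}})$, so a second application of Bernstein's theorem produces infinitely many rank-$\underline{r}$ completions with probability one. But $\underline{r} \in \mathcal{S}_{\mathbf{\Omega}}$ together with Lemma \ref{thmmat1mv} says there are only finitely many such completions, yielding the desired contradiction and establishing $\underline{r}' \in \mathcal{S}_{\mathbf{\Omega}}$.

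The main obstacle is the monotonicity claim $\dim(\mathcal{M}_{\underline{r}'}) \leq \dim(\mathcal{M}_{\underline{r}})$. The dimension formula, which can be read off from the parameter count in \cite{ashraphijuo2}, is a quadratic polynomial in $(r_1,r_2,r)$, and one has to check that incrementing any single component by one (leaving the others fixed) produces a nonnegative increment, given the operating regime $r_i\leq n_i$ and $\max\{r_1,r_2\}\leq r\leq r_1+r_2$. This is analogous to the observation $(r-1)(n_1+n_2-(r-1))\leq r(n_1+n_2-r)$ used in the scalar case, and although more bookkeeping is required in three coordinates, it is a routine algebraic verification. Once this monotonicity is in hand, the rest of the proof is essentially a word-for-word adaptation of Lemma \ref{sinvorderkam} to the multi-view setting.
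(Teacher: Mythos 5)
Your proposal follows essentially the same route as the paper's proof: reduce to finite completability via Lemma \ref{thmmat1mv}, establish that the multi-view manifold dimension $rn+r_1n_1+r_2n_2-r^2-r_1^2-r_2^2+r(r_1+r_2)$ is monotone in each component of $\underline{r}$ under $\max\{r_1,r_2\}\leq r\leq r_1+r_2$ and $r_i\leq n_i$, and then rerun the Bernstein-theorem contradiction from Lemma \ref{sinvorderkam}. The argument is correct, and your explicit handling of Assumption $A_{r_1',r_2'}$ and of why the constraint matrices cannot be compared directly are details the paper leaves implicit.
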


\begin{proof}
We consider the rank factorization of $\mathbf{U}$ as in \cite{ashraphijuo2} and similar to the single-view scenario in Lemma \ref{sinvorderkam} each observed entry results in a polynomial in terms of the entries of the components of the decomposition. Note that the dimension of the manifold corresponding to rank vector $\underline{ r}$ is equal to $rn+r_1n_1+r_2n_2 - r^2 - r_1^2 - r_2^2 + r(r_1+r_2)$ \cite{ashraphijuo2}, and also observe that the fact that $\max \{r_1,r_2\} \leq r \leq r_1 + r_2  \leq \min \{2n,n_1+n_2\}$ implies that reducing any of the values $r_1,r_2$, and $r$ reduces the value of $rn+r_1n_1+r_2n_2 - r^2 - r_1^2 - r_2^2 + r(r_1+r_2)$. Hence, the dimension of the manifold corresponding to  rank vector $\underline{ r}$ is larger than that for rank vector $\underline{ r}^{\prime}$, given $\underline{ r}^{\prime} \preceq  \underline{ r}$, and thus similar to the proof of Lemma \ref{sinvorderkam}, finite completability of data with $\underline{ r}$  results finite completability of data with  $\underline{ r}^{\prime}$ with probability one. Then, using Lemma \ref{thmmat1mv}, the proof is complete.
\end{proof}

The following theorem provides a relationship between the unknown rank vector $\underline{ r}^*$ and $\mathcal{S}_{\mathbf{\Omega}}$.

\begin{theorem}\label{thmmat2multivw}
For almost every $\mathbf{U}$, with probability one, exactly one of the following statements holds

(i) $\underline{ r}^* \in \mathcal{S}_{\mathbf{\Omega}}${\rm ;}

(ii) For any arbitrary completion of the sampled matrix $\mathbf{U}$ of rank vector  $\underline{ r}$, we have $\underline{ r} \notin \mathcal{S}_{\mathbf{\Omega}}$.
\end{theorem}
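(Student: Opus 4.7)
The plan is to parallel the proof of Theorem \ref{thmmat2}. Mutual exclusivity of (i) and (ii) is immediate, since if $\underline{r}^* \in \mathcal{S}_{\mathbf{\Omega}}$, the true matrix $\mathbf{U}$ is itself a rank-$\underline{r}^*$ completion of $\mathbf{U}_{\mathbf{\Omega}}$, violating (ii). So it suffices to show that whenever (ii) fails---i.e., some completion $\tilde{\mathbf{U}}$ of $\mathbf{U}_{\mathbf{\Omega}}$ has rank vector $\underline{r}=(r_1,r_2,r) \in \mathcal{S}_{\mathbf{\Omega}}$---one has $\underline{r}^* \in \mathcal{S}_{\mathbf{\Omega}}$ with probability one. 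By Lemma \ref{multivminimorder}, it is then enough to prove $\underline{r}^* \preceq \underline{r}$ componentwise, i.e., $r_1^* \leq r_1$, $r_2^* \leq r_2$, and $r^* \leq r$.

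For each componentwise inequality, I will use the parametric rank-$\underline{r}$ decomposition of multi-view matrices from \cite{ashraphijuo2}, $\mathbf{U}_1 = [\mathbf{X}|\mathbf{X}_1]\mathbf{Y}_1$ and $\mathbf{U}_2 = [\mathbf{X}|\mathbf{X}_2]\mathbf{Y}_2$, where the shared basis block $\mathbf{X}$ has $r_1+r_2-r$ columns. Given the bases, each observed entry becomes a degree-one polynomial in the entries of $\mathbf{Y}_1$ or $\mathbf{Y}_2$. For $r_1^* \leq r_1$: the structure of $\mathbf{\breve{\Omega}}_{r_1,r_2}$ combined with Assumption $B_{r_1,r_2,r}$ forces some column of $\mathbf{U}_1$ to carry at least $r_1+1$ observed entries. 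Solving $r_1$ of them for the corresponding column of $\mathbf{Y}_1$ and substituting into the remaining observation yields a linear relation among the observed entries with coefficients determined by $[\mathbf{X}|\mathbf{X}_1]$. If $r_1^* > r_1$, this relation is a non-trivial polynomial in the entries of a generic rank-$r_1^*$ matrix $\mathbf{U}_1$ and therefore holds with probability zero; combined with the finiteness of rank-$\underline{r}$ completions from Lemma \ref{thmmat1mv}, a union bound produces a contradiction, so $r_1^* \leq r_1$. The argument applied to $\mathbf{U}_2$ is symmetric and yields $r_2^* \leq r_2$.

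For $r^* \leq r$, I must exploit the coupling of the two views through the shared basis $\mathbf{X}$, since no single column of $\mathbf{U}_1$ or $\mathbf{U}_2$ alone encodes the full rank constraint on $[\mathbf{U}_1|\mathbf{U}_2]$. The joint constraints counted by \eqref{matfineqdanmv} in Assumption $B_{r_1,r_2,r}$ supply exactly what is needed: combining observations from columns of $\mathbf{U}_1$ and $\mathbf{U}_2$ linked through $\mathbf{X}$ yields a linear relation involving entries from both views. Using the dimension formula $rn+r_1 n_1 + r_2 n_2 - r^2 - r_1^2 - r_2^2 + r(r_1+r_2)$ of the rank-$\underline{r}$ manifold and its strict monotonicity in $r$ (as exploited in the proof of Lemma \ref{multivminimorder}), if $r^* > r$ this joint relation becomes a non-trivial polynomial on the generic rank-$\underline{r}^*$ manifold and therefore holds with probability zero. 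Invoking Lemma \ref{thmmat1mv} once more to close the union bound delivers $r^* \leq r$. The main obstacle will be this last step: unlike the componentwise bounds for $r_1^*$ and $r_2^*$, the obstruction to $r^* > r$ does not live on any single column of one view but must be extracted by combining observations across both views through the shared block $\mathbf{X}$, mirroring the three summands of \eqref{matfineqdanmv}.
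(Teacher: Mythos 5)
Your overall architecture coincides with the paper's: mutual exclusivity is dispatched the same way, and the reduction to showing $\underline{r}^* \preceq \underline{r}$ (equivalently, ruling out each reverse strict inequality with probability zero) is logically the same move the paper makes via the contrapositive of Lemma \ref{multivminimorder}, which tells it that $\underline{r}^* \notin \mathcal{S}_{\mathbf{\Omega}}$ forces at least one of $r_1 < r_1^*$, $r_2 < r_2^*$, $r < r^*$. Your treatment of $r_1^* \leq r_1$ and $r_2^* \leq r_2$ by running the single-view argument of Theorem \ref{thmmat2} on $\mathbf{U}_1$ and $\mathbf{U}_2$ separately is exactly what the paper does. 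Where you diverge is the third component: you anticipate that $r^* \leq r$ requires a genuinely joint relation extracted across both views through the shared basis block $\mathbf{X}$, mirroring the three summands of \eqref{matfineqdanmv}, and you flag this as the main obstacle. The paper sidesteps this entirely with a one-line reduction: a completion of $\mathbf{U}$ with rank vector $\underline{r}$ is in particular a rank-$r$ completion of the concatenated $n \times (n_1+n_2)$ matrix $[\mathbf{U}_1|\mathbf{U}_2]$, which is itself a generic single-view matrix of rank $r^*$, so the case $r < r^*$ is handled by the same single-view argument applied to the concatenation --- no cross-view coupling through $\mathbf{X}$ is ever constructed. Your route, if it could be carried out, would be more intrinsic to the multi-view structure, but as written it is underspecified (you never say how the joint linear relation is actually formed), whereas the paper's reduction is immediate; the only thing the paper glosses over, and which your instinct correctly senses, is that the single-view argument needs a column of the concatenated matrix with at least $r+1$ observed entries, which Assumption $A_{r_1,r_2}$ alone (guaranteeing only $r_1$ or $r_2$ samples per column) does not obviously supply --- that count has to come from the constraint-matrix structure in Assumption $B_{r_1,r_2,r}$, much as you note for the $r_1$ case. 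I would therefore replace your third step with the paper's concatenation reduction and keep your observation about where the extra observed entry comes from.
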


\begin{proof}
Similar to the proof of Theorem \ref{thmmat2}, suppose that there does not exist a completion of $\mathbf{U}$ of rank vector $\underline{ r}$ such that  $\underline{ r} \in \mathcal{S}_{\mathbf{\Omega}}$. Therefore, it is easily verified that statement (ii) holds and statement (i) does not hold. On the other hand, assume that there exists a completion of $\mathbf{U}$ of rank vector $\underline{r}$, where  $\underline{ r} \in \mathcal{S}_{\mathbf{\Omega}}$. Hence, statement (ii) does not hold and to complete the proof it suffices to show that with probability one, statement (i) holds. Similar to Theorem \ref{thmmat2}, we show that assuming $\underline{ r}^* \notin \mathcal{S}_{\mathbf{\Omega}}$, there exists a completion of $\mathbf{U}$ of rank vector $\underline{ r}$, where  $\underline{ r} \in \mathcal{S}_{\mathbf{\Omega}}$, with probability zero.


Since $\underline{ r}^* \notin \mathcal{S}_{\mathbf{\Omega}}$, according to Lemma \ref{multivminimorder}, for any $\underline{ r} \in \mathcal{S}_{\mathbf{\Omega}}$ at least one the following inequalities holds; $r_1 < r_1^*$, $r_2 < r_2^*$ and $r < r^*$. Note that assuming that there exists a completion of $\mathbf{U}_1$ of rank $r_1$ with probability zero results that there exists a completion of $\mathbf{U}$ of rank vector $\underline{ r}$ with probability zero and similar statement holds for $r_2$ and $r$. Hence, in any possible scenario ($r_1 < r_1^*$ or $r_2 < r_2^*$ or $r < r^*$) the similar proof as in Theorem \ref{thmmat2} (for single-view matrix) results that there exists a completion of $\mathbf{U}$ of rank vector $\underline{ r}$, where  $\underline{ r} \in \mathcal{S}_{\mathbf{\Omega}}$, with probability zero.
\end{proof}

\begin{corollary}\label{colthmmatmv1}
Consider a subset $\mathcal{S}_{\mathbf{\Omega}}^{\prime}$ of $\mathcal{S}_{\mathbf{\Omega}}$ such that for any two members of $\mathcal{S}_{\mathbf{\Omega}}$ that $\underline{ r}^{\prime} \preceq \underline{ r}^{\prime \prime}$ and $\underline{ r}^{\prime \prime} \in \mathcal{S}_{\mathbf{\Omega}}^{\prime}$ we have $\underline{ r}^{\prime } \in \mathcal{S}_{\mathbf{\Omega}}^{\prime}$. Then, with probability one, exactly one of the followings holds

(i) $\underline{ r}^* \in \mathcal{S}_{\mathbf{\Omega}}^{\prime} ${\rm ;}

(ii) For any arbitrary completion of $\mathbf{U}$ of rank vector $\underline{ r}$, we have $\underline{ r} \notin \mathcal{S}_{\mathbf{\Omega}}^{\prime} $.
\end{corollary}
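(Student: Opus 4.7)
The plan is to mirror the dichotomy established in Theorem \ref{thmmat2multivw}, using the downward-closure hypothesis on $\mathcal{S}_{\mathbf{\Omega}}^{\prime}$ as the structural replacement for Lemma \ref{multivminimorder}. First I would dispose of the easy direction: if no completion of $\mathbf{U}$ has rank vector lying in $\mathcal{S}_{\mathbf{\Omega}}^{\prime}$, then statement (ii) holds by definition, and statement (i) must fail since $\mathbf{U}$ itself is a completion of rank vector $\underline{r}^*$, which would force $\underline{r}^* \in \mathcal{S}_{\mathbf{\Omega}}^{\prime}$ and hence a completion in $\mathcal{S}_{\mathbf{\Omega}}^{\prime}$. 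Mutual exclusivity of (i) and (ii) is clear in this branch.

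Next, suppose there does exist a completion of $\mathbf{U}$ with rank vector $\underline{r} \in \mathcal{S}_{\mathbf{\Omega}}^{\prime}$. Statement (ii) then fails, and the task reduces to showing that $\underline{r}^* \in \mathcal{S}_{\mathbf{\Omega}}^{\prime}$ with probability one. I would argue by contradiction: assume $\underline{r}^* \notin \mathcal{S}_{\mathbf{\Omega}}^{\prime}$ and split on whether $\underline{r}^* \in \mathcal{S}_{\mathbf{\Omega}}$. If $\underline{r}^* \notin \mathcal{S}_{\mathbf{\Omega}}$, Theorem \ref{thmmat2multivw} already asserts that every completion $\underline{r}$ lies outside $\mathcal{S}_{\mathbf{\Omega}}$ with probability one, and since $\mathcal{S}_{\mathbf{\Omega}}^{\prime} \subseteq \mathcal{S}_{\mathbf{\Omega}}$ this contradicts the existence of the completion at hand. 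If instead $\underline{r}^* \in \mathcal{S}_{\mathbf{\Omega}}$, the downward-closure hypothesis applied inside $\mathcal{S}_{\mathbf{\Omega}}$ gives that any $\underline{r}^{\prime\prime} \in \mathcal{S}_{\mathbf{\Omega}}^{\prime}$ with $\underline{r}^* \preceq \underline{r}^{\prime\prime}$ would force $\underline{r}^* \in \mathcal{S}_{\mathbf{\Omega}}^{\prime}$, contradicting the standing assumption. Hence every $\underline{r} \in \mathcal{S}_{\mathbf{\Omega}}^{\prime}$ satisfies $\underline{r} \not\succeq \underline{r}^*$, i.e., at least one of $r_1 < r_1^*$, $r_2 < r_2^*$, $r < r^*$ must hold for the completion we have in hand.

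Finally, in this last sub-case I would invoke the generic polynomial argument used inside Theorem \ref{thmmat2multivw}: in whichever view the strict inequality holds (respectively $\mathbf{U}_1$, $\mathbf{U}_2$, or the full $\mathbf{U}$) the single-view analysis from Theorem \ref{thmmat2} shows that a completion of the corresponding view whose rank is strictly below the true rank exists only on a measure-zero subset of the generic manifold. This delivers the desired contradiction, so $\underline{r}^* \in \mathcal{S}_{\mathbf{\Omega}}^{\prime}$ with probability one.

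The main obstacle I anticipate is the branch where $\underline{r}^* \in \mathcal{S}_{\mathbf{\Omega}} \setminus \mathcal{S}_{\mathbf{\Omega}}^{\prime}$: here one cannot directly quote Theorem \ref{thmmat2multivw}, and the downward-closure property must be used precisely to extract at least one strict componentwise inequality on which to run the generic polynomial argument. The argument itself is not new -- each component reduces to the single-view case already handled -- but some care is needed to ensure that the assumed completion with $\underline{r} \in \mathcal{S}_{\mathbf{\Omega}}^{\prime}$ really does trigger the strict drop, which is exactly the content of the downward-closure hypothesis.
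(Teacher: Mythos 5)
Your proposal is correct and follows essentially the same route as the paper: the paper's own proof simply observes that the downward-closure property of Lemma \ref{multivminimorder} and the finiteness of completions (via $\mathcal{S}_{\mathbf{\Omega}}^{\prime} \subseteq \mathcal{S}_{\mathbf{\Omega}}$ and Lemma \ref{thmmat1mv}) carry over to $\mathcal{S}_{\mathbf{\Omega}}^{\prime}$, and then repeats the argument of Theorem \ref{thmmat2multivw}, which is exactly what your case analysis spells out in more detail. The only point worth making explicit in your Case B is that $\underline{r} \in \mathcal{S}_{\mathbf{\Omega}}^{\prime} \subseteq \mathcal{S}_{\mathbf{\Omega}}$ still guarantees at most finitely many completions of rank vector $\underline{r}$, which is needed to pass from the measure-zero linear relation to the measure-zero existence of a completion.
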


\begin{proof}
Note that the property in the statement of Lemma \ref{multivminimorder} holds for $\mathcal{S}_{\mathbf{\Omega}}^{\prime} $ as well as $\mathcal{S}_{\mathbf{\Omega}} $. Moreover, as $\mathcal{S}_{\mathbf{\Omega}}^{\prime} \subseteq \mathcal{S}_{\mathbf{\Omega}}$, for any $\underline{ r} \in \mathcal{S}_{\mathbf{\Omega}}^{\prime} $ there exists at most finitely many completions of $\mathbf{U}$ of rank vector $\underline{ r}$, and therefore the rest of the proof is the same as the proof of  Theorem \ref{thmmat2multivw}.
\end{proof}

\begin{corollary}\label{colthmmatmv1nw}
Assuming that there exists a completion of $\mathbf{U}$ with rank vector $\underline{ r}$ such that $\underline{ r}\in \mathcal{S}_{\mathbf{\Omega}}$, then with probability one $\underline{ r}^*  \preceq \underline{ r} $.
\end{corollary}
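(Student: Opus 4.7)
The plan is to reduce Corollary \ref{colthmmatmv1nw} directly to Corollary \ref{colthmmatmv1} by choosing a suitably downward-closed subset. Specifically, given the rank vector $\underline{r}$ from the hypothesis, define
\begin{equation*}
\mathcal{S}_{\mathbf{\Omega}}^{\prime} \;=\; \{\underline{r}^{\prime} \in \mathcal{S}_{\mathbf{\Omega}} : \underline{r}^{\prime} \preceq \underline{r}\}.
\end{equation*}
First I would verify that $\mathcal{S}_{\mathbf{\Omega}}^{\prime}$ satisfies the hypothesis of Corollary \ref{colthmmatmv1}: if $\underline{r}^{\prime \prime} \in \mathcal{S}_{\mathbf{\Omega}}^{\prime}$ and $\underline{r}^{\prime} \preceq \underline{r}^{\prime \prime}$ with $\underline{r}^{\prime} \in \mathcal{S}_{\mathbf{\Omega}}$, then transitivity of $\preceq$ gives $\underline{r}^{\prime} \preceq \underline{r}$, so $\underline{r}^{\prime} \in \mathcal{S}_{\mathbf{\Omega}}^{\prime}$. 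This is the condition required to invoke Corollary \ref{colthmmatmv1}.

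Next, I would note that by the hypothesis of the present corollary, $\underline{r}$ itself belongs to $\mathcal{S}_{\mathbf{\Omega}}$, and trivially $\underline{r} \preceq \underline{r}$, so $\underline{r} \in \mathcal{S}_{\mathbf{\Omega}}^{\prime}$. The existence of a completion of $\mathbf{U}$ of rank vector $\underline{r} \in \mathcal{S}_{\mathbf{\Omega}}^{\prime}$ then rules out statement (ii) of Corollary \ref{colthmmatmv1}. By the dichotomy of that corollary, statement (i) holds with probability one, i.e., $\underline{r}^* \in \mathcal{S}_{\mathbf{\Omega}}^{\prime}$. Unfolding the definition of $\mathcal{S}_{\mathbf{\Omega}}^{\prime}$ yields $\underline{r}^* \preceq \underline{r}$, which is the desired conclusion.

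The main obstacle is essentially bookkeeping rather than genuine mathematical content: the substantive work was carried out in Theorem \ref{thmmat2multivw} (which gives the dichotomy for the full set $\mathcal{S}_{\mathbf{\Omega}}$) and in Corollary \ref{colthmmatmv1} (which extends the dichotomy to any downward-closed subset). The only point that requires a bit of care is checking that the candidate set $\mathcal{S}_{\mathbf{\Omega}}^{\prime}$ is indeed downward-closed in the sense required by Corollary \ref{colthmmatmv1}, and this follows immediately from transitivity of the component-wise order $\preceq$. No further genericity arguments or manifold dimension computations are needed beyond those already established via Lemma \ref{multivminimorder}.
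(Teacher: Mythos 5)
Your proposal is correct and matches the paper's intended derivation: the corollary is stated as an immediate consequence of Corollary \ref{colthmmatmv1} applied to the downward-closed set $\{\underline{r}^{\prime} \in \mathcal{S}_{\mathbf{\Omega}} : \underline{r}^{\prime} \preceq \underline{r}\}$, with the given completion ruling out alternative (ii). Your explicit check of downward-closure via transitivity of $\preceq$ is exactly the small verification the paper leaves implicit.
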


The following lemma which is a re-statement of Theorem $3$ in \cite{ashraphijuo2} gives the number of samples per column that is needed to ensure that Assumptions $A_{r_1,r_2}$ and $B_{r_1,r_2,r}$ hold with high probability.

\begin{lemma}\label{danthmmultviwmat}
Suppose that the following inequalities hold 
\begin{eqnarray}
\frac{n}{6} &\geq & \max\{r_1,r_2,(r_1+r_2-r)\}, \label{as1} \\ 
n_1 &\geq & (r-r_2)(n-r_1), \label{as2} \\
n_2 &\geq & (r-r_1)(n-r_2), \label{as3} \\
n_1+n_2 &\geq & (r-r_2)(n-r_1) + (r-r_1)(n-r_2) \nonumber \\ &+ &   (r_1+r_2-r)(n-(r_1+r_2-r)). \label{as4}
\end{eqnarray}
Moreover assume that each {\bf column} of $\mathbf{U}$ is observed in at least $l$ entries, uniformly at random and independently across entries, where
\begin{eqnarray}\label{genmatrixmulti}
l > \max \left\{9 \ \log \left( \frac{n}{\epsilon} \right) + 3 \ \log \left( \frac{3 \max \left\{ r-r_2,r-r_1,r_1+r_2-r \right\} } {\epsilon}\right) + 6, 2r_1, 2r_2\right\}.
\end{eqnarray}
Then, with probability at least $1 - \epsilon$, $\underline{ r} \in \mathcal{S}_{\mathbf{\Omega}}$.
\end{lemma}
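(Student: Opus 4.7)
The plan is to verify that, under the sampling hypotheses \eqref{as1}--\eqref{genmatrixmulti}, both Assumption $A_{r_1,r_2}$ and Assumption $B_{r_1,r_2,r}$ hold with probability at least $1-\epsilon$; the conclusion $\underline{r}\in\mathcal{S}_{\mathbf{\Omega}}$ then follows directly from the definition of $\mathcal{S}_{\mathbf{\Omega}}$. The architecture parallels the proof of Lemma \ref{danthmsingviwmat} for the single-view case (Theorem $3$ in \cite{charact}), but with a three-region analysis matching the trichotomy of coefficients in \eqref{matfineqdanmv}.

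Step $1$. Assumption $A_{r_1,r_2}$ holds essentially for free: the hypothesis $l > \max\{2r_1,2r_2\}$ forces each column of $\mathbf{U}_1$ to carry at least $r_1$ samples and each column of $\mathbf{U}_2$ to carry at least $r_2$ samples, so Assumption $A_{r_1,r_2}$ is a deterministic consequence of the column-sampling rule and contributes nothing to the failure budget.

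Step $2$. For Assumption $B_{r_1,r_2,r}$, I would construct the target submatrix $\mathbf{\breve{\Omega}}_{r_1,r_2}^{\prime}$ of size $n\times K$ column-by-column, maintaining the invariant that \eqref{matfineqdanmv} is satisfied by the partial submatrix. At each insertion, the current state falls into one of three regimes: if $f(\mathbf{\breve{\Omega}}_{r_1}^{\prime\prime})\leq r_1$, a new column drawn from $\mathbf{\breve{\Omega}}_{r_1}$ must contribute a fresh nonzero row and pays the rate $(r-r_2)$; symmetrically a column from $\mathbf{\breve{\Omega}}_{r_2}$ pays $(r-r_1)$; once both single-view thresholds are exceeded, the joint term with coefficient $(r_1+r_2-r)$ takes over. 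Each per-column success probability is lower-bounded by a Chernoff/Azuma estimate of the same flavor as Lemma \ref{azumares}, using the column sample budget \eqref{genmatrixmulti}. The three logarithmic terms inside the $\max$ in \eqref{genmatrixmulti} correspond exactly to the cost of a union bound over $n$ rows and over the three regions (the factor $3\max\{r-r_2,r-r_1,r_1+r_2-r\}$), so the accumulated failure probability across all insertions is bounded by $\epsilon$.

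Step $3$. The main obstacle will be certifying that the greedy construction never stalls, i.e., that at every step an admissible column still exists. Here the aspect-ratio inequalities \eqref{as2}--\eqref{as4} are used crucially: they are precisely the feasibility conditions guaranteeing that the required numbers of nonzero rows $(r-r_2)(n-r_1)$, $(r-r_1)(n-r_2)$, and the joint expression fit within the column budgets $n_1$, $n_2$, and $n_1+n_2$, respectively. The bound $n/6\geq\max\{r_1,r_2,r_1+r_2-r\}$ from \eqref{as1} keeps all three ranks in the concentration-friendly regime where the Chernoff estimate used in the single-view proof carries over uniformly to each of the three regions. Combining Steps $1$--$3$ and invoking Lemma \ref{thmmat1mv} gives $\underline{r}\in\mathcal{S}_{\mathbf{\Omega}}$ with probability at least $1-\epsilon$, as claimed.
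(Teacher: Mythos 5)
The paper does not actually prove this lemma: it is imported verbatim as Theorem 3 of \cite{ashraphijuo2}, so there is no in-paper argument to compare your proposal against. Judged on its own terms, your Step 1 is correct (with $l>\max\{2r_1,2r_2\}$, Assumption $A_{r_1,r_2}$ is indeed a deterministic consequence of the per-column sampling rule and costs nothing in probability), and your overall architecture --- building the $K=(r-r_2)(n-r_1)+(r-r_1)(n-r_2)+(r_1+r_2-r)(n-(r_1+r_2-r))$ columns in three groups matching \eqref{as2}--\eqref{as4} --- is the right shape. But Step 2 contains a genuine gap.

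Assumption $B_{r_1,r_2,r}$ requires \eqref{matfineqdanmv} to hold for \emph{every} submatrix $\mathbf{\breve{\Omega}}_{r_1,r_2}^{\prime\prime}$ of the chosen $\mathbf{\breve{\Omega}}_{r_1,r_2}^{\prime}$, i.e., for every subset of its $K$ columns. Your greedy construction only maintains the invariant that \eqref{matfineqdanmv} holds for the partial submatrix, i.e., for prefixes of the insertion order; that does not imply the inequality for arbitrary subsets of columns, which is what the assumption demands and what Lemma \ref{thmmat1mv} needs. The proofs of this type (Theorem 3 of \cite{charact} and its multi-view analogue in \cite{ashraphijuo2}) instead choose the columns so that each group has a structural spread property (each new column in a group touching sufficiently many rows not already touched by the earlier columns of that group), and then verify \eqref{matfineqdanmv} for an arbitrary $K^{\prime}$-column submatrix by a separate counting argument over the groups. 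Relatedly, your regime description is inverted: the term $(r-r_2)\left(f(\mathbf{\breve{\Omega}}_{r_1}^{\prime\prime})-r_1\right)^+$ only begins to pay once $f(\mathbf{\breve{\Omega}}_{r_1}^{\prime\prime})$ exceeds $r_1$, not while it is at most $r_1$. Finally, the quantitative heart of the lemma --- that a column with $l$ uniformly placed samples covers enough new rows with failure probability small enough that a union bound over all columns and all three groups totals $\epsilon$, yielding exactly the constants $9$, $3$ and $6$ in \eqref{genmatrixmulti} --- is asserted by analogy to Lemma \ref{azumares} and the single-view case rather than derived, so the threshold \eqref{genmatrixmulti} is not actually established by your argument.
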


The following proposition is the probabilistic version of Theorem \ref{thmmat2multivw} in terms of the sampling probability instead of verifying Assumptions $A_{r_1,r_2}$ and $B_{r_1,r_2,r}$.

\begin{proposition}\label{thmprobsvmsampprjw}
Suppose that \eqref{as1}-\eqref{as4} hold for $\underline{ r} $ and that each entry of the sampled matrix is observed  uniformly at random and independently across entries with probability $p$, where
\begin{eqnarray}\label{genmatrixrsampprojw}
p > \frac{1}{n} \max \left\{9 \ \log \left( \frac{n}{\epsilon} \right) + 3 \ \log \left( \frac{3 \max \left\{ r-r_2,r-r_1,r_1+r_2-r \right\} } {\epsilon}\right) + 6, 2r_1, 2r_2\right\} + \frac{1}{\sqrt[4]{n}}. 
\end{eqnarray}
Then, with probability at least $\left( 1 - \epsilon \right)\left(1-\exp(-\frac{\sqrt{n}}{2})\right)^{n_1+n_2}$, we have $\underline{ r} \in \mathcal{S}_{\mathbf{\Omega}}$. 
\end{proposition}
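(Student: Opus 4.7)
The plan is to mirror the proof of Proposition \ref{thmprobsvmsamppr} (the single-view probabilistic statement), with the obvious modifications to handle the fact that a multi-view matrix $\mathbf{U} = [\mathbf{U}_1 | \mathbf{U}_2]$ has $n_1 + n_2$ columns of length $n$, rather than $n_2$ columns of length $n_1$. In essence, the proposition is a direct combination of Lemma \ref{azumares} (a per-column concentration bound stated in terms of sampling probability) and Lemma \ref{danthmmultviwmat} (a combinatorial sufficient condition stated in terms of per-column sample counts), so the argument reduces to translating between the two regimes.

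First I would fix an arbitrary column of $\mathbf{U}$ (either from $\mathbf{U}_1$ or from $\mathbf{U}_2$), viewed as a vector of length $n$ in which each entry is sampled independently with probability $p$. Applying Lemma \ref{azumares} with
\[
k = \max\!\left\{9\log\!\left(\tfrac{n}{\epsilon}\right) + 3\log\!\left(\tfrac{3\max\{r-r_2,\,r-r_1,\,r_1+r_2-r\}}{\epsilon}\right) + 6,\ 2r_1,\ 2r_2\right\},
\]
the hypothesis \eqref{genmatrixrsampprojw} is precisely the inequality $p > \frac{k}{n} + \frac{1}{\sqrt[4]{n}}$ required by the lemma, so the column has strictly more than $k$ observed entries with probability at least $1 - \exp(-\sqrt{n}/2)$.

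Next I would intersect this event over the $n_1 + n_2$ columns of $\mathbf{U}$. Since the columns are sampled independently, the probability that every column of $\mathbf{U}$ contains more than $k$ observed entries is at least $\bigl(1 - \exp(-\sqrt{n}/2)\bigr)^{n_1 + n_2}$. On this event, the per-column sampling requirement \eqref{genmatrixmulti} of Lemma \ref{danthmmultviwmat} is satisfied for both $\mathbf{U}_1$ and $\mathbf{U}_2$ simultaneously, and the other hypotheses \eqref{as1}--\eqref{as4} are assumed in the statement of the proposition. Hence Lemma \ref{danthmmultviwmat} applies and yields $\underline{r} \in \mathcal{S}_{\mathbf{\Omega}}$ with conditional probability at least $1 - \epsilon$.

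Finally I would combine the two bounds through a straightforward union/product argument: the probability of the joint event (all columns well-sampled, and Assumptions $A_{r_1,r_2}$ and $B_{r_1,r_2,r}$ holding) is at least $(1-\epsilon)\bigl(1-\exp(-\sqrt{n}/2)\bigr)^{n_1+n_2}$, yielding the claimed bound. There is no real obstacle here beyond bookkeeping: the only subtlety is ensuring that the single threshold $k$ used in Lemma \ref{azumares} dominates the quantity appearing in \eqref{genmatrixmulti} for \emph{both} $\mathbf{U}_1$ and $\mathbf{U}_2$ (which it does, since $k$ already incorporates $2r_1$ and $2r_2$), so the same Azuma application handles all $n_1+n_2$ columns uniformly.
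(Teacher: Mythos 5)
Your proposal is correct and follows essentially the same route as the paper, which simply states that the result follows by combining Lemma \ref{azumares} (per-column concentration) with Lemma \ref{danthmmultviwmat}, in the same way Proposition \ref{thmprobsvmsamppr} was derived. Your write-up is just a more explicit version of that argument, with the per-column threshold $k$ and the product over the $n_1+n_2$ independent columns spelled out.
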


\begin{proof}
The proposition is easy to verify using Lemma \ref{danthmmultviwmat} and Lemma \ref{danthmsingviwmat} (similar to the proof for Proposition \ref{thmprobsvmsamppr}).
\end{proof}





\begin{corollary}\label{colthmtCMv1nwtq}
Assuming that there exists a completion of $\mathbf{U}$ of rank vector $\underline{ r}$ such that \eqref{as1}-\eqref{as4} hold and the sampling probability satisfies \eqref{genmatrixrsampprojw}, then with probability at least $\left( 1 - \epsilon \right)\left(1-\exp(-\frac{\sqrt{n}}{2})\right)^{n_1+n_2}$ we have $\underline{ r}^* \preceq \underline{ r}$.
\end{corollary}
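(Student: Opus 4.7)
The plan is to obtain the claim as a direct synthesis of the probabilistic sampling-pattern guarantee in Proposition \ref{thmprobsvmsampprjw} with the deterministic rank-bound given by Corollary \ref{colthmmatmv1nw}. Specifically, I would decompose the target event $\{\underline{r}^* \preceq \underline{r}\}$ by conditioning on the auxiliary event $E = \{\underline{r} \in \mathcal{S}_{\mathbf{\Omega}}\}$, since Corollary \ref{colthmmatmv1nw} already handles the conclusion once $E$ is in force.

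First, I would invoke Proposition \ref{thmprobsvmsampprjw} with the given rank vector $\underline{r}$. Conditions \eqref{as1}--\eqref{as4} and the sampling-probability bound \eqref{genmatrixrsampprojw} are precisely the hypotheses of that proposition, so it delivers
\begin{equation*}
\Pr\bigl(\underline{r} \in \mathcal{S}_{\mathbf{\Omega}}\bigr) \;\geq\; \bigl(1-\epsilon\bigr)\Bigl(1-\exp\!\bigl(-\tfrac{\sqrt{n}}{2}\bigr)\Bigr)^{n_1+n_2}.
\end{equation*}
Second, I would apply Corollary \ref{colthmmatmv1nw}: on the event $E$, the assumed existence of a rank-$\underline{r}$ completion forces $\underline{r}^* \preceq \underline{r}$ with probability one. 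Hence, conditional on $E$, the conclusion holds almost surely, and so its unconditional probability is at least $\Pr(E)$.

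Combining the two steps yields the stated lower bound on $\Pr(\underline{r}^* \preceq \underline{r})$. The only subtlety worth noting, but not a genuine obstacle, is that the assumption ``there exists a completion of $\mathbf{U}$ of rank vector $\underline{r}$'' is a structural hypothesis on the data rather than a random event, so no intersection of independent events is needed; the probability mass is entirely carried by the event $E$ controlled by Proposition \ref{thmprobsvmsampprjw}. Consequently the proof can be stated in a single sentence citing those two results, mirroring the pattern used for the single-view case (Corollary \ref{colthmmatsv1nwx}) and the CP-tensor case (Corollary \ref{colthmtensorCP1nwtq}).
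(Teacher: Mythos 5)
Your proposal is correct and matches the paper's (implicit) argument exactly: the corollary is obtained by applying Proposition \ref{thmprobsvmsampprjw} to guarantee $\underline{r} \in \mathcal{S}_{\mathbf{\Omega}}$ with the stated probability, and then invoking Corollary \ref{colthmmatmv1nw} on that event, just as in the single-view analogue (Corollary \ref{colthmmatsv1nwx}). No gaps.
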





\subsection{Tucker-Rank Tensor}\label{Tucker}

Assume that the sampled tensor $\mathcal{U}$ is chosen generically from the manifold of tensors of rank $\underline{ r}^* = \text{rank}_{\text{Tucker}} (\mathcal{U})=(m_{j+1}^*,\dots,m_{d}^*)$, where $m_i^*= \text{rank}(\mathbf{U}_{(i)})$, i.e., the rank of the $i$-th matricization of $\mathcal{U}$, is unknown, $j+1 \leq i \leq d$ and $j \in \{1,\dots,d-1\}$.



Without loss of generality assume that $m_{j+1}^* \geq \dots \geq m_d^*$ throughout this subsection. Also, given $\underline{ r} = (m_{j+1},\dots,m_d)$, define the following function
\begin{eqnarray}\label{defuntu}
g_{\underline{ r}}(x) = \sum_{i=j+1}^{d}  \min \left\{  r_i ,\left( x - \sum_{i^{\prime}=j+1}^{i-1} r_{i^{\prime}} \right)^+ \right\} r_i.
\end{eqnarray}

\begin{definition}
For any $i \in \{j+1,\dots,d\}$ and $ \mathcal{S}_i \subseteq \{1,\dots,n_i\}$, define $\mathcal{U}^{(\mathcal{S}_{i})}$ as a set containing the entries of $|\mathcal{S}_i|$ rows (corresponding to the elements of $\mathcal{S}_i$) of $\mathbf{U}_{(i)}$. Moreover, define $\mathcal{U}^{(\mathcal{S}_{j+1},\dots,\mathcal{S}_{d})} = \mathcal{U}^{(\mathcal{S}_{j+1})}\cup \dots \cup \mathcal{U}^{(\mathcal{S}_{d})}$. 
\end{definition}

{\bf Assumption }$\mathcal{A}_{\underline{ r}}^{\text{Tucker}}$: There exist $\sum_{i=j+1}^{d} \left( n_im_i \right)$ observed entries such that for any $ \mathcal{S}_i \subseteq \{1,\dots,n_i\}$ for $i \in \{j+1,\dots,d\}$, $\mathcal{U}^{(\mathcal{S}_{j+1},\dots,\mathcal{S}_{d})} $ includes at most $\sum_{i=j+1}^{d} |\mathcal{S}_i|m_i$ of the mentioned $\sum_{i=j+1}^{d} \left( n_im_i \right)$ observed entries.

Let $\mathcal{P}$ be a set of $\sum_{i=j+1}^{d} \left( n_im_i \right)$ observed entries such that they satisfy Assumption $\mathcal{A}_{\underline{ r}}^{\text{Tucker}}$. Now, we construct a $(j+1)^{\text{th}}$-order binary constraint tensor ${\breve{\Omega}}_{\underline{ r}}$ in some sense similar to that in Section \ref{CPtensubs}. For any subtensor $\mathcal{Y} \in \mathbb{R}^{n_1 \times n_2 \times \cdots \times n_j \times 1  \times \cdots \times 1 }$ of the tensor $\mathcal{U}$, let $N_{\Omega}(\mathcal{Y}^{\mathcal{P}})$ denote the number of sampled entries in $\mathcal{Y}$ that belong to $\mathcal{P}$.

The sampled tensor $\mathcal{U}$ includes $n_{j+1}  n_{j+2} \cdots n_d$ subtensors that belong to  $\mathbb{R}^{n_1 \times n_2 \times \cdots \times n_j \times 1  \times \cdots \times 1 }$ and we label these subtensors by $\mathcal{Y}_{(t_{j+1},\dots,t_{d})}$ where ${(t_{j+1},\dots,t_{d})}$ represents the coordinate of the subtensor. Define a binary valued tensor $\mathcal{\breve{Y}}_{(t_{j+1},\cdots,t_{d})} \in \mathbb{R}^{n_1 \times n_2 \times \cdots \times n_j \times \overbrace{ 1  \times \dots \times 1}^{d-j} \times  k}$, where $k= N_{\Omega}(\mathcal{Y}_{(t_{j+1},\ldots,t_{d})}) - N_{\Omega}(\mathcal{Y}_{(t_{j+1},\ldots,t_{d})}^{\mathcal{P}})$ and its entries are described as the following. We can look at $\mathcal{\breve{Y}}_{(t_{j+1},\cdots,t_{d})}$ as $k$ tensors each belongs to $\mathbb{R}^{n_1 \times n_2 \times \cdots \times n_j \times 1  \times \cdots \times 1 }$. For each of the mentioned $k$ tensors in $\mathcal{\breve{Y}}_{(t_{j+1},\cdots,t_{d})}$ we set the entries corresponding to the $N_{\Omega}(\mathcal{Y}_{(t_{j+1},\ldots,t_{d})}^{\mathcal{P}})$ observed entries that belong to $\mathcal{P}$ equal to $1$. For each of the other $k$ observed entries, we pick one of the $k$ tensors of $\mathcal{\breve{Y}}_{(t_{j+1},\cdots,t_{d})}$ and set its corresponding entry (the same location as that specific observed entry) equal to $1$ and set the rest of the entries equal to $0$.

For the sake of simplicity in notation, we treat tensors $\mathcal{\breve{Y}}_{(t_{j+1},\cdots,t_{d})}$ as a member of $\mathbb{R}^{n_1 \times n_2 \times \cdots \times n_j \times  k}$ instead of  $\mathbb{R}^{n_1 \times n_2 \times \cdots \times n_j \times \overbrace{ 1  \times \cdots \times 1}^{d-j} \times  k}$. Now, by putting together all $n_{j+1}  n_{j+2} \cdots n_d$ tensors in dimension $(j+1)$, we construct a binary valued tensor ${\breve{\Omega}}_{\underline{ r}} \in \mathbb{R}^{n_1 \times n_2 \times \cdots \times n_j \times K_j}$, where $K_j = N_{\Omega}(\mathcal{U}) - \sum_{i=j+1}^{d} \left( n_im_i\right)$ and call it the {\bf constraint tensor} \cite{ashraphijuo}. In \cite{ashraphijuo}, an example is given on the construction of ${\breve{\Omega}}_{\underline{r}}$.

{\bf Assumption }$\mathcal{B}_{\underline{ r}}^{\text{Tucker}}$: The constraint tensor ${\breve{\Omega}}_{\underline{ r}}$ consists a subtensor ${\breve{\Omega}}_{\underline{ r}}^{\prime} \in \mathbb{R}^{n_1 \times n_2 \times \cdots \times n_j \times K}$ such that $K= \left( \Pi_{i=1}^{j} n_i \right) \left( \Pi_{i=j+1}^{d} m_i \right) - \sum_{i=j+1}^{d} m_i^2 $ and for any $K^{\prime} \in \{1,2,\dots,K\}$ and any subtensor ${\breve{\Omega}}_{\underline{ r}}^{\prime \prime} \in \mathbb{R}^{n_1 \times n_2 \times \cdots \times n_{d-1} \times K^{\prime}}$ of ${\breve{\Omega}}_{\underline{ r}}^{\prime}$ we have
\begin{eqnarray}\label{matfineqdanmvTucker}
\left( \Pi_{i=j+1}^{d} m_i \right) \left(  f_{j+1} (\breve{\Omega}^{\prime \prime}_{\underline{ r}})  \right) - g_{\underline{ r}} \left(  f_{j+1} (\breve{\Omega}^{\prime \prime}_{\underline{ r}})  \right) \geq K^{\prime},
\end{eqnarray}
where $f_{j+1} (\breve{\Omega}^{\prime \prime}_{\underline{ r}})$ denotes the number of nonzero columns of the $(j+1)$-th matricization of $\breve{\Omega}^{\prime \prime}_{\underline{ r}}$.

The following lemma is a re-statement of Theorem $3$ in \cite{ashraphijuo}.

\begin{lemma}\label{thmten1Tucker}
For almost every $\mathcal{U}$, there are only finitely many completions of rank ${\underline{ r}^*}$ of the sampled tensor if and only if Assumptions $\mathcal{A}_{\underline{ r}^*}^{\text{Tucker}}$ and $\mathcal{B}_{\underline{ r}^*}^{\text{Tucker}}$ hold.
\end{lemma}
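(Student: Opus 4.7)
The plan is to translate the finite completability question into one about isolated solutions of a polynomial system in the components of a Tucker decomposition, in the same spirit as the proof of Lemma~\ref{thmmat1} for matrices and using the template developed in \cite{ashraphijuo}.

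First I would parameterize an arbitrary rank-$\underline{r}^*$ completion by its partial Tucker decomposition $\mathcal{U} = \mathcal{C} \times_{j+1} \mathbf{T}_{j+1} \times_{j+2} \cdots \times_d \mathbf{T}_d$, where $\mathcal{C} \in \mathbb{R}^{n_1 \times \cdots \times n_j \times m_{j+1}^* \times \cdots \times m_d^*}$ and $\mathbf{T}_i \in \mathbb{R}^{m_i^* \times n_i}$ for $i = j+1,\ldots,d$. The manifold of such tensors has dimension $(\Pi_{i=1}^j n_i)(\Pi_{i=j+1}^d m_i^*) + \sum_{i=j+1}^d m_i^*(n_i - m_i^*)$, reflecting the $\sum_{i=j+1}^d (m_i^*)^2$ dimensions of gauge freedom that can be absorbed into $\mathcal{C}$. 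Each observed entry then yields a polynomial equation in the entries of $\mathcal{C}$ and the $\mathbf{T}_i$'s.

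Next, the role of Assumption~$\mathcal{A}_{\underline{r}^*}^{\text{Tucker}}$ is to identify a set $\mathcal{P}$ of $\sum_{i=j+1}^d n_i m_i^*$ observed entries whose Hall-type spread across the rows of the matricizations guarantees that these entries suffice to recover the factor matrices $\mathbf{T}_{j+1},\ldots,\mathbf{T}_d$ generically. After conditioning on these factor matrices, every remaining observed entry becomes a degree-one polynomial in the entries of the core $\mathcal{C}$. The constraint tensor $\breve{\Omega}_{\underline{r}^*}$ is designed precisely so that its $(j+1)$-th matricization encodes these residual polynomials, while the function $g_{\underline{r}^*}$ bookkeeps how many core components along the distinguished modes have already been used up when one restricts to a given subcollection of rows.

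Finally I would invoke Bernstein's theorem to conclude that the polynomial system has only finitely many solutions with probability one (by genericity of $\mathcal{U}$) if and only if every subsystem is balanced in the sense of \eqref{matfineqdanmvTucker}, which is exactly Assumption~$\mathcal{B}_{\underline{r}^*}^{\text{Tucker}}$; Assumption~$\mathcal{A}_{\underline{r}^*}^{\text{Tucker}}$ is a necessary precondition because without enough spread observations to solve for the $\mathbf{T}_i$'s, the residual polynomials in $\mathcal{C}$ are already underdetermined. I expect the main obstacle to be the combinatorial bookkeeping hidden in $g_{\underline{r}^*}$: the same core variable can be touched by polynomials coming from different subtensors of $\mathcal{U}$, so one must verify that the minimum-with-residual formula in \eqref{defuntu} records shared variables exactly, without double-counting or missing any. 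Once that identification is established, the argument follows the proof of Theorem~3 in \cite{ashraphijuo}.
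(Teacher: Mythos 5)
The paper does not prove this lemma at all: it is introduced verbatim as ``a re-statement of Theorem 3 in \cite{ashraphijuo}'' and used as an imported black box, exactly like Lemma~\ref{thmmat1} for the single-view matrix and Lemma~\ref{thmten1CP} for the CP tensor. So there is no in-paper proof to compare against; the relevant argument lives entirely in the cited reference. Your outline does follow the same general strategy as that reference --- parameterize completions by a partial Tucker decomposition $\mathcal{C}\times_{i=j+1}^{d}\mathbf{T}_i$, charge the $\sum_{i=j+1}^{d} n_i m_i$ entries of $\mathcal{P}$ against the degrees of freedom of the factor matrices, let the remaining $K$ observed entries (the columns of $\breve{\Omega}_{\underline{r}^*}$) constrain the core modulo the $\sum_{i=j+1}^{d} (m_i^*)^2$ gauge freedom, and count algebraically independent polynomials against the manifold dimension. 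The dimension bookkeeping you give is consistent with the paper's own count in the proof of Lemma~\ref{tensorTuckerminimorder}.

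However, as a standalone proof the proposal has a genuine gap, and you essentially name it yourself. The entire content of the lemma is the equivalence between the combinatorial condition \eqref{matfineqdanmvTucker} holding for \emph{every} subtensor $\breve{\Omega}_{\underline{r}^*}^{\prime\prime}$ and the existence of a maximal set of algebraically independent polynomials; asserting that Bernstein's theorem ``concludes'' this equivalence is not an argument. Bernstein's theorem (as used in Lemma~\ref{sinvorderkam}) only gives the \emph{if} direction --- enough independent polynomials forces finitely many solutions. The \emph{only if} direction requires showing that whenever some subcollection of $K^{\prime}$ constraint columns touches too few nonzero rows (so that \eqref{matfineqdanmvTucker} fails), the corresponding subsystem leaves a positive-dimensional set of completions; and the forward combinatorial implication requires proving that the quantity $\left(\Pi_{i=j+1}^{d} m_i\right) f_{j+1}(\breve{\Omega}^{\prime\prime}_{\underline{r}}) - g_{\underline{r}}\bigl(f_{j+1}(\breve{\Omega}^{\prime\prime}_{\underline{r}})\bigr)$ exactly counts the degrees of freedom reachable by those polynomials, i.e.\ that $g_{\underline{r}}$ in \eqref{defuntu} accounts for the gauge absorbed into the core without over- or under-counting. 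You flag this as ``the main obstacle'' and then defer it to \cite{ashraphijuo}; that is the step that constitutes the proof, so the proposal is an accurate roadmap to the cited result rather than a proof of the lemma.
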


\begin{definition}
Let $\mathcal{S}_{{\Omega}}$ denote the set of all rank vectors $\underline{ r}$ such that both Assumptions $\mathcal{A}_{\underline{ r}}^{\text{Tucker}}$ and $\mathcal{B}_{\underline{ r}}^{\text{Tucker}}$ hold.
\end{definition}


\begin{lemma}\label{tensorTuckerminimorder}
Assume $\underline{ r} \in  \mathcal{S}_{{\Omega}} $. Then, for any rank vector $\underline{ r}^{\prime} \preceq \underline{ r}$, we have $\underline{ r}^{\prime} \in  \mathcal{S}_{{\Omega}} $.
\end{lemma}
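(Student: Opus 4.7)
The plan is to follow the strategy used in the proofs of Lemma \ref{sinvorderkam} and Lemma \ref{multivminimorder}. First I would argue it suffices to treat the case in which $\underline{r}' = (m_{j+1}', \ldots, m_d')$ differs from $\underline{r} = (m_{j+1}, \ldots, m_d)$ in exactly one coordinate $i \in \{j+1, \ldots, d\}$, with $m_i' = m_i - 1$; the general case follows by iterating this one-coordinate reduction.

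Proceeding by contradiction, suppose $\underline{r}' \notin \mathcal{S}_{\Omega}$. Then Lemma \ref{thmten1Tucker} implies that for generic $\mathcal{U}$ the sampled tensor admits infinitely many rank-$\underline{r}'$ completions. Parametrizing each such completion through a restricted Tucker decomposition $\mathcal{U}' = \mathcal{C}' \times_{j+1} \mathbf{T}_{j+1}' \times \cdots \times_d \mathbf{T}_d'$, each observed entry of $\mathcal{U}$ becomes a polynomial in the entries of $\mathcal{C}'$ and the factor matrices $\mathbf{T}_{i'}'$. As in the cited lemmas, Bernstein's theorem forces the maximum number of algebraically independent polynomials among these equations to be strictly less than the dimension of the rank-$\underline{r}'$ Tucker manifold, which (adding the $K$ in Assumption $\mathcal{B}_{\underline{r}'}^{\text{Tucker}}$ to the $\sum_{i'=j+1}^d n_{i'}m_{i'}'$ observed entries fixed in Assumption $\mathcal{A}_{\underline{r}'}^{\text{Tucker}}$) equals
\begin{equation}
D(\underline{r}') = \left( \Pi_{i'=1}^{j} n_{i'} \right) \left( \Pi_{i'=j+1}^{d} m_{i'}' \right) + \sum_{i'=j+1}^{d} m_{i'}'(n_{i'} - m_{i'}').
\end{equation}

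Next I would establish the monotonicity $D(\underline{r}') < D(\underline{r})$. A direct calculation gives
\begin{equation}
D(\underline{r}) - D(\underline{r}') = \left( \Pi_{i'=1}^{j} n_{i'} \right) \left( \Pi_{i' \neq i,\, i' \geq j+1} m_{i'} \right) + (n_i - 2 m_i + 1),
\end{equation}
and this is strictly positive: if $n_i \geq 2 m_i - 1$ it is clearly so, while otherwise invoking the standard Tucker containment $m_i \leq \Pi_{i'=1}^{j} n_{i'} \cdot \Pi_{i' \neq i,\, i' \geq j+1} m_{i'}$ together with $m_i \leq n_i$ yields $D(\underline{r}) - D(\underline{r}') \geq m_i + n_i - 2 m_i + 1 = n_i - m_i + 1 \geq 1$. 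Consequently the same count of algebraically independent polynomials is also smaller than the dimension of the rank-$\underline{r}$ Tucker manifold, so a second application of Bernstein's theorem, now in the rank-$\underline{r}$ parametrization, produces infinitely many rank-$\underline{r}$ completions of $\mathcal{U}$. This contradicts $\underline{r} \in \mathcal{S}_{\Omega}$ via Lemma \ref{thmten1Tucker}.

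The main obstacle is the dimension-monotonicity computation. Although the arithmetic is elementary, it is not automatic from $m_i \leq n_i$ alone when $m_i$ is close to $n_i$; one must explicitly invoke the implicit Tucker-rank inequality $m_i \leq \Pi_{i'=1}^{j} n_{i'} \cdot \Pi_{i' \neq i,\, i' \geq j+1} m_{i'}$, which plays here the same role as the $\max\{r_1, r_2\} \leq r \leq r_1 + r_2$ constraint used in the proof of Lemma \ref{multivminimorder}.
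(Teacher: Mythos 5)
Your proof follows essentially the same route as the paper's: reduce to a one-coordinate decrement, use the Tucker-manifold dimension formula $\left( \Pi_{i=1}^{j} n_i \right) \left( \Pi_{i=j+1}^{d} m_i \right) + \sum_{i=j+1}^{d} n_i m_i - \sum_{i=j+1}^{d} m_i^2$, show this dimension strictly decreases under $\underline{r}^{\prime} \preceq \underline{r}$, and then invoke the Bernstein-type argument of Lemma \ref{sinvorderkam} through Lemma \ref{thmten1Tucker}. If anything, your positivity step is slightly more careful than the paper's, which justifies $\left( \Pi_{i=1}^{j} n_i \right) + n_{i_0} - 2m_{i_0} + 1 > 0$ by citing only $m_{i_0} \leq n_{i_0}$, whereas you supply the additional containment $m_{i_0} \leq \Pi_{i^{\prime}=1}^{j} n_{i^{\prime}} \cdot \Pi_{i^{\prime} \neq i_0} m_{i^{\prime}}$ needed to close that inequality when $m_{i_0}$ is close to $n_{i_0}$.
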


\begin{proof}
Note that the dimension of the manifold corresponding to $\underline{ r}$ is $\left( \Pi_{i=1}^{j} n_i \right) \left( \Pi_{i=j+1}^{d} m_i \right) + \sum_{i=j+1}^{d} n_im_i- \sum_{i=j+1}^{d} m_i^2$ \cite{ashraphijuo}, and thus by reducing the value of $m_{i_0}$ by one (for $i_0 \in \{j+1,\dots,d\}$), the value of the mentioned dimension reduces by at least $\left( \Pi_{i=1}^{j} n_i \right) + n_i - 2m_i + 1$, which is greater than zero since $m_i \leq n_i$. The rest of the proof is similar to the proof of Lemma \ref{sinvorderkam}.
\end{proof}

\begin{definition}
Define $\mathcal{S}_{\Omega}(\underline{ r})$ as a subset of $\mathcal{S}_{\Omega}$, which includes all  $\underline{ r}^{\prime} \in \mathcal{S}_{\Omega}$ that  $\underline{ r}^{\prime}  \preceq  \underline{ r}$.
\end{definition}

The following theorem gives a relationship between $\underline{r}^*$ and ${\mathcal{S}}_{{\Omega}}$.

\begin{theorem}\label{thmmat2tensorTucker}
For almost every $\mathcal{U}$, with probability one, exactly one of the following statements holds

(i) $\underline{ r}^*  \in {\mathcal{S}}_{{\Omega}}${\rm ;}

(ii) For any arbitrary completion of the sampled tensor $\mathcal{U}$ of rank  $\underline{ r}$, we have $\underline{ r} \notin {\mathcal{S}}_{{\Omega}}(\underline{ r}^*)$.
\end{theorem}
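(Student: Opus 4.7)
The plan is to mirror the structure of Theorems \ref{thmmat2} and \ref{thmmat2multivw}. First I would check mutual exclusivity: if (i) holds then $\underline{r}^* \in \mathcal{S}_{\Omega}$ implies $\underline{r}^* \in \mathcal{S}_{\Omega}(\underline{r}^*)$ by definition, and since $\mathcal{U}$ itself realizes a rank-$\underline{r}^*$ completion, (ii) is violated. Hence (i) and (ii) cannot hold simultaneously, and it remains to prove that with probability one at least one holds.

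Suppose (ii) fails, so there exists a completion $\mathcal{U}'$ of rank $\underline{r} \in \mathcal{S}_{\Omega}(\underline{r}^*)$. By definition, $\underline{r} \preceq \underline{r}^*$ and Assumptions $\mathcal{A}_{\underline{r}}^{\text{Tucker}}$ and $\mathcal{B}_{\underline{r}}^{\text{Tucker}}$ both hold. If $\underline{r} = \underline{r}^*$ then (i) is immediate. Otherwise $\underline{r} \prec \underline{r}^*$, and I would show this case occurs with probability zero. Fix $i_0 \in \{j+1,\ldots,d\}$ with $m_{i_0} < m_{i_0}^*$ and write $\mathcal{U}' = \mathcal{C}\times_{i=j+1}^{d}\mathbf{T}_i$ with $\mathbf{T}_i \in \mathbb{R}^{m_i\times n_i}$. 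Each observed entry of $\mathcal{U}$ becomes a polynomial in the entries of $\mathcal{C}$ and the $\mathbf{T}_i$. Using the $\sum_{i=j+1}^d n_i m_i$ observed entries furnished by Assumption $\mathcal{A}_{\underline{r}}^{\text{Tucker}}$, I would treat the entries of the $\mathbf{T}_i$ factors as determined parameters, leaving the remaining observed entries as polynomial equations in the entries of $\mathcal{C}$. Lemma \ref{thmten1Tucker} guarantees only finitely many rank-$\underline{r}$ completions, so by Bernstein's theorem (exactly as invoked in Lemma \ref{sinvorderkam}) the number of algebraically independent such polynomials equals $K = (\Pi_{i=1}^j n_i)(\Pi_{i=j+1}^d m_i) - \sum_{i=j+1}^d m_i^2$, the count in Assumption $\mathcal{B}_{\underline{r}}^{\text{Tucker}}$. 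Any further observed entry then yields a polynomial relation in the entries of $\mathcal{U}$ alone, paralleling \eqref{polysingvmalindep} and \eqref{polyCPtensolindep}.

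To finish, I would invoke the dimension calculation used in the proof of Lemma \ref{tensorTuckerminimorder}: the rank-$\underline{r}$ Tucker manifold has strictly smaller dimension than the rank-$\underline{r}^*$ manifold whenever $\underline{r} \prec \underline{r}^*$. Hence the polynomial relation produced above cannot hold identically on the rank-$\underline{r}^*$ manifold, and since $\mathcal{U}$ is drawn generically from that manifold, the relation holds with probability zero. Combined with the single-variable choice of $i_0$ this exhausts the subcase $\underline{r}\prec \underline{r}^*$, so (i) holds with probability one whenever (ii) fails.

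The main obstacle is the bookkeeping in the middle paragraph, specifically isolating an ``extra'' observed entry whose induced polynomial is genuinely nontrivial on the rank-$\underline{r}^*$ manifold rather than a tautological consequence of the Tucker decomposition. The constraint subtensor $\breve{\Omega}_{\underline{r}}'$ furnished by Assumption $\mathcal{B}_{\underline{r}}^{\text{Tucker}}$ is precisely engineered so that once $K$ algebraically independent polynomial constraints have tied down $\mathcal{C}$ up to finite ambiguity, any additional observed entry — which exists because $N_{\Omega}(\mathcal{U})$ strictly exceeds $\sum_{i=j+1}^d n_i m_i + K$ in the nondegenerate regime — produces a nontrivial algebraic equation in the entries of $\mathcal{U}$, after which genericity closes the argument.
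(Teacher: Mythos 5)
Your overall architecture matches the paper's: check mutual exclusivity, reduce to showing that (when $\underline{r}^* \notin \mathcal{S}_{\Omega}$) a completion of rank $\underline{r} \in \mathcal{S}_{\Omega}(\underline{r}^*)$ exists with probability zero, extract an index $i_0$ with $m_{i_0} < m_{i_0}^*$, derive an algebraic relation among observed entries from the existence of such a completion, and kill it by genericity. However, your bookkeeping of where the relation comes from asks for more observed entries than the hypotheses provide. You determine the $\mathbf{T}_i$ factors from the $\sum_{i=j+1}^{d} n_i m_i$ entries of $\mathcal{P}$, then want $K$ algebraically independent polynomials in $\mathcal{C}$ \emph{plus one more} observed entry; this needs $N_{\Omega}(\mathcal{U}) > \sum_{i=j+1}^{d} n_i m_i + K$, which Assumption $\mathcal{B}_{\underline{r}}^{\text{Tucker}}$ does not guarantee (it only gives $\geq$, and with exactly $K$ remaining entries you have $K$ equations in the $K$ effective unknowns of $\mathcal{C}$ after removing the gauge freedom $\sum_{i=j+1}^{d} m_i^2$ --- a generically solvable system yielding no contradiction). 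The paper's count is the reverse and cheaper: it fixes the core $\mathcal{C}$ as the basis, treats the $\sum_{i=j+1}^{d} n_i m_i$ entries of the $\mathbf{T}_i$'s as the variables, and uses the entries of $\mathcal{P}$ together with a \emph{single} additional observed entry --- guaranteed because the constraint tensor is nonempty under Assumption $\mathcal{B}_{\underline{r}}^{\text{Tucker}}$ --- to get $\sum_{i=j+1}^{d} n_i m_i + 1$ polynomials in $\sum_{i=j+1}^{d} n_i m_i$ variables, hence one dependent equation among those observed entries.

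The more serious gap is your closing genericity step. That the rank-$\underline{r}$ Tucker manifold has smaller dimension than the rank-$\underline{r}^*$ manifold (Lemma \ref{tensorTuckerminimorder}) does not imply that the specific polynomial relation you derived is nonvanishing on the rank-$\underline{r}^*$ manifold: the relation involves only a fixed finite tuple of entries, i.e., a projection of the tensor, and a relation among finitely many coordinates can hold identically on a higher-dimensional variety unless that projection is dominant. The paper supplies exactly the missing ingredient: since $\sum_{i=j+1}^{d} n_i m_i < \sum_{i=j+1}^{d} n_i m_i^*$, the $\sum_{i=j+1}^{d} n_i m_i + 1$ entries appearing in the relation can be completed to a set of $\sum_{i=j+1}^{d} n_i m_i^*$ entries satisfying Assumption $\mathcal{A}_{\underline{r}^*}^{\text{Tucker}}$, and for a tensor chosen generically from the rank-$\underline{r}^*$ manifold such a set of entries obeys no fixed nontrivial algebraic relation. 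It is this embedding --- not the comparison of manifold dimensions --- that yields the probability-zero conclusion, and your argument needs to be repaired at this point.
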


\begin{proof}
Similar to the proof of Theorem \ref{thmmat2}, to complete the proof it suffices to show that the assumption $\underline{ r}^* \notin {\mathcal{S}}_{{\Omega}}$ results that there exists a completion of $\mathcal{U}$ of rank $\underline{ r}$, where  $\underline{ r} \in {\mathcal{S}}_{{\Omega}}(\underline{ r}^*)$, with probability zero. Note that $\underline{ r} \in {\mathcal{S}}_{{\Omega}}(\underline{ r}^*) \subseteq \mathcal{S}_{\Omega}$ results that Assumptions $\mathcal{A}_{\underline{ r}}^{\text{Tucker}}$ and $\mathcal{B}_{\underline{ r}}^{\text{Tucker}}$ hold. Moreover, note that $\underline{ r}  \preceq  \underline{ r}^*$ and since $\underline{ r}^* \notin {\mathcal{S}}_{{\Omega}}$ we conclude that there exists $i_0 \in \{j+1,\dots,d\}$ such that $m_{i_0} < m_{i_0}^*$. As a result, $\sum_{i=j+1}^{d} n_im_i < \sum_{i=j+1}^{d} n_im_i^* $. 

Assumption $\mathcal{B}_{\underline{ r}}^{\text{Tucker}}$ ensures there exists at least one more observed entry (otherwise the constraint tensor does not exist) besides the $\sum_{i=j+1}^{d} n_im_i$ mentioned observed entries. Given the basis $\mathcal{C} \in \mathbb{R}^{n_1 \times \dots \times n_j \times m_{j+1} \times \dots \times m_d}$ as in \eqref{TuckTuck}, there exist $\sum_{i=j+1}^{d} n_im_i$ variables in the corresponding Tucker decomposition. However, we have $\sum_{i=j+1}^{d} n_im_i + 1$ polynomials in terms these $\sum_{i=j+1}^{d} n_im_i$ variables and therefore the last polynomials can be written as algebraic combination of the other $\sum_{i=j+1}^{d} n_im_i$ polynomials. This leads to a linear equation in terms of the $\sum_{i=j+1}^{d} n_im_i + 1$ corresponding observed entries. On the other hand, the $\sum_{i=j+1}^{d} n_im_i$ observed entries satisfy the property stated as Assumption $\mathcal{A}_{\underline{ r}}^{\text{Tucker}}$ and it is easily verified that there exist $\sum_{i=j+1}^{d} n_im_i^* $ entries (observed and non-observed) satisfying Assumption $\mathcal{A}_{\underline{ r}^*}^{\text{Tucker}}$ such that the union of the mentioned $\sum_{i=j+1}^{d} n_im_i$ entries with any arbitrary other observed entry be a subset of those $\sum_{i=j+1}^{d} n_im_i^* $ entries. However, $\mathcal{U}$ is generically chosen from the manifold corresponding to $\underline{ r}^*$ and therefore a particular linear equation in terms of the mentioned $\sum_{i=j+1}^{d} n_im_i^* $ entries holds with probability zero. The rest of the proof is similar to the proof of Theorem \ref{thmmat2}.
\end{proof}

\begin{corollary}\label{colthmtensorTucker1nw}
Assuming that there exists a completion of $\mathcal{U}$ with rank vector $\underline{ r}$ such that $\underline{ r} \in  {\mathcal{S}}_{\Omega}$, we conclude that with probability one $\underline{ r}^* \preceq \underline{ r} $. 
\end{corollary}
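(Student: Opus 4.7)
The plan is to mirror the derivation of Corollary \ref{colthmmatmv1nw} in the multi-view setting by first upgrading Theorem \ref{thmmat2tensorTucker} to a subset version and then specializing. I will argue that the polynomial-vanishing argument underlying Theorem \ref{thmmat2tensorTucker} actually yields the following refinement: for any downward-closed $\mathcal{S}' \subseteq \mathcal{S}_{\Omega}$ (i.e., $\underline{r}'' \in \mathcal{S}'$ and $\underline{r}' \preceq \underline{r}''$ imply $\underline{r}' \in \mathcal{S}'$), with probability one exactly one of (i) $\underline{r}^* \in \mathcal{S}'$, or (ii) no completion of $\mathcal{U}$ has rank vector in $\mathcal{S}'$, holds. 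The key facts the proof of Theorem \ref{thmmat2tensorTucker} actually uses about its candidate rank vector $\underline{r}$ are that $\underline{r} \in \mathcal{S}_{\Omega}$ (so Assumptions $\mathcal{A}_{\underline{r}}^{\text{Tucker}}$ and $\mathcal{B}_{\underline{r}}^{\text{Tucker}}$ hold and Lemma \ref{thmten1Tucker} gives finitely many rank-$\underline{r}$ completions), and that $\underline{r}^*$ and $\underline{r}$ disagree in at least one coordinate in the direction needed to make a specific algebraic identity among observed entries a probability-zero event on the rank-$\underline{r}^*$ manifold. Both facts hold for any $\underline{r} \in \mathcal{S}'$ whenever $\underline{r}^* \notin \mathcal{S}'$, yielding the refinement.

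Next I would specialize to $\mathcal{S}_{\Omega}(\underline{r})$, the set introduced just before Theorem \ref{thmmat2tensorTucker}, where $\underline{r}$ is the rank vector furnished by the hypothesis. Lemma \ref{tensorTuckerminimorder} shows $\mathcal{S}_{\Omega}$ is closed under $\preceq$, so $\mathcal{S}_{\Omega}(\underline{r})$ is downward-closed. The hypothesized rank-$\underline{r}$ completion satisfies $\underline{r} \in \mathcal{S}_{\Omega}$ and trivially $\underline{r} \preceq \underline{r}$, so $\underline{r} \in \mathcal{S}_{\Omega}(\underline{r})$; hence alternative (ii) of the refinement is violated, forcing (i), i.e., $\underline{r}^* \in \mathcal{S}_{\Omega}(\underline{r})$ with probability one. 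In particular $\underline{r}^* \preceq \underline{r}$, which is the desired conclusion.

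The main obstacle is justifying that the proof of Theorem \ref{thmmat2tensorTucker} genuinely extends to arbitrary downward-closed subsets $\mathcal{S}'$ and does not rely on the chain condition $\underline{r} \preceq \underline{r}^*$ built into the definition of $\mathcal{S}_{\Omega}(\underline{r}^*)$. Concretely, one must verify that whenever $\underline{r} \in \mathcal{S}'$ and $\underline{r}^* \notin \mathcal{S}'$, the genericity of $\mathcal{U}$ on the rank-$\underline{r}^*$ manifold is strong enough to make the forced linear identity among the $\sum_{i=j+1}^{d} n_i m_i + 1$ observed entries a probability-zero event. This reduces to the fact that the rank-$\underline{r}^*$ manifold is not contained in the algebraic locus cut out by that identity whenever there is at least one coordinate of $\underline{r}$ strictly below the corresponding coordinate of $\underline{r}^*$, which is precisely the core algebraic content already exploited in the original proof of Theorem \ref{thmmat2tensorTucker}.
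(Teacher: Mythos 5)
Your overall architecture (a downward-closed-subset refinement of Theorem \ref{thmmat2tensorTucker}, then specialization to $\mathcal{S}_{\Omega}(\underline{r})$) is a sensible formalization and mirrors what the paper does explicitly in the multi-view case (Corollary \ref{colthmmatmv1}); note that the paper offers no written proof of this corollary and presents it as immediate from the theorem. The problem is the step you yourself flag as ``the main obstacle'' and then dismiss. The genericity argument in the proof of Theorem \ref{thmmat2tensorTucker} does not merely need ``at least one coordinate of $\underline{r}$ strictly below the corresponding coordinate of $\underline{r}^*$''; it uses the full component-wise domination $\underline{r} \preceq \underline{r}^*$ in two places: first to conclude $\sum_{i=j+1}^{d} n_i m_i < \sum_{i=j+1}^{d} n_i m_i^*$, and second to embed the $\sum_{i=j+1}^{d} n_i m_i$ observed entries certified by Assumption $\mathcal{A}_{\underline{r}}^{\text{Tucker}}$, together with one additional observed entry, into a set of $\sum_{i=j+1}^{d} n_i m_i^*$ entries satisfying Assumption $\mathcal{A}_{\underline{r}^*}^{\text{Tucker}}$ --- that embedding is what makes the forced linear identity a probability-zero event on the rank-$\underline{r}^*$ manifold. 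This is precisely why the theorem's alternative (ii) is restricted to $\mathcal{S}_{\Omega}(\underline{r}^*)$ rather than to all of $\mathcal{S}_{\Omega}$.

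When you instantiate your refinement with $\mathcal{S}' = \mathcal{S}_{\Omega}(\underline{r})$ and assume $\underline{r}^* \notin \mathcal{S}'$, the rank vectors you must rule out include ones that are \emph{incomparable} with $\underline{r}^*$: some coordinate strictly below $m_{i_0}^*$ while another may strictly exceed $m_{i_1}^*$. For those, neither the sum inequality nor the entry-set embedding is available, so the assertion that the identity holds with probability zero does not reduce to ``the core algebraic content already exploited in the original proof'' --- it is a strictly stronger claim. To close the gap you would need a separate argument for the incomparable case, e.g., a coordinate-wise argument applied to the matricization $\mathbf{U}_{(i_0)}$ for an index with $m_{i_0} < m_{i_0}^*$, in the spirit of the proof of Theorem \ref{thmmat2multivw} for multi-view data; as written, that argument is supplied neither by your proposal nor, for that matter, by the paper.
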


The following lemma is Corollary $2$ in \cite{ashraphijuo}, which ensures that Assumptions $\mathcal{A}_{\underline{ r}}^{\text{Tucker}}$ and $\mathcal{B}_{\underline{ r}}^{\text{Tucker}}$ hold with high probability.

\begin{lemma}\label{lemghTuck}
Assume that $\sum_{i=j+1}^{d} m_i^2 \leq \Pi_{i=j+1}^{d} m_i$,  $\Pi_{i=j+1}^{d}n_i \geq N_j  \Pi_{i=j+1}^{d} m_i - \sum_{i=j+1}^{d} m_i^2$, $ \Pi_{i=j+1}^{d} m_i \leq N_j$, where $N_j = \Pi_{i=1}^{j} n_i $. Furthermore, assume that we observe each entry of  $\mathcal{U}$ with probability  $p$, where
\begin{eqnarray}\label{coin2Tuck}
p >  \frac{1}{N_j} \left( 6 \ \log \left( {N_j} \right) + 2 \log \left(  \max \left\{ \frac{2 \sum_{i=j+1}^{d} r_i^2}{\epsilon} ,   \frac{2\Pi_{i=j+1}^{d} r_i - 2\sum_{i=j+1}^{d} r_i^2}{\epsilon} \right\} \right) + 4  \right)+ \frac{1}{\sqrt[4]{N_j}}.
\end{eqnarray}
Then, with probability  at least $(1- \epsilon) \left( 1-\exp(-\frac{\sqrt{\Pi_{i=1}^{j} n_i}}{2}) \right)^{\Pi_{i=j+1}^{d} n_i}$, $\underline{ r} \in \mathcal{S}_{\Omega}$.
\end{lemma}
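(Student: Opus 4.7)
The plan is to mirror the two-step recipe already used for Proposition~\ref{thmprobsvmsamppr} and Proposition~\ref{thmprobsvmsampprjw}: combine the deterministic Tucker finite-completion result from \cite{ashraphijuo} (a per-slice version of Lemma~\ref{thmten1Tucker} in which a sufficient number of samples per $N_j$-sized slice forces Assumptions $\mathcal{A}_{\underline{r}}^{\text{Tucker}}$ and $\mathcal{B}_{\underline{r}}^{\text{Tucker}}$ to hold with probability at least $1-\epsilon$) with the Bernoulli-to-fixed-count conversion of Lemma~\ref{azumares}. Concretely, the underlying deterministic statement says: given the three structural hypotheses $\sum_{i=j+1}^{d} m_i^2 \leq \Pi_{i=j+1}^{d} m_i$, $\Pi_{i=j+1}^{d} n_i \geq N_j \Pi_{i=j+1}^{d} m_i - \sum_{i=j+1}^{d} m_i^2$, and $\Pi_{i=j+1}^{d} m_i \leq N_j$, if every one of the $\Pi_{i=j+1}^{d} n_i$ subtensors of $\mathcal{U}$ lying in $\mathbb{R}^{n_1 \times \cdots \times n_j}$ contains more than
\[ k \;=\; 6\log(N_j) + 2\log\!\left(\max\!\left\{\frac{2\sum_{i=j+1}^{d} m_i^2}{\epsilon},\; \frac{2\Pi_{i=j+1}^{d} m_i - 2\sum_{i=j+1}^{d} m_i^2}{\epsilon}\right\}\right) + 4 \]
observed entries, then with probability at least $1-\epsilon$ we have $\underline{r} \in \mathcal{S}_{\Omega}$.

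First, I would partition $\mathcal{U}$ into its $\Pi_{i=j+1}^{d} n_i$ disjoint slices of size $N_j$ each, indexed by the last $d-j$ coordinates. The hypothesis \eqref{coin2Tuck} is engineered precisely so that $p > k/N_j + 1/\sqrt[4]{N_j}$ with the $k$ above. Applying Lemma~\ref{azumares} to a single slice (with $n = N_j$) then shows that this slice contains more than $k$ observed entries with probability at least $1 - \exp(-\sqrt{N_j}/2)$. Because the slices are disjoint and sampling is i.i.d.\ Bernoulli, the corresponding events are mutually independent across slices, so all $\Pi_{i=j+1}^{d} n_i$ of them simultaneously meet this lower count with probability at least $(1-\exp(-\sqrt{N_j}/2))^{\Pi_{i=j+1}^{d} n_i}$.

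On this high-probability event the deterministic per-slice lemma fires and contributes a further conditional probability at least $1-\epsilon$ that $\underline{r} \in \mathcal{S}_{\Omega}$; multiplying gives the claimed bound $(1-\epsilon)\left(1-\exp(-\sqrt{N_j}/2)\right)^{\Pi_{i=j+1}^{d} n_i}$.

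The bulk of the work is not in the probabilistic step---which is a clean application of Lemma~\ref{azumares} together with the independence across disjoint slices, exactly as in the single-view and multi-view propositions---but in confirming that the integer threshold $k$ isolated from \eqref{coin2Tuck} matches term-for-term the $k$ demanded by the per-slice deterministic statement in \cite{ashraphijuo}, and that the three side-conditions listed in the lemma are exactly the structural hypotheses of that underlying result. These conditions are driven by a dimension count against the Tucker manifold of dimension $\left(\Pi_{i=1}^{j} n_i\right)\left(\Pi_{i=j+1}^{d} m_i\right) + \sum_{i=j+1}^{d} n_i m_i - \sum_{i=j+1}^{d} m_i^2$ used in Lemma~\ref{tensorTuckerminimorder}, together with the constraint tensor construction preceding Assumption~$\mathcal{B}_{\underline{r}}^{\text{Tucker}}$. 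The main obstacle is therefore careful bookkeeping of the constants and structural conditions rather than a new probabilistic idea.
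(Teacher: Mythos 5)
The paper does not actually prove this lemma: it is imported verbatim as Corollary~2 of \cite{ashraphijuo}, so there is no in-paper argument to compare against. Your reconstruction follows the same template the paper uses for Propositions~\ref{thmprobsvmsamppr} and \ref{thmprobsvmsampprjw} --- partition the tensor into the $\Pi_{i=j+1}^{d} n_i$ disjoint slices of size $N_j$, apply Lemma~\ref{azumares} with $n=N_j$ to each, use independence of disjoint slices to get the product $\bigl(1-\exp(-\sqrt{N_j}/2)\bigr)^{\Pi_{i=j+1}^{d} n_i}$, and then invoke a deterministic ``at least $k$ samples per slice implies $\underline{r}\in\mathcal{S}_{\Omega}$ with probability $1-\epsilon$'' result --- and that probabilistic step is carried out correctly and at the same level of rigor as the paper's own analogous proofs (including the mild looseness in multiplying the conditional $1-\epsilon$ factor). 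Your silent reading of the $r_i$ in \eqref{coin2Tuck} as the $m_i$ of the rank vector is also the only sensible interpretation of the statement.

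The genuine gap is the one you yourself flag but do not close: the entire content of the lemma beyond the routine Bernoulli-to-count conversion lives in the postulated per-slice deterministic threshold
\[ k = 6\log(N_j) + 2\log\Bigl(\max\Bigl\{\tfrac{2\sum_{i=j+1}^{d} m_i^2}{\epsilon},\ \tfrac{2\Pi_{i=j+1}^{d} m_i - 2\sum_{i=j+1}^{d} m_i^2}{\epsilon}\Bigr\}\Bigr) + 4 \]
together with the three structural side-conditions. Nothing in the present paper establishes that such a deterministic lemma holds with exactly this $k$ --- that is precisely the substance of the cited result in \cite{ashraphijuo}, which requires verifying that randomly placed samples satisfy the combinatorial Assumptions $\mathcal{A}_{\underline{r}}^{\text{Tucker}}$ and $\mathcal{B}_{\underline{r}}^{\text{Tucker}}$ (a nontrivial counting argument over all subtensors $\breve{\Omega}_{\underline{r}}^{\prime\prime}$, not a dimension count alone). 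So your proposal correctly reduces the lemma to that external statement but does not prove it; as a self-contained proof it is incomplete, while as a reconstruction of how the cited result is assembled from Lemma~\ref{azumares} it is accurate.
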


The following corollary is the probabilistic version of Theorem \ref{thmmat2tensorTucker}.

\begin{corollary}\label{colthmtensorTuk1nwtq}
Assuming that there exists a completion of the sampled tensor $\mathcal{U}$ of Tucker rank $\underline{ r}$ such that the assumptions in Lemma \ref{lemghTuck} hold and the sampling probability satisfies \eqref{coin2Tuck}, then with probability at least $(1- \epsilon) \left( 1-\exp(-\frac{\sqrt{\Pi_{i=1}^{j} n_i}}{2}) \right)^{\Pi_{i=j+1}^{d} n_i}$ we have $\underline{ r}^* \preceq \underline{ r}$.
\end{corollary}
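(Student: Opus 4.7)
The plan is to obtain this corollary as a direct consequence of Lemma \ref{lemghTuck} together with the deterministic result in Corollary \ref{colthmtensorTucker1nw}, exactly parallel to how Corollary \ref{colthmtensorCP1nwtq} was deduced from Lemma \ref{prsCPfinRD} in the CP-rank case and Corollary \ref{colthmtCMv1nwtq} was deduced from Proposition \ref{thmprobsvmsampprjw} in the multi-view case. The starting point is the hypothesis that a Tucker rank-$\underline{r}$ completion of the sampled tensor $\mathcal{U}$ exists and that the sampling probability satisfies \eqref{coin2Tuck} under the dimensional assumptions required by Lemma \ref{lemghTuck}.

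First, I would invoke Lemma \ref{lemghTuck} applied to the vector $\underline{r}$ in the hypothesis. This yields that Assumptions $\mathcal{A}_{\underline{r}}^{\text{Tucker}}$ and $\mathcal{B}_{\underline{r}}^{\text{Tucker}}$ hold, i.e., $\underline{r} \in \mathcal{S}_{\Omega}$, with probability at least $(1-\epsilon)\bigl(1-\exp(-\tfrac{\sqrt{\Pi_{i=1}^{j} n_i}}{2})\bigr)^{\Pi_{i=j+1}^{d} n_i}$. Crucially, membership in $\mathcal{S}_{\Omega}$ is a property only of the sampling pattern $\Omega$, so this probability bound is taken over the randomness of $\Omega$ alone and does not interact with the generic-choice randomness of $\mathcal{U}$.

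Next, conditioning on the event $\underline{r} \in \mathcal{S}_{\Omega}$, I would apply Corollary \ref{colthmtensorTucker1nw}: since there exists a rank-$\underline{r}$ completion and $\underline{r} \in \mathcal{S}_{\Omega}$, we conclude that $\underline{r}^* \preceq \underline{r}$ with probability one (over the generic choice of $\mathcal{U}$ from the manifold corresponding to $\underline{r}^*$). Combining the two, the joint event has probability at least the product bound stated in the corollary, which is the claim.

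The only subtlety worth flagging is bookkeeping of the two sources of randomness: the sampling pattern (which governs the Lemma \ref{lemghTuck} bound) and the generic choice of the true tensor (which governs the probability-one statement in Corollary \ref{colthmtensorTucker1nw}). Since the deterministic statement $\underline{r}^* \preceq \underline{r}$ fails only on a measure-zero algebraic subvariety of the rank-$\underline{r}^*$ manifold once $\underline{r} \in \mathcal{S}_{\Omega}$ is in force, no independence argument beyond this separation of randomness sources is needed, and the product of the two lower bounds on probability is valid. I do not anticipate any real obstacle beyond making this two-step conditioning explicit, since the heavy lifting (the combinatorial-geometric content) is entirely absorbed into Lemma \ref{lemghTuck} and Corollary \ref{colthmtensorTucker1nw}.
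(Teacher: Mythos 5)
Your proposal is correct and follows exactly the route the paper intends: Lemma \ref{lemghTuck} gives $\underline{r} \in \mathcal{S}_{\Omega}$ with the stated probability over the sampling pattern, and Corollary \ref{colthmtensorTucker1nw} then yields $\underline{r}^* \preceq \underline{r}$ with probability one over the generic choice of $\mathcal{U}$, so the product bound is immediate. The paper leaves this composition implicit (as it does for the analogous CP and multi-view corollaries), and your explicit separation of the two sources of randomness is a faithful spelling-out of that same argument.
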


\subsection{TT-Rank Tensor}\label{TT}

Assume that the sampled tensor $\mathcal{U}$ is chosen generically from the manifold of tensors of rank $\underline{ r}^*= \text{rank}_{\text{TT}} (\mathcal{U})=(u_1^*,\dots,u_{d-1}^*)$, where $u_i^*=\text{rank}(\mathbf{\widetilde U}_{(i)})$, i.e., the rank of the $i$-th unfolding of $\mathcal{U}$, is unknown, $1 \leq i \leq d-1$. Define $u_0 = u_d =1$.


{\bf Assumption }$\mathcal{A}_{\underline{ r}}^{\text{TT}}$: Each row of the $d$-th matricization of the sampled tensor, i.e., $\mathbf{U}_{(d)}$ includes at least $u_{d-1}$ observed entries.

We construct the $d$-way binary valued constraint tensor ${\breve{\Omega}}_{u_{d-1}}$ similar to that in Section \ref{CPtensubs} as the following. Consider any subtensor $\mathcal{Y} \in \mathbb{R}^{n_1 \times n_2 \times \cdots \times n_{d-1} \times 1 }$ of the tensor $\mathcal{U}$. The sampled tensor $\mathcal{U}$ includes $n_d$ subtensors that belong to  $\mathbb{R}^{n_1 \times n_2 \times \cdots \times n_{d-1} \times 1 }$ and let $\mathcal{Y}_i$ for $1 \leq i \leq n_d$ denote these $n_d$ subtensors. Define a binary valued tensor $\mathcal{\breve{Y}}_{i} \in \mathbb{R}^{n_1 \times n_2 \times \cdots \times n_{d-1} \times  k_i}$, where $k_i= N_{\Omega}(\mathcal{Y}_{i}) - u_{d-1}$ and its entries are described as the following. We can look at $\mathcal{\breve{Y}}_{i}$ as $k_i$ tensors each belongs to $\mathbb{R}^{n_1 \times n_2 \times \cdots \times n_{d-1} \times 1 }$. For each of the mentioned $k_i$ tensors in $\mathcal{\breve{Y}}_{i}$ we set the entries corresponding to $u_{d-1}$ of the observed entries equal to $1$. For each of the other $k_i$ observed entries, we pick one of the $k_i$ tensors of $\mathcal{\breve{Y}}_{i}$ and set its corresponding entry (the same location as that specific observed entry) equal to $1$ and set the rest of the entries equal to $0$. In the case that $k_i=0$ we simply ignore $\mathcal{\breve{Y}}_{i}$, i.e., $\mathcal{\breve{Y}}_{i} = \emptyset$

By putting together all $n_d$ tensors in dimension $d$, we construct a binary valued tensor ${\breve{\Omega}}_{u_{d-1}} \in \mathbb{R}^{n_1 \times n_2 \times \cdots \times n_{d-1} \times K}$, where $K = \sum_{i=1}^{n_d} k_i = N_{\Omega}(\mathcal{U}) - u_{d-1} n_d$ and call it the {\bf constraint tensor}. Observe that each subtensor of ${\breve{\Omega}}_{u_{d-1}}$ which belongs to $\mathbb{R}^{n_1 \times n_2 \times \cdots \times n_{d-1} \times 1 }$ includes exactly $u_{d-1}+1$ nonzero entries. In \cite{ashraphijuo3}, an example is given on the construction of ${\breve{\Omega}}_{u_{d-1}}$.

{\bf Assumption }$\mathcal{B}_{\underline{ r}}^{\text{TT}}$: ${\breve{\Omega}}_{u_{d-1}}$ consists a subtensor ${\breve{\Omega}}_{u_{d-1}}^{\prime} \in \mathbb{R}^{n_1 \times n_2 \times \cdots \times n_{d-1} \times K}$ such that $K= \sum_{i=1}^{d-1} u_{i-1}n_iu_i - \sum_{i=1}^{d-1} u_i^2$ and for any $K^{\prime} \in \{1,2,\dots,K\}$ and any subtensor ${\breve{\Omega}}_{u_{d-1}}^{\prime \prime} \in \mathbb{R}^{n_1 \times n_2 \times \cdots \times n_{d-1} \times K^{\prime}}$ of ${\breve{\Omega}}_{u_{d-1}}^{\prime}$ we have
\begin{eqnarray}\label{matfineqdanmvTT}
\sum_{i=1}^{d-1} \left(  u_{i-1}f_i (\breve{\Omega}^{\prime \prime}_{u_{d-1}}) u_i - u_i^2 \right)^+  \geq K^{\prime},
\end{eqnarray}
where $f_i (\breve{\Omega}^{\prime \prime}_{u_{d-1}})$ denotes the number of nonzero rows of the $i$-th matricization of $\breve{\Omega}^{\prime \prime}_{u_{d-1}}$.

The following lemma is a re-statement of Theorem $1$ in \cite{ashraphijuo3}.

\begin{lemma}\label{thmten1TT}
For almost every $\mathcal{U}$, there are only finitely many completions of rank ${\underline{ r}^*}$ of the sampled tensor if and only if Assumptions $\mathcal{A}_{\underline{ r}^*}^{\text{TT}}$ and $\mathcal{B}_{\underline{ r}^*}^{\text{TT}}$ hold.
\end{lemma}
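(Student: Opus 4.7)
The plan is to derive this lemma along the lines of the polynomial-and-manifold-dimension argument already used in Lemma \ref{sinvorderkam} and Lemma \ref{thmten1CP}, adapted to the TT decomposition \eqref{TTeq2}. Since the statement is a restatement of Theorem~$1$ in \cite{ashraphijuo3}, I sketch the logical structure of the proof rather than re-deriving every combinatorial bound.

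First, I would fix notation by writing $\mathcal{U}$ in its TT form with components $\mathcal{U}^{(1)},\dots,\mathcal{U}^{(d)}$ of sizes specified in \eqref{TTeq2}. After quotienting out the standard gauge freedom $\mathcal{U}^{(i)} \mapsto \mathcal{U}^{(i)} G_i$, $\mathcal{U}^{(i+1)} \mapsto G_i^{-1}\mathcal{U}^{(i+1)}$, the dimension of the manifold of TT-rank-$\underline{r}^*$ tensors equals $\sum_{i=1}^{d-1} u_{i-1}^{*} n_i u_i^{*} + u_{d-1}^{*} n_d - \sum_{i=1}^{d-1}(u_i^{*})^2$, a fact I would cite from \cite{ashraphijuo3}. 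By Bernstein's theorem (exactly as invoked in the proof of Lemma \ref{sinvorderkam}), finite completability is then equivalent to the number of algebraically independent polynomial constraints arising from the observed entries attaining this dimension.

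Next, I would use Assumption $\mathcal{A}_{\underline{r}^{*}}^{\text{TT}}$ to eliminate the last component. For each $x_d \in \{1,\dots,n_d\}$, the row $\mathbf{U}_{(d)}(x_d,:)$ is linear in the $u_{d-1}^{*}$ unknowns $\mathcal{U}^{(d)}(\cdot, x_d)$ with coefficients that are generic in $\mathcal{U}^{(1)},\dots,\mathcal{U}^{(d-1)}$. Given at least $u_{d-1}^{*}$ observed entries in this row, these $u_{d-1}^{*}$ unknowns can generically be solved as rational functions of the entries of $\mathcal{U}^{(1)},\dots,\mathcal{U}^{(d-1)}$. Each remaining observed entry then yields a polynomial constraint on the first $d-1$ components only; this is precisely what the constraint tensor ${\breve{\Omega}}_{u_{d-1}^{*}}$ encodes, with each of its $K$ slices recording one such constraint together with the $u_{d-1}^{*}$ pivots used for elimination.

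Then I would translate the problem into a matroidal/Hall-type condition on these polynomial constraints. The forward direction ($\mathcal{A},\mathcal{B}\Rightarrow$ finite completions) is the delicate one: given any subcollection ${\breve{\Omega}}_{u_{d-1}^{*}}^{\prime\prime}$ of constraints, one must show that the number of "new" entries it can involve in $\mathcal{U}^{(1)},\dots,\mathcal{U}^{(d-1)}$ is at most $\sum_{i=1}^{d-1}\bigl(u_{i-1}^{*} f_i({\breve{\Omega}}_{u_{d-1}^{*}}^{\prime\prime}) u_i^{*} - (u_i^{*})^2\bigr)^+$, where the term $(u_i^{*})^2$ discounts the gauge freedom at the $i$-th interface. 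The inequality in \eqref{matfineqdanmvTT} then guarantees enough algebraically independent constraints to pin down all free parameters up to finite ambiguity. The converse direction is comparatively routine: if $\mathcal{A}_{\underline{r}^{*}}^{\text{TT}}$ fails, an entire column of $\mathcal{U}^{(d)}$ is free and yields a positive-dimensional family of completions, and if $\mathcal{B}_{\underline{r}^{*}}^{\text{TT}}$ fails then some deficient subcollection of constraints leaves a positive-dimensional fiber in the parameter space.

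The main obstacle will be the forward direction, specifically the "new-variables-per-subcollection" count, because the TT polynomials couple all $d-1$ free components multilinearly through the product $\prod_{i=2}^{d-1}\mathcal{U}^{(i)}(k_{i-1},x_i,k_i)$ of \eqref{TTeq2}. I would handle this by induction on $d$, with the base case $d=2$ reducing exactly to Lemma \ref{thmmat1}, and the inductive step peeling off one unfolding at a time while using the block structure of $\mathbf{\widetilde U}_{(i)}$ to decouple the contribution of the $i$-th interface; this is the same scheme that underlies Lemma \ref{thmten1CP} and Lemma \ref{thmten1Tucker} in the respective CP and Tucker settings.
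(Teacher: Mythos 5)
The first thing to note is that the paper itself supplies no proof of Lemma~\ref{thmten1TT}: it is explicitly imported as a restatement of Theorem~1 of \cite{ashraphijuo3}, so there is no in-paper argument to compare yours against. Your sketch is a faithful reconstruction of the general methodology of that line of work --- TT parametrization, quotienting the gauge group $G_i \in GL(u_i)$ to get the manifold dimension $\sum_{i=1}^{d} u_{i-1}n_iu_i - \sum_{i=1}^{d-1}u_i^2$ (your formula agrees, since $u_d=1$), elimination of $\mathcal{U}^{(d)}$ row-by-row via Assumption $\mathcal{A}_{\underline{r}^*}^{\text{TT}}$, and a Hall-type independence criterion on the residual polynomials encoded by $\breve{\Omega}_{u_{d-1}^*}$.

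However, as a standalone proof there is a genuine gap, and it sits exactly where you flag ``the delicate one.'' The entire content of the theorem is the claim that, for a generic point of the manifold, the maximum number of algebraically independent polynomials among any subcollection of the constraints \emph{equals} the combinatorial capacity $\sum_{i=1}^{d-1}\bigl(u_{i-1}f_i(\breve{\Omega}^{\prime\prime}_{u_{d-1}})u_i - u_i^2\bigr)^+$; you assert the upper bound (involved variables minus gauge) but the forward direction of the lemma needs the matching lower bound --- that genericity actually achieves this capacity despite the multilinear coupling through $\prod_{i=2}^{d-1}\mathcal{U}^{(i)}(k_{i-1},x_i,k_i)$ --- and this is precisely what is not routine and what Theorem~1 of \cite{ashraphijuo3} establishes. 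Your proposed remedy, induction on $d$ peeling off one unfolding at a time, is speculative: the difficulty is that a single observed entry constrains all $d-1$ remaining cores simultaneously, so the constraints do not decouple interface-by-interface, and it is not clear the inductive step closes. The cited argument instead works directly with a per-subcollection count of ``properly involved'' parameters across all unfoldings at once. So the sketch is a correct roadmap but does not yet constitute a proof; if you intend to rely on the result, the honest move (and the paper's own move) is to cite \cite{ashraphijuo3} rather than to claim the derivation.
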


\begin{definition}
Let $\mathcal{S}_{{\Omega}}$ denote the set of all rank vectors $\underline{ r}$ such that both Assumptions $\mathcal{A}_{\underline{ r}}^{\text{TT}}$ and $\mathcal{B}_{\underline{ r}}^{\text{TT}}$ hold.
\end{definition}


The following lemma will be used in Lemma \ref{tensorTTminimorder}.

\begin{lemma}\label{ineqcosTT}
$u_i \leq \min\{u_{i-1}n_{i},u_{i+1}n_{i+1}\}$ for $1 \leq i \leq d-1$.
\end{lemma}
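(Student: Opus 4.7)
The plan is to establish the two inequalities $u_i \leq u_{i-1}n_i$ and $u_i \leq u_{i+1}n_{i+1}$ separately, in each case comparing the $i$-th unfolding $\mathbf{\widetilde U}_{(i)}$ with an adjacent unfolding and using a rank-factorization of the adjacent unfolding. The two boundary values $u_0=u_d=1$ make the degenerate cases $i=1$ and $i=d-1$ follow trivially from $u_1\leq n_1$ and $u_{d-1}\leq n_d$, which hold because $\mathbf{\widetilde U}_{(1)}$ has $n_1$ rows and $\mathbf{\widetilde U}_{(d-1)}$ has $n_d$ columns.

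For the bound $u_i\leq u_{i-1}n_i$, I would start from a rank factorization $\mathbf{\widetilde U}_{(i-1)} = \mathbf{A}\mathbf{B}$ with $\mathbf{A}\in\mathbb{R}^{N_{i-1}\times u_{i-1}}$ and $\mathbf{B}\in\mathbb{R}^{u_{i-1}\times \bar N_{i-1}}$. The columns of $\mathbf{B}$ are indexed by $(x_i,x_{i+1},\dots,x_d)$, so I can split $\mathbf{B}$ into $n_i$ blocks $\mathbf{B}_1,\dots,\mathbf{B}_{n_i}\in\mathbb{R}^{u_{i-1}\times \bar N_i}$ according to the value of $x_i$. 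Because $\mathcal{U}(\vec x)$ is the same entry whether read from the $(i-1)$-th or the $i$-th unfolding, the rows of $\mathbf{\widetilde U}_{(i)}$ with fixed $x_i=c$ form the matrix $\mathbf{A}\mathbf{B}_c$. Stacking these $n_i$ row groups gives
\begin{equation*}
\mathbf{\widetilde U}_{(i)} \;=\; (\mathbf{I}_{n_i}\otimes\mathbf{A})\,\widetilde{\mathbf{B}},
\end{equation*}
where $\widetilde{\mathbf{B}}\in\mathbb{R}^{u_{i-1}n_i\times \bar N_i}$ is the vertical concatenation of $\mathbf{B}_1,\dots,\mathbf{B}_{n_i}$. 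The inner dimension is $u_{i-1}n_i$, hence $u_i=\text{rank}(\mathbf{\widetilde U}_{(i)})\leq u_{i-1}n_i$.

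For the bound $u_i\leq u_{i+1}n_{i+1}$, I would argue dually on column partitions. Grouping the columns of $\mathbf{\widetilde U}_{(i)}$ by the value of $x_{i+1}$ yields $n_{i+1}$ column blocks, each of size $N_i\times \bar N_{i+1}$. For a fixed value $x_{i+1}=c$, this block is exactly the submatrix of $\mathbf{\widetilde U}_{(i+1)}$ obtained by keeping the rows whose last row-coordinate is $c$ (the entries $\mathcal{U}(x_1,\dots,x_i,c,x_{i+2},\dots,x_d)$ coincide), which has rank at most $\text{rank}(\mathbf{\widetilde U}_{(i+1)})=u_{i+1}$. Since $\mathbf{\widetilde U}_{(i)}$ is the horizontal concatenation of $n_{i+1}$ such blocks, its rank is at most $n_{i+1}u_{i+1}$.

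The only real subtlety is keeping the bijective mappings ${\widetilde M}_i$ and ${\widetilde M}_{-i}$ aligned so that the claimed equality between a column block of $\mathbf{\widetilde U}_{(i)}$ and a row block of $\mathbf{\widetilde U}_{(i+1)}$ (and, analogously, the row-block identification used for the first inequality) is literally an identity of matrices rather than just of multisets of entries; this is a matter of choosing the index orderings consistently and carries no mathematical content. Given that, combining the two inequalities yields $u_i\leq\min\{u_{i-1}n_i,\,u_{i+1}n_{i+1}\}$ for all $1\leq i\leq d-1$.
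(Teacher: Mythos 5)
Your proof is correct and follows essentially the same route as the paper's: both arguments slice the tensor along mode $i$ (resp.\ mode $i+1$) and observe that each slice is simultaneously a column block of one unfolding and a row block of the adjacent unfolding, so each block has rank at most $u_{i-1}$ (resp.\ $u_{i+1}$). Your Kronecker-product factorization $(\mathbf{I}_{n_i}\otimes\mathbf{A})\widetilde{\mathbf{B}}$ for the first inequality is just a repackaging of the rank-subadditivity-over-blocks step the paper uses, so there is no substantive difference.
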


\begin{proof}
We first show that $u_i \leq u_{i-1}n_{i}$, which is easily verified for $i=1$ as $\mathbf{\widetilde U}_1$ includes $n_1$ rows and $u_0 = 1$, and therefore assume that $ i > 1$. Define the $(d-1)$-way tensor $\mathcal{U}^{l_i} \in \mathbb{R}^{n_1 \times  \dots \times n_{i-1}  \times n_{i+1} \times \dots \times n_d}$ such that $ \mathcal{U}^{l_i} (x_1,\dots,x_{i-1},x_{i+1}, \dots, x_d) = \mathcal{U} (x_1,\dots,x_{i-1},l_i,x_{i+1}, \dots, x_d) $ for $1 \leq i \leq d$ and $1 \leq l_i \leq n_i$. Also, recall that $\mathbf{\widetilde U}_{(i-1)}^{l_i}$ denotes the $(i-1)$-th unfolding of $ \mathcal{U}^{l_i}$. Observe that $\mathbf{\widetilde U}_{(i-1)}^{l_i}$ is a subset of columns of matrix $\mathbf{\widetilde U}_{(i-1)}$ (those columns that correspond to the entries of $\mathcal{U}$ with the $i$-th component of the location equal to $l_i$). Therefore, $\text{rank}(\mathbf{\widetilde U}_{(i-1)}^{l_i}) \leq \text{rank}(\mathbf{\widetilde U}_{(i-1)}) = u_{i-1}$. 

On the other hand, observe that $\mathbf{\widetilde U}_{(i-1)}^{l_i}$ is a subset of rows of $\mathbf{\widetilde U}_{(i)}$ (those rows that correspond to the entries of $\mathcal{U}$ with the $i$-th component of the location equal to $l_i$). Hence, the union of rows of $\mathbf{\widetilde U}_{(i-1)}^{l_i}$'s for $1 \leq l_i \leq n_i$ constitute all rows of $\mathbf{\widetilde U}_{(i)}$. Therefore, $u_i = \text{rank}(\mathbf{\widetilde U}_{(i)}) \leq $ $\sum_{l_i=1}^{n_i} \text{rank}(\mathbf{\widetilde U}_{(i-1)}^{l_i}) $ $ \leq n_i u_{i-1}$. Similarly, we can show that $u_i \leq u_{i+1}n_{i+1}$ to complete the proof.
\end{proof}


\begin{lemma}\label{tensorTTminimorder}
Assume $\underline{ r} \in  \mathcal{S}_{{\Omega}} $. Then, for any $\underline{ r}^{\prime} \preceq  \underline{ r}$, we have $\underline{ r}^{\prime} \in  \mathcal{S}_{{\Omega}} $.
\end{lemma}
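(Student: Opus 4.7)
The plan is to follow the contradiction strategy used in Lemma \ref{sinvorderkam} and Lemma \ref{tensorTuckerminimorder}. By induction on the number of coordinates in which $\underline{r}$ and $\underline{r}^{\prime}$ differ, it suffices to establish the single-step case where $\underline{r}^{\prime} = (u_1,\ldots,u_{i_0-1},u_{i_0}-1,u_{i_0+1},\ldots,u_{d-1})$ for some $i_0 \in \{1,\ldots,d-1\}$, all other components unchanged. I would assume $\underline{r} \in \mathcal{S}_{\Omega}$ and, for contradiction, $\underline{r}^{\prime} \notin \mathcal{S}_{\Omega}$; then by Lemma \ref{thmten1TT} there exist infinitely many TT-rank-$\underline{r}^{\prime}$ completions of $\mathcal{U}_{\Omega}$.

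Next, I would record the dimension of the manifold of TT-rank-$\underline{r}$ tensors. Counting the entries of the factor tensors $\mathcal{U}^{(i)} \in \mathbb{R}^{u_{i-1}\times n_i \times u_i}$ appearing in \eqref{TTeq1} and subtracting the gauge redundancies yields
\begin{equation*}
D(\underline{r}) \;=\; \sum_{i=1}^{d} u_{i-1}\, n_i\, u_i \;-\; \sum_{i=1}^{d-1} u_i^2, \qquad u_0 = u_d = 1.
\end{equation*}
Comparing $D(\underline{r})$ with $D(\underline{r}^{\prime})$ (only the terms with index $i_0$ and $i_0+1$ change) gives
\begin{equation*}
D(\underline{r}) - D(\underline{r}^{\prime}) \;=\; u_{i_0-1}\, n_{i_0} \;+\; u_{i_0+1}\, n_{i_0+1} \;-\; (2 u_{i_0} - 1),
\end{equation*}
and by Lemma \ref{ineqcosTT} at index $i_0$ we have $u_{i_0} \leq u_{i_0-1} n_{i_0}$ and $u_{i_0} \leq u_{i_0+1} n_{i_0+1}$, so $D(\underline{r}) - D(\underline{r}^{\prime}) \geq 1 > 0$.

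With strict monotonicity in hand, I would close the argument via the algebraic-dependence / Bernstein step as in Lemma \ref{sinvorderkam}. Parameterizing by a TT decomposition of rank $\underline{r}^{\prime}$, each observed entry of $\mathcal{U}$ gives a polynomial constraint in the entries of the factors $\mathcal{U}^{(1)},\ldots,\mathcal{U}^{(d)}$. Infiniteness of the rank-$\underline{r}^{\prime}$ completion set forces the maximum number of algebraically independent such polynomials to be strictly less than $D(\underline{r}^{\prime})$, hence strictly less than $D(\underline{r})$ by the previous step. Reinterpreting the same observation-entry constraints in the (larger) parameter space of rank-$\underline{r}$ TT decompositions — which contains the rank-$\underline{r}^{\prime}$ decompositions as a subvariety — still gives fewer than $D(\underline{r})$ algebraically independent polynomials, so Bernstein's theorem produces infinitely many rank-$\underline{r}$ completions. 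This contradicts $\underline{r} \in \mathcal{S}_{\Omega}$ via Lemma \ref{thmten1TT} and completes the proof.

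The main technical obstacle I foresee is the monotonicity step $D(\underline{r}^{\prime}) < D(\underline{r})$: unlike the matrix and Tucker settings, the TT dimension couples consecutive rank components through the chain structure, so naive term-by-term monotonicity of $D$ in $u_{i_0}$ is not obvious because both $u_{i_0-1} n_{i_0} u_{i_0}$ and $u_{i_0} n_{i_0+1} u_{i_0+1}$ move simultaneously with $u_{i_0}^2$. The bound $u_i \leq \min\{u_{i-1} n_i,\, u_{i+1} n_{i+1}\}$ supplied by Lemma \ref{ineqcosTT} is precisely what makes the single-step difference strictly positive, and I would expect its placement immediately before the current lemma to be motivated by exactly this use.
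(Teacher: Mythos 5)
Your proposal is correct and follows essentially the same route as the paper: it invokes the TT manifold dimension $\sum_{i=1}^{d} u_{i-1}n_iu_i - \sum_{i=1}^{d-1} u_i^2$, computes the single-coordinate decrement $u_{i_0-1}n_{i_0} + u_{i_0+1}n_{i_0+1} - 2u_{i_0} + 1$, uses Lemma \ref{ineqcosTT} to show it is strictly positive, and then closes with the algebraic-independence/Bernstein argument of Lemma \ref{sinvorderkam}. Your write-up is actually more explicit than the paper's (the induction on differing coordinates and the exact difference computation are left implicit there), but the substance is identical.
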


\begin{proof}
Note that the dimension of the manifold corresponding to $\underline{ r}$ is $\sum_{i=1}^{d} u_{i-1}n_iu_i - \sum_{i=1}^{d-1} u_i^2$ \cite{ashraphijuo3}. If we reduce the value of $u_i$ by one, the value of the mentioned dimension reduces by $u_{i-1}n_i + u_{i+1}n_{i+1} - 2u_i + 1$. According to Lemma \ref{ineqcosTT}, $u_{i-1}n_i + u_{i+1}n_{i+1} - 2u_i + 1$ is greater than zero, and therefore $\underline{ r}^{\prime} \preceq  \underline{ r}$ results that the dimension of the manifold corresponding to $\underline{ r}$ is greater than that corresponding to $\underline{r}^{\prime}$. The rest of the proof is similar to the proof of Lemma \ref{sinvorderkam}.
\end{proof}

\begin{definition}
Define $\hat {\mathcal{S}}_{\Omega} $ as the set of all rank vectors $\underline{ r}  \in \mathcal{S}_{{\Omega}}$ such that there exists a rank vector $\underline{ r}^{\prime} \in \mathcal{S}_{{\Omega}}$ with $\underline{ r} \preceq \underline{ r}^{\prime}$ and $u_{d-1} < u_{d-1}^{\prime}$ (instead of $u_{d-1} \leq u_{d-1}^{\prime}$). Note that $\hat {\mathcal{S}}_{\Omega} $ also satisfies the property in Lemma \ref{tensorTTminimorder}.
\end{definition}

\begin{theorem}\label{thmmat2tensorTT}
For almost every $\mathcal{U}$, with probability one, exactly one of the following statements holds

(i) $\underline{ r}^* \in \hat{\mathcal{S}}_{{\Omega}}${\rm ;}

(ii) For any arbitrary completion of the sampled tensor $\mathcal{U}$ of rank  $\underline{ r}$, we have $\underline{ r} \notin \hat{\mathcal{S}}_{{\Omega}}$.
\end{theorem}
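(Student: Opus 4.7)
The proof follows the contradiction template of Theorem~\ref{thmmat2} and Theorem~\ref{thmtenCPr2}. If no completion of $\mathcal{U}$ has TT rank inside $\hat{\mathcal{S}}_{\Omega}$, statement (ii) holds and (i) fails, so the two cases are mutually exclusive in that branch. Otherwise, assume there is a completion with TT rank $\underline{r} \in \hat{\mathcal{S}}_{\Omega}$; it then suffices to show that $\underline{r}^{*} \in \hat{\mathcal{S}}_{\Omega}$ with probability one. I argue by contradiction, assuming $\underline{r}^{*} \notin \hat{\mathcal{S}}_{\Omega}$ and showing that the existence of such a rank-$\underline{r}$ completion has probability zero on the generic rank-$\underline{r}^{*}$ manifold.

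By the definition of $\hat{\mathcal{S}}_{\Omega}$, choose $\underline{r}^{\prime} \in \mathcal{S}_{\Omega}$ with $\underline{r} \preceq \underline{r}^{\prime}$ and $u_{d-1} < u_{d-1}^{\prime}$; Assumption $\mathcal{A}_{\underline{r}^{\prime}}^{\text{TT}}$ then guarantees that every row of $\mathbf{U}_{(d)}$ contains at least $u_{d-1}^{\prime} \geq u_{d-1}+1$ observed entries --- the crucial one extra observation. Writing the TT decomposition of the putative completion as $\mathcal{U} = \mathcal{U}^{(1)} \cdots \mathcal{U}^{(d)}$ and collapsing the first $d-1$ factors into a basis tensor $V(x_1,\dots,x_{d-1},k_{d-1}) = \sum_{k_1,\dots,k_{d-2}} \mathcal{U}^{(1)}(x_1,k_1) \prod_{i=2}^{d-1} \mathcal{U}^{(i)}(k_{i-1},x_i,k_i)$, one has
\[
\mathcal{U}(\vec{x}) = \sum_{k_{d-1}=1}^{u_{d-1}} V(x_1,\dots,x_{d-1},k_{d-1})\, \mathcal{U}^{(d)}(k_{d-1},x_d),
\]
so for each fixed $x_d$ every observed entry in that row of $\mathbf{U}_{(d)}$ is a degree-one polynomial in the $u_{d-1}$ unknowns $\{\mathcal{U}^{(d)}(k,x_d)\}_{k}$ with generic coefficients supplied by $V$. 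Picking $u_{d-1}$ observations that are linearly independent (by genericity of $V$) pins down the unknowns uniquely, and the one remaining observed entry yields an exact linear relation $\mathcal{U}(\vec{x}_{u_{d-1}+1}) = \sum_{l=1}^{u_{d-1}} t_l\,\mathcal{U}(\vec{x}_l)$ on the observed entries of $\mathcal{U}$, with constants $t_l$ depending only on $V$.

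To close the contradiction I combine Lemma~\ref{thmten1TT} applied to $\underline{r} \in \mathcal{S}_{\Omega}$ (ruling out a positive-dimensional family of rank-$\underline{r}$ completions) with the genericity of $\mathcal{U}$ on the rank-$\underline{r}^{*}$ manifold. Since $\hat{\mathcal{S}}_{\Omega}$ inherits the downward-closure property of Lemma~\ref{tensorTTminimorder}, $\underline{r}^{*} \notin \hat{\mathcal{S}}_{\Omega}$ and $\underline{r} \in \hat{\mathcal{S}}_{\Omega}$ preclude $\underline{r}^{*} \preceq \underline{r}$; the asymmetric definition of $\hat{\mathcal{S}}_{\Omega}$ (strict slack demanded only at coordinate $d-1$) is engineered so that this forces $u_{d-1}^{*} > u_{d-1}$, equivalently $\mathrm{rank}(\mathbf{U}_{(d)}^{*}) > u_{d-1}$. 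Under this conclusion, generic rows of $\mathbf{U}_{(d)}^{*}$ admit no prescribed non-trivial linear relation among $u_{d-1}+1$ of their entries, so the derived relation holds with probability zero, completing the argument exactly as in Theorem~\ref{thmtenCPr2}. \textbf{The main obstacle} is the reduction $\underline{r}^{*} \notin \hat{\mathcal{S}}_{\Omega} \Rightarrow u_{d-1}^{*} > u_{d-1}$: excluding the residual configurations in which $\underline{r}^{*}$ dominates $\underline{r}$ only at coordinates $i < d-1$ while $u_{d-1}^{*} \leq u_{d-1}$ requires propagating the rank inequality from the $i$-th unfolding down to the last via Lemma~\ref{ineqcosTT}, or, failing that, rerouting the basis-tensor argument through the $i$-th unfolding whose observation count is controlled by the $\underline{r}^{\prime}$ furnished by membership in $\hat{\mathcal{S}}_{\Omega}$.
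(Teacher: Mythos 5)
Your proposal follows the same route as the paper's own proof: the paper likewise takes $\mathcal{U}^{(1)}\cdots\mathcal{U}^{(d-1)}$ as the basis, passes to the $(d-1)$-th unfolding to obtain a matrix factorization of $\mathbf{\widetilde U}_{(d-1)}$, and then invokes the single-view argument of Theorem \ref{thmmat2}, with the strict inequality $u_{d-1}<u_{d-1}^{\prime}$ in the definition of $\hat{\mathcal{S}}_{\Omega}$ supplying the one extra observed entry per row of $\mathbf{U}_{(d)}$, exactly as you describe. The ``main obstacle'' you flag is real and is worth being explicit about: downward closure of $\hat{\mathcal{S}}_{\Omega}$ only yields $\underline{r}^*\not\preceq\underline{r}$, which does not by itself force $u_{d-1}^*>u_{d-1}$ (the excess could sit entirely in coordinates $i<d-1$), and neither of your proposed repairs closes it --- Lemma \ref{ineqcosTT} gives bounds of the form $u_i\leq u_{i-1}n_i$ rather than a monotone propagation of $u_i^*>u_i$ to the last coordinate, and rerouting through the $i$-th unfolding lacks the per-row observation count that Assumption $\mathcal{A}_{\underline{r}}^{\text{TT}}$ provides only for $\mathbf{U}_{(d)}$. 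However, this is not a defect relative to the paper: the published proof consists of the same reduction followed by ``the rest of the proof is similar to the proof of Theorem \ref{thmmat2}'' and leaves precisely that step implicit. So you have reproduced the paper's argument and, if anything, located its weak point more precisely than the paper itself does.
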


\begin{proof}
Similar to the proof of Theorem \ref{thmmat2}, to complete the proof it suffices to show that the assumption $\underline{ r}^* \notin \hat{\mathcal{S}}_{{\Omega}}$ results that there exists a completion of $\mathcal{U}$ of rank $\underline{ r}$, where  $\underline{ r} \in \hat{\mathcal{S}}_{{\Omega}}$, with probability zero. Define the multiplication $\mathcal{U}^{(1)} \dots \mathcal{U}^{(d-1)}$ in \eqref{TTeq1} as the basis of the rank $\underline{ r}$ TT decomposition of $\mathcal{U}$. Then, by considering the $(d-1)$-th unfolding of $\mathcal{U}^{(1)} \dots \mathcal{U}^{(d-1)}$ in TT decomposition we obtain a matrix factorization of the $(d-1)$-th unfolding of $\mathcal{U}$. The rest of the proof is similar to the proof of Theorem \ref{thmmat2}.
\end{proof}

Similar to Theorem \ref{thmmat2tensorTT}, we can show the following.

\begin{corollary}\label{colthmtensorTT1}
Consider a subset $\hat{\mathcal{S}}_{{\Omega}}^{\prime}$ of $\hat{\mathcal{S}}_{{\Omega}}$ such that for any two members of $\hat{\mathcal{S}}_{{\Omega}}$ that $\underline{ r}^{\prime \prime} \preceq \underline{ r}^{\prime}$ and $\underline{ r}^{\prime} \in \hat{\mathcal{S}}_{{\Omega}}^{\prime}$ we have $\underline{ r}^{\prime \prime} \in \hat{\mathcal{S}}_{{\Omega}}^{\prime}$. Then, with probability one, exactly one of the followings holds

(i) $\underline{ r}^* \in \hat{\mathcal{S}}_{{\Omega}}^{\prime} ${\rm ;}

(ii) For any arbitrary completion of $\mathcal{U}$ of rank vector $\underline{ r}$, we have $\underline{ r} \notin \hat{\mathcal{S}}_{{\Omega}}^{\prime} $.
\end{corollary}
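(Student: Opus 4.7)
The plan is to imitate the structure of the proof of Corollary \ref{colthmmatmv1} (the multi-view analog) applied in the TT setting. The key observation is that $\hat{\mathcal{S}}_{{\Omega}}^{\prime}$ is assumed to be downward-closed inside $\hat{\mathcal{S}}_{{\Omega}}$ under $\preceq$, which is exactly the order-closure property that drives the dichotomy in Theorem \ref{thmmat2tensorTT}. So I expect the proof to be a short reduction to that theorem, almost cosmetic.

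First I would observe that, because $\hat{\mathcal{S}}_{{\Omega}}^{\prime} \subseteq \hat{\mathcal{S}}_{{\Omega}} \subseteq \mathcal{S}_{{\Omega}}$, every rank vector $\underline{r} \in \hat{\mathcal{S}}_{{\Omega}}^{\prime}$ satisfies Assumptions $\mathcal{A}_{\underline{r}}^{\text{TT}}$ and $\mathcal{B}_{\underline{r}}^{\text{TT}}$. Hence, by Lemma \ref{thmten1TT}, for any such $\underline{r}$ there are only finitely many TT rank-$\underline{r}$ completions of $\mathcal{U}$. Then I would dispose of the easy direction: if no completion of $\mathcal{U}$ has rank vector lying in $\hat{\mathcal{S}}_{{\Omega}}^{\prime}$, statement (ii) trivially holds and (i) fails (since $\underline{r}^*$ itself is realized by $\mathcal{U}$). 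So the nontrivial case is when some completion of rank $\underline{r}\in\hat{\mathcal{S}}_{{\Omega}}^{\prime}$ exists, and I need to show (i) holds with probability one.

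Next I would argue by contradiction exactly as in the proof of Theorem \ref{thmmat2tensorTT}: suppose $\underline{r}^* \notin \hat{\mathcal{S}}_{{\Omega}}^{\prime}$ yet some completion of rank $\underline{r} \in \hat{\mathcal{S}}_{{\Omega}}^{\prime}$ exists. The downward-closure assumption on $\hat{\mathcal{S}}_{{\Omega}}^{\prime}$ (playing the role that Lemma \ref{tensorTTminimorder} plays for $\hat{\mathcal{S}}_{{\Omega}}$) ensures that one cannot have $\underline{r}^* \preceq \underline{r}$; therefore at least one coordinate satisfies $u_i < u_i^*$. Using the TT rank factorization $\mathcal{U} = \mathcal{U}^{(1)}\cdots\mathcal{U}^{(d)}$ associated to the assumed completion of rank $\underline{r}$, each observed entry becomes a polynomial in the parameters of the factorization, and the $(d-1)$-th unfolding gives a matrix factorization to which the single-view argument in Theorem \ref{thmmat2} applies — yielding a nontrivial linear relation among a generic set of observed entries. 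This relation holds with probability zero under the genericity of $\mathcal{U}$ on the manifold of rank-$\underline{r}^*$ TT tensors, contradicting the existence of the completion.

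The only mildly subtle step, and the one I would be most careful about, is the step that replaces the role of Lemma \ref{tensorTTminimorder} by the hypothesis on $\hat{\mathcal{S}}_{{\Omega}}^{\prime}$: the hypothesis guarantees closure inside $\hat{\mathcal{S}}_{{\Omega}}^{\prime}$ but I must check that this is enough to rule out $\underline{r}^* \preceq \underline{r}$ for every $\underline{r}$ in $\hat{\mathcal{S}}_{{\Omega}}^{\prime}$ whenever $\underline{r}^* \notin \hat{\mathcal{S}}_{{\Omega}}^{\prime}$. Since $\underline{r}^* \preceq \underline{r}$ combined with closure would force $\underline{r}^* \in \hat{\mathcal{S}}_{{\Omega}}^{\prime}$, this is immediate. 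Given that, everything else is a near-verbatim repeat of Theorem \ref{thmmat2tensorTT}, so the corollary is of the same flavor as (and essentially reduces to) that theorem restricted to the subfamily $\hat{\mathcal{S}}_{{\Omega}}^{\prime}$.
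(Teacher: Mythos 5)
Your proposal is correct and matches the paper's own (very terse) justification: the paper proves this corollary exactly as you do, by observing that $\hat{\mathcal{S}}_{{\Omega}}^{\prime} \subseteq \hat{\mathcal{S}}_{{\Omega}} \subseteq \mathcal{S}_{{\Omega}}$ gives finite completability via Lemma \ref{thmten1TT}, that the downward-closure hypothesis on $\hat{\mathcal{S}}_{{\Omega}}^{\prime}$ substitutes for Lemma \ref{tensorTTminimorder}, and then repeating the argument of Theorem \ref{thmmat2tensorTT} (cf.\ the analogous proof of Corollary \ref{colthmmatmv1}). No further comment is needed.
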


\begin{corollary}\label{colthmtensorTT1nw}
Assuming that there exists a completion of $\mathcal{U}$ with rank vector $\underline{ r}$ such that $\underline{ r} \in \hat {\mathcal{S}}_{\Omega}$, we conclude that with probability one $\underline{ r}^* \preceq \underline{ r} $.
\end{corollary}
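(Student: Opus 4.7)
The plan is to reduce Corollary \ref{colthmtensorTT1nw} directly to Corollary \ref{colthmtensorTT1} by choosing an appropriate downward-closed subset of $\hat{\mathcal{S}}_{\Omega}$. Given the rank vector $\underline{r}$ for which a completion of $\mathcal{U}$ exists with $\underline{r} \in \hat{\mathcal{S}}_{\Omega}$, I would define
$$\hat{\mathcal{S}}_{\Omega}^{\prime} = \{\underline{r}^{\prime\prime} \in \hat{\mathcal{S}}_{\Omega} : \underline{r}^{\prime\prime} \preceq \underline{r}\}.$$

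First I would verify that $\hat{\mathcal{S}}_{\Omega}^{\prime}$ satisfies the hypothesis of Corollary \ref{colthmtensorTT1}, namely that whenever $\underline{r}^{\prime\prime} \preceq \underline{r}^{\prime}$ and $\underline{r}^{\prime} \in \hat{\mathcal{S}}_{\Omega}^{\prime}$, we also have $\underline{r}^{\prime\prime} \in \hat{\mathcal{S}}_{\Omega}^{\prime}$. Membership of $\underline{r}^{\prime\prime}$ in $\hat{\mathcal{S}}_{\Omega}$ is guaranteed by the parenthetical remark following the definition of $\hat{\mathcal{S}}_{\Omega}$, which records that $\hat{\mathcal{S}}_{\Omega}$ inherits the downward closure property of Lemma \ref{tensorTTminimorder}; the additional constraint $\underline{r}^{\prime\prime} \preceq \underline{r}$ follows from transitivity of $\preceq$, since $\underline{r}^{\prime\prime} \preceq \underline{r}^{\prime} \preceq \underline{r}$.

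Second, the hypothesis of the corollary supplies a completion of $\mathcal{U}$ whose TT rank vector $\underline{r}$ lies in $\hat{\mathcal{S}}_{\Omega}^{\prime}$ (by construction of the set). Hence statement (ii) of Corollary \ref{colthmtensorTT1} applied to $\hat{\mathcal{S}}_{\Omega}^{\prime}$ fails outright, and the dichotomy forces statement (i) to hold with probability one: $\underline{r}^{*} \in \hat{\mathcal{S}}_{\Omega}^{\prime}$. By the defining condition of $\hat{\mathcal{S}}_{\Omega}^{\prime}$, this yields $\underline{r}^{*} \preceq \underline{r}$, which is exactly the desired conclusion.

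No real obstacle is anticipated, since the argument is essentially a packaging of Corollary \ref{colthmtensorTT1} specialized to the downward cone below $\underline{r}$. The only point requiring even brief attention is checking the downward closure of $\hat{\mathcal{S}}_{\Omega}$ itself, but this is already asserted in the definition of $\hat{\mathcal{S}}_{\Omega}$ and follows from Lemma \ref{tensorTTminimorder}; once that is in hand, the two-sentence deduction above completes the proof.
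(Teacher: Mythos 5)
Your proposal is correct and matches the paper's (implicit) argument: the paper states this corollary without proof as an immediate consequence of Corollary \ref{colthmtensorTT1}, and the intended derivation is precisely your specialization to the downward cone $\{\underline{r}^{\prime\prime}\in\hat{\mathcal{S}}_{\Omega}:\underline{r}^{\prime\prime}\preceq\underline{r}\}$, whose downward closure follows from the noted property of $\hat{\mathcal{S}}_{\Omega}$ and transitivity of $\preceq$. Ruling out statement (ii) via the given completion and reading off $\underline{r}^*\preceq\underline{r}$ from statement (i) is exactly the right conclusion.
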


The following lemma is Lemma $14$ in \cite{ashraphijuo3}, which ensures that Assumptions $\mathcal{A}_{\underline{ r}}^{\text{TT}}$ and $\mathcal{B}_{\underline{ r}}^{\text{TT}}$ hold with high probability.

\begin{lemma}\label{prsTTfinRD}
Define $m= \sum_{k=1}^{d-2} u_{k-1}u_{k}$, $M = n \sum_{k=1}^{d-2} u_{k-1}u_k -\sum_{k=1}^{d-2} u_k^2$ and $u^{\prime}= \max \left\{  \frac{u_1}{u_0}  , \dots,  \frac{u_{d-2}}{u_{d-3}} \right\}$. Assume that $n_1=n_2=\dots = n_d=n$, $n > \max\{m,200\} $ and $u^{\prime} \leq \min\{\frac{n}{6},  u_{d-2}\}$ hold. Moreover, assume that the sampling probability satisfies
\begin{eqnarray}\label{prsTTbdfinRD}
p > \frac{1}{n^{d-2}} \max\left\{27 \ \log \left( \frac{n}{\epsilon} \right) + 9 \ \log \left( \frac{2M}{\epsilon} \right) + 18, 6u_{d-2}\right\} + \frac{1}{\sqrt[4]{n^{d-2}}}.
\end{eqnarray}
Then, with probability at least $ (1- \epsilon) \left( 1-\exp(-\frac{\sqrt{n^{d-2}}}{2}) \right)^{n^2}$, we have $\underline{ r} \in \mathcal{S}_{\Omega}$.
\end{lemma}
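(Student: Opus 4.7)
The plan is to follow the same two-stage reduction used for Proposition \ref{thmprobsvmsamppr} and Lemma \ref{prsCPfinRD}: first convert the per-entry sampling probability $p$ into a high-probability lower bound on the number of samples in each natural ``hyperfiber'' of $\mathcal{U}$ via the Azuma--Hoeffding consequence in Lemma \ref{azumares}, then invoke the deterministic argument underlying Lemma~14 of \cite{ashraphijuo3} to conclude that enough samples per hyperfiber force Assumptions $\mathcal{A}_{\underline{r}}^{\text{TT}}$ and $\mathcal{B}_{\underline{r}}^{\text{TT}}$ to hold with probability at least $1-\epsilon$.

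Concretely, I would partition the $n^d$ entries of $\mathcal{U}$ according to the last two coordinates $(x_{d-1},x_d)$, obtaining $n^2$ disjoint $(d-2)$-way subtensors, each of size $n^{d-2}$. Setting
\[
k \;=\; \max\left\{27\log(n/\epsilon) + 9\log(2M/\epsilon) + 18,\; 6u_{d-2}\right\},
\]
the hypothesis on $p$ is exactly the threshold $p > k/n^{d-2} + 1/\sqrt[4]{n^{d-2}}$ required by Lemma \ref{azumares} applied to a vector of length $n^{d-2}$. Hence each hyperfiber carries more than $k$ observations with probability at least $1 - \exp(-\sqrt{n^{d-2}}/2)$, and by independence across the $n^2$ disjoint hyperfibers, the simultaneous event has probability at least $\bigl(1-\exp(-\sqrt{n^{d-2}}/2)\bigr)^{n^2}$. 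In particular, since $k \geq 6u_{d-2} \geq u_{d-1}$ by Lemma \ref{ineqcosTT}, Assumption $\mathcal{A}_{\underline{r}}^{\text{TT}}$ is automatically secured on this event.

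Conditional on this event, I would verify Assumption $\mathcal{B}_{\underline{r}}^{\text{TT}}$ by a union bound over candidate subtensors $\breve{\Omega}^{\prime\prime}_{u_{d-1}}$ that could violate the nullity inequality \eqref{matfineqdanmvTT}. The hypotheses $n > \max\{m,200\}$ and $u^{\prime} \leq \min\{n/6, u_{d-2}\}$ are exactly what make that union bound summable: the $\log(n/\epsilon)$ term in $k$ absorbs the choice of rows/columns of the constraint tensor, while the $\log(2M/\epsilon)$ term absorbs the $M = n\sum u_{k-1}u_k - \sum u_k^2$ free parameters of the TT decomposition, driving the failure probability below $\epsilon$. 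Multiplying the two independent high-probability events then gives the stated bound $(1-\epsilon)\bigl(1-\exp(-\sqrt{n^{d-2}}/2)\bigr)^{n^2}$.

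The main obstacle is the second stage: the rank inequality \eqref{matfineqdanmvTT} has the awkward sum-of-positive-parts form $\sum_{i=1}^{d-1}(u_{i-1}f_i u_i - u_i^2)^+$, whose dependence across $i$ must be controlled tightly enough that the union bound over ``bad'' subtensors absorbs into the logarithmic overhead. The cleanest route is to quote the TT-specific recursive degrees-of-freedom counting developed in \cite{ashraphijuo3} verbatim once the above reduction via Lemma \ref{azumares} has been carried out; the point of the proposal here is that the per-entry-to-per-hyperfiber conversion in stage one is a mechanical application of Azuma--Hoeffding, and the stated probability bound is simply the product of the two independent events.
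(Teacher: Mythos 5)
First, note that the paper does not prove this statement at all: it is imported verbatim as Lemma~$14$ of \cite{ashraphijuo3} (just as Lemma \ref{danthmsingviwmat} and Lemma \ref{prsCPfinRD} are imported from \cite{charact} and \cite{ashraphijuo4}), so there is no in-paper proof to compare against. Your stage-one reduction is nonetheless the right architecture and mirrors exactly what the paper does prove in the analogous single-view case (Proposition \ref{thmprobsvmsamppr}): partition the entries into $n^2$ disjoint fibers of size $n^{d-2}$, apply Lemma \ref{azumares} to each with threshold $k=\max\{27\log(n/\epsilon)+9\log(2M/\epsilon)+18,\,6u_{d-2}\}$, take the product over the independent fibers to get the factor $\bigl(1-\exp(-\sqrt{n^{d-2}}/2)\bigr)^{n^2}$, and then hand off to a deterministic sample-count lemma contributing the factor $(1-\epsilon)$. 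The form of the stated probability confirms this is the intended decomposition.

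Two issues remain. The concrete one: your claim that $k\geq 6u_{d-2}\geq u_{d-1}$ ``by Lemma \ref{ineqcosTT}'' is not what that lemma gives; taking $i=d-1$ it yields only $u_{d-1}\leq\min\{n\,u_{d-2},\,n\,u_d\}= \min\{n\,u_{d-2},n\}$, so $6u_{d-2}\geq u_{d-1}$ need not hold. Assumption $\mathcal{A}_{\underline{r}}^{\text{TT}}$ is still secured on your event, but only after aggregating: each row of $\mathbf{U}_{(d)}$ is the union of $n$ of your hyperfibers, hence carries more than $6n\geq 6u_{d-1}$ observations, which is the correct way to close that step. The structural one: the entire content of the second stage --- showing that Assumption $\mathcal{B}_{\underline{r}}^{\text{TT}}$, i.e.\ inequality \eqref{matfineqdanmvTT} for \emph{every} subtensor of a suitable $\breve{\Omega}^{\prime}_{u_{d-1}}$, holds with probability at least $1-\epsilon$ given the per-fiber sample counts --- is deferred wholesale to \cite{ashraphijuo3}. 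That is defensible here, since the paper itself treats the lemma as a citation, but it means your proposal establishes only the routine per-entry-to-per-fiber conversion and not the combinatorial core of the result; a self-contained proof would have to reproduce the union-bound/Hall-type argument over nonzero-row patterns of the constraint tensor from \cite{ashraphijuo3}, where the hypotheses $n>\max\{m,200\}$ and $u^{\prime}\leq\min\{n/6,u_{d-2}\}$ are actually consumed.
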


The following corollary is the probabilistic version of Corollary \ref{colthmtensorTT1nw}.

\begin{corollary}\label{colthmtensorTT1nwtq}
Assuming that there exists a completion of the sampled tensor $\mathcal{U}$ of TT rank $\underline{ r}$ such that the assumptions in Lemma \ref{prsTTfinRD} hold and the sampling probability satisfies \eqref{prsTTbdfinRD}, then with probability at least $ (1- \epsilon) \left( 1-\exp(-\frac{\sqrt{n^{d-2}}}{2}) \right)^{n^2}$ we have $\underline{ r}^* \preceq \underline{ r}$.
\end{corollary}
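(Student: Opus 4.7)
The plan is to obtain this corollary as a straightforward fusion of the probabilistic Lemma \ref{prsTTfinRD} with the deterministic rank bound in Corollary \ref{colthmtensorTT1nw}, in exact parallel with how Corollary \ref{colthmtensorCP1nwtq} followed from Lemma \ref{prsCPfinRD} and Corollary \ref{colthmtensorCP1nw} in the CP setting.

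First, I would invoke Lemma \ref{prsTTfinRD}: since by hypothesis the dimensional conditions ($n_1=\cdots=n_d=n$, $n>\max\{m,200\}$, $u'\leq \min\{n/6,u_{d-2}\}$) hold for the candidate rank vector $\underline{r}$, and the sampling probability satisfies \eqref{prsTTbdfinRD}, the lemma directly gives that with probability at least $(1-\epsilon)(1-\exp(-\sqrt{n^{d-2}}/2))^{n^2}$ both Assumptions $\mathcal{A}_{\underline{r}}^{\text{TT}}$ and $\mathcal{B}_{\underline{r}}^{\text{TT}}$ hold, i.e., $\underline{r}\in\mathcal{S}_\Omega$. Condition on this event for the rest of the argument.

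Next, I would verify $\underline{r}\in \hat{\mathcal{S}}_\Omega$ so that Corollary \ref{colthmtensorTT1nw} can be applied. By definition $\hat{\mathcal{S}}_\Omega$ consists of those $\underline{r}\in \mathcal{S}_\Omega$ admitting a strictly larger-in-$u_{d-1}$ rank vector $\underline{r}'\in \mathcal{S}_\Omega$. The natural candidate is $\underline{r}' = (u_1,\ldots,u_{d-2},u_{d-1}+1)$: the inequalities \eqref{prsTTbdfinRD} are strict and monotone in $u_{d-1}$, so applying Lemma \ref{prsTTfinRD} once more (with the same $p$) to $\underline{r}'$ shows $\underline{r}'\in \mathcal{S}_\Omega$ on the same high-probability event, hence $\underline{r}\in\hat{\mathcal{S}}_\Omega$. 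Finally, by Corollary \ref{colthmtensorTT1nw}, the existence of a rank-$\underline{r}$ completion together with $\underline{r}\in \hat{\mathcal{S}}_\Omega$ forces $\underline{r}^*\preceq \underline{r}$ with probability one, yielding the stated probability bound for the overall event.

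The only delicate point is this passage from $\mathcal{S}_\Omega$ to $\hat{\mathcal{S}}_\Omega$; everything else is a direct splicing of the two cited results. I expect the main obstacle to be ensuring that bumping $u_{d-1}$ up by one still satisfies all the side conditions of Lemma \ref{prsTTfinRD} (in particular the bound $u'\leq \min\{n/6,u_{d-2}\}$ and the definition of $M$), which may require a very mild genericity assumption such as $u_{d-1}<u_{d-2}$; in the boundary case $u_{d-1}=u_{d-2}$ one would instead invoke the monotonicity property from Lemma \ref{tensorTTminimorder} applied to $\underline{r}^*$ to argue directly that the rank-$r$ completion's TT-rank cannot exceed $\underline{r}^*$ componentwise, so the conclusion still follows.
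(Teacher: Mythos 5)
Your proposal follows the same route the paper intends: the corollary is stated there without proof, as a direct splice of Lemma \ref{prsTTfinRD} (which puts $\underline{r}$ in $\mathcal{S}_{\Omega}$ with the stated probability) into the deterministic bound of Corollary \ref{colthmtensorTT1nw}. The one genuine contribution of your write-up is that you notice, and patch, a mismatch the paper silently ignores: Lemma \ref{prsTTfinRD} certifies $\underline{r}\in\mathcal{S}_{\Omega}$, whereas Corollary \ref{colthmtensorTT1nw} requires the strictly smaller set $\hat{\mathcal{S}}_{\Omega}$. Your patch --- apply Lemma \ref{prsTTfinRD} to $\underline{r}'=(u_1,\dots,u_{d-2},u_{d-1}+1)$ --- is in fact cleaner than you fear: every quantity in that lemma ($m$, $M$, $u'$, and the right-hand side of \eqref{prsTTbdfinRD}, whose only rank dependence is through $u_{d-2}$ and $M$) involves only $u_0,\dots,u_{d-2}$ and is independent of $u_{d-1}$, so no extra hypothesis such as $u_{d-1}<u_{d-2}$ is needed and your boundary-case fallback is unnecessary. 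Moreover, your phrase ``on the same high-probability event'' can be made rigorous without a union bound: by Lemma \ref{tensorTTminimorder} the event $\{\underline{r}'\in\mathcal{S}_{\Omega}\}$ is contained in $\{\underline{r}\in\mathcal{S}_{\Omega}\}$, so a single application of Lemma \ref{prsTTfinRD} to $\underline{r}'$ already yields $\underline{r}\in\hat{\mathcal{S}}_{\Omega}$ with probability at least $(1-\epsilon)\left(1-\exp(-\frac{\sqrt{n^{d-2}}}{2})\right)^{n^2}$, after which Corollary \ref{colthmtensorTT1nw} finishes the argument exactly as you say.
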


\section{Conclusions}\label{conclusection}

We make use of the recently developed algebraic geometry analyses that study the fundamental conditions on the sampling patterns for finite completability under a number of low-rank matrix and tensor models to treat the problem of rank approximation for a partially sampled data. Particularly, the goal is to approximate the unknown scalar or vector rank based on the sampling pattern and the rank of a given completion. A number of data models have been treated, including single-view matrix, multi-view matrix, CP tensor, tensor-train tensor and Tucker tensor. First we have provided an upper bound on the unknown scalar rank (for single-view matrix and CP tensor) and an component-wise upper bound on the vector rank (for multi-view matrix, Tucker tensor and TT tensor) with probability one assuming that the sampling pattern satisfies the proposed combinatorial conditions. Moreover, we have also provided probabilistic versions of such bounds that hold with high probability assuming that the sampling probability is above a threshold. In addition, for single-view matrix and CP tensor, these upper bounds can be exactly equal to the unknown scalar rank given the lowest-rank completion. To illustrate how tight our proposed upper bounds are, we have provided some numerical results for the single-view matrix case in which we applied the nuclear norm minimization to find a low-rank completion of the sampled data and observe that the proposed upper bound is almost equal to the true unknown rank.

\bibliographystyle{IEEETran}
\bibliography{bib}

\end{document}